\documentclass{article}

\usepackage{microtype}
\usepackage{graphicx}
\usepackage{subcaption}
\usepackage{booktabs} 

\usepackage{tikz}
\usetikzlibrary{arrows.meta,positioning,decorations.pathreplacing}

\usepackage{hyperref}

\newcommand{\theHalgorithm}{\arabic{algorithm}}
\newcommand{\LGUPA}{L_{\mathrm{GUPA}}}

\usepackage[accepted]{icml2026}

\usepackage{amsmath}
\usepackage{amssymb}
\usepackage{mathtools}
\usepackage{amsthm}

\usepackage[capitalize,noabbrev]{cleveref}

\theoremstyle{plain}
\newtheorem{theorem}{Theorem}[section]
\newtheorem{proposition}[theorem]{Proposition}
\newtheorem{lemma}[theorem]{Lemma}
\newtheorem{corollary}[theorem]{Corollary}
\theoremstyle{definition}
\newtheorem{definition}[theorem]{Definition}
\newtheorem{assumption}[theorem]{Assumption}
\theoremstyle{remark}
\newtheorem{remark}[theorem]{Remark}
\DeclareMathOperator{\Tr}{Tr}
\usepackage[textsize=tiny]{todonotes}

\icmltitlerunning{A Theory of  Contrastive Learning with Natural Images}

\begin{document}
\def\BE{\begin{equation}}
\def\EE{\end{equation}}
\def\BEA{\begin{eqnarray}}
\def\EEA{\end{eqnarray}}

\twocolumn[
  \icmltitle{A Theory of  Contrastive Learning with Natural Images}



  \icmlsetsymbol{equal}{*}

  \begin{icmlauthorlist}
    \icmlauthor{Antonio Torralba}{yyy}
    \icmlauthor{Yair Weiss}{xxx}
  \end{icmlauthorlist}

  \icmlaffiliation{yyy}{CSAIL, MIT}
  \icmlaffiliation{xxx}{School of Computer Science and Engineering, Hebrew University of Jerusalem}
 
 \icmlcorrespondingauthor{Yair Weiss}{yair.weiss@mail.huji.ac.il}


  \vskip 0.3in
]



\printAffiliationsAndNotice{}  

\begin{abstract}
 Why does contrastive learning with simple images and augmentations yield useful representations for downstream tasks? We address this question by analytically computing the optimal representation in terms of the contrastive loss for a range of basic augmentations and any image dataset with stationary statistics.  We show  that for certain augmentations the optimum can be attained by a CNN whose first layer filters are sinusoids, followed by a pointwise nonlinearity, global average pooling, and a final linear layer that performs partial whitening. We also show that the optimal weights in such CNNs for more complicated augmentations are still sinusoids.   The frequencies of the sinusoids and their weights can be computed using a simple “waterfilling” algorithm given the dataset’s expected power spectrum. Experiments with different image datasets and augmentations show that such CNNs trained with SGD empirically learn sinusoids in their first layer and to perform partial whitening. 
\end{abstract}

Contrastive learning (CL) is a remarkably successful method for learning useful image representations without labeled data. While numerous variants have been suggested, almost all of them follow the recipe suggested by~\cite{simCLR}. For each training image, an augmentation is applied, and the goal of learning is to find a representation  where two augmentations of the same image are close, while augmentations of different images are far away. Although conceptually simple, when applied to large scale datasets, these approaches have paved the way for image representations that can then be used to solve a large number of computer vision tasks without requiring additional representation learning (e.g.~\cite{oquab2023dinov}). 

\begin{figure}
\includegraphics[width=1\columnwidth]{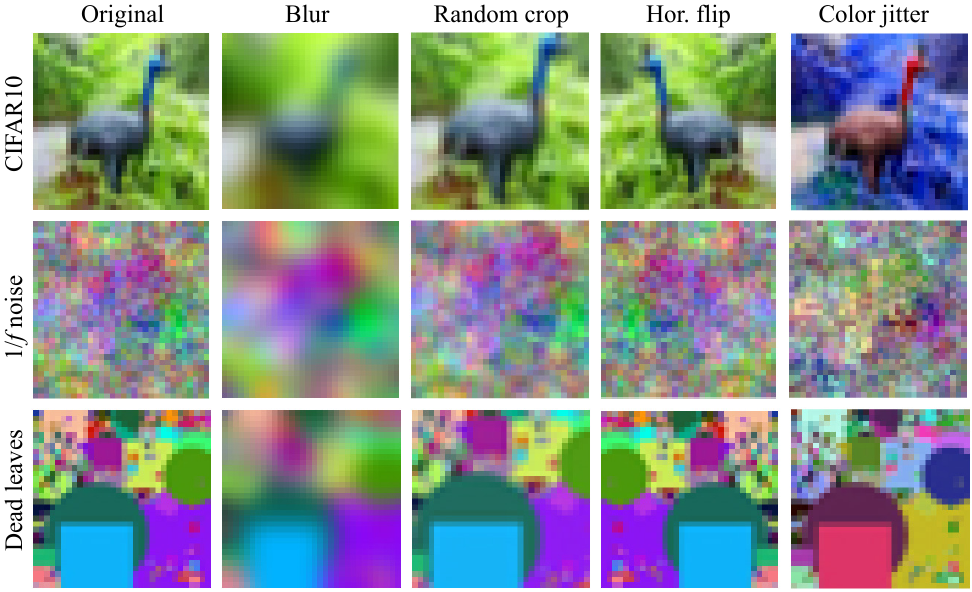} 
\caption[]{Mysteries of contrastive learning. Columns show simple augmentations that are used in CL. A combination of these low-level augmentations yields state-of-the-art recognition performance. Rows show different image datasets: real images (top), fractal noise (middle) and dead leaves (bottom). Even simple augmentations with nonrealistic images yield useful features for real images~\cite{DBLP:journals/corr/abs-2106-05963}.} 
\label{fig1}
\end{figure}

\begin{figure*}
    \centerline{
    \includegraphics[width=\textwidth]{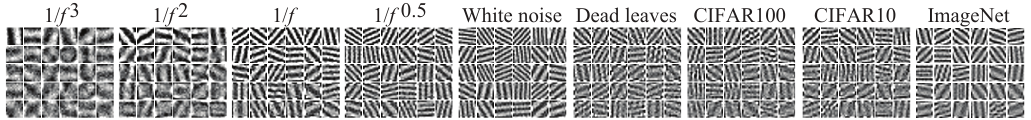}
    }
    \caption[]{Filters that are learned in the first layer of a simple CNN trained with contrastive loss on different datasets (the highest variance filters are displayed). In this paper we prove that whenever the dataset has stationary statistics and for a wide range of augmentations,   the optimal filters are sinusoids.}
    \label{teaser-fig}
\end{figure*}

In this paper, we seek to understand why contrastive learning works so well when it is applied to natural image datasets. A large number of papers have shown that variants of CL are closely related to {\em spectral methods} for unsupervised learning such as Laplacian Eigenmaps~\cite{10.5555/2980539.2980616} and Spectral Clustering~\cite{10.5555/2980539.2980649} (e.g.~\cite{10.5555/3600270.3602204,haochen2021provable,bansal2025understanding}). But as pointed out in~\cite{HaoChen2022ATS,pmlr-v162-saunshi22a} without additional inductive biases, for finite datasets both CL and spectral methods can lead to trivial solutions that are not useful for any downstream tasks. 

Specifically our paper was motivated by two aspects of the success of contrastive learning that we find mysterious: the fact that it works with simple augmentations and the fact that it works with simple images of noise~\cite{DBLP:journals/corr/abs-2106-05963}.

The first motivation for our work is illustrated by the columns of figure~\ref{fig1}  which show the augmentations that are typically used in state-of-the-art CL. These augmentations are extremely low-level. For example, the default configuration of~\cite{simCLR} uses only three augmentations: random crop,  color jitter and Gaussian blur.  These augmentations seem to have nothing to do with object recognition and it is often possible to design handcrafted representations that will be invariant to them  (e.g. converting an image to gray level and applying local contrast normalization  will make it invariant to color jitter and techniques such as~\cite{shifman2024lost,9bafb5a117ce4ba68da2cc83d6b588fe} are provably invariant to certain types of random crops). Even though these handcrafted representations  will be invariant to the augmentations, they are not particularly useful for object recognition  (in fact removing color hurts recognition performance). Nevertheless, CL with these low-level augmentations learns representations where KNN  accuracy approaches the performance of supervised learning. 

A second motivation is illustrated by the rows of figure~\ref{fig1} which show that CL can learn high-quality visual representations from images of noise~\cite{DBLP:journals/corr/abs-2106-05963}.  These include fractal noise (i.e. images with a random phase and a power spectrum that fall off like $1/f^\alpha$), images   generated by StyleGAN~\cite{karras2019style} with random weights, and  images  of "dead leaves": superimposed simple shapes with random sizes and colors. Such images look nothing like real images and surely do not capture properties of 3D viewpoint and illuminations, and yet performing contrastive learning with these images yielded representations that gave remarkably high accuracy for recognition with {\em real images}. 
The authors of~\cite{DBLP:journals/corr/abs-2106-05963} summarized this intriguing effect by speculating that ``vision may be simpler than we thought, and might not require huge data-driven systems
to achieve adequate performance."

How, then, can we understand the success of CL in discovering useful image representations for downstream tasks? In this paper, we address this question by analytically computing the optimal representation in terms of the contrastive loss for a  range of basic augmentations and any image dataset with stationary statistics.  Specifically, we show that:

\begin{itemize}
     \item For a range of simple augmentations and for any image dataset with stationary statistics, the optimal representation in terms of contrastive loss performs "partial whitening": a dimensionality reduction step that measures a subset of the  frequencies, followed by a whitening step that makes the sensitivity to a given frequency inversely proportional to its expected power.    
    \item The optimal representation can be computed by a simple CNN with one hidden layer, a pointwise nonlinearity, followed by global average pooling and a linear projection layer (figure~\ref{fig:architecture}). 

    \item The optimal
weights in such CNNs for more complicated augmentations are still sinusoids  (see figure~\ref{teaser-fig}). The frequencies of
the sinusoids and their weights can be computed
using a simple “waterfilling” algorithm given the
dataset’s expected power spectrum. 

  \end{itemize}

\section{Contrastive Losses and their Optima}

We start by briefly reviewing the InfoNCE loss~\cite{oord2019representationlearningcontrastivepredictive}, a popular method for  contrastive learning. 
We denote by $x$ the input image and by  $y(x)$ the representation of
that image. 
For each image  $x$, we choose a
random 
augmentation $T$ and want to make the representation of $x$ and $Tx$
 close to each other  while pushing
apart the representations of different images. This is done by
minimizing the following loss:
\BE
\label{original-loss-eq}
L = E_{x,T} \left[ - \log \frac{e^{-t \|y(Tx) - y(x)\|^2}} {e^{-t
      \|y(x)-y(Tx)\|^2} + \sum_{x'} e^{-t
      \|y(x)-y(x')\|^2}} \right]
\EE
where $t$ is the inverse temperature parameter and $x'$ are other images in the training set.  We use here a formulation based on the $\ell_2$ distance between representations. When the embeddings are constrained to lie on the unit sphere, the measure of similarity between embeddings can equivalently be computed using the dot product~\cite{simCLR}. For mathematical convenience, we do not constrain the embeddings to lie exactly on the unit sphere (i.e. $\|y\|=1$) but rather constrain the representation so that the average squared $\ell_2$ norm of the centered embeddings is $1$: $E[\|y-\mu_y\|^2]=1$. Note that since the loss in equation~\ref{original-loss-eq} depends only on pairwise Euclidean distance, it is invariant to a translation of the embeddings so ideally the constraint should also be invariant to such a translation.

\citet{pmlr-v119-wang20k} pointed out that  the InfoNCE  loss can be seen as the
sum of two losses: an {\em alignment loss} which measures the average distance between an image and its augmentation and a {\em uniformity loss} which measures the extent to which the representation of all images is uniformly distributed on the unit sphere. 
%


Alternative approaches to CL replace the uniformity loss  by other losses that prevent  trivial collapse of the representation without sacrificing performance (e.g.~\cite{haochen2021provable,10.5555/3600270.3602204,ermolov2021whitening}). In this paper, we use an alternative uniformity loss  which agrees with the InfoNCE uniformity when $y$ is Gaussian and is more  amenable to analysis.

\begin{theorem}

If the representation $y(x)$ is Gaussian-distributed over the training set  and the size of the batch goes to infinity then the InfoNCE loss (equation~\ref{original-loss-eq}) is equal up to an additive constant to the following {\em Gaussian Uniformity Plus Alignment (GUPA)} loss:

\BEA
\label{entropy-loss}
\nonumber
\LGUPA &=& t E  [\|y(T x)- y(x)\|^2] - \frac{1}{2} \log \det( \Sigma_y + \epsilon I )  \\
 && - \frac{1}{2} \Tr \left( \Sigma_y (\Sigma_y  + \epsilon I)^{-1} \right)
\EEA
where $\Sigma_y$ is the covariance matrix of the representation $y$, $\epsilon=\frac{1}{2t}$, and $\Tr(M)$ refers to the trace of a matrix $M$. 
\end{theorem}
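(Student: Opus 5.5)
The plan is to split the InfoNCE loss of equation~\ref{original-loss-eq} into its numerator and denominator. The numerator gives the alignment term directly, and the denominator, as the batch size grows, becomes a Gaussian integral that can be computed in closed form. Writing $N$ for the number of negatives $x'$, the loss is
\[
L = t\,E\big[\|y(Tx)-y(x)\|^2\big] + E_{x,T}\log\Big( e^{-t\|y(x)-y(Tx)\|^2} + \sum_{x'} e^{-t\|y(x)-y(x')\|^2}\Big).
\]
The first term is exactly the alignment term of $\LGUPA$. All remaining work is on the log-denominator.

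For the denominator I will factor out $N$:
\[
\log\big(\cdot\big) = \log N + \log\Big(\tfrac1N e^{-t\|y(x)-y(Tx)\|^2} + \tfrac1N\sum_{x'} e^{-t\|y(x)-y(x')\|^2}\Big).
\]
The positive-pair term is bounded by $1/N$, so it vanishes as $N\to\infty$. By the law of large numbers, the empirical average over $x'$ converges to $g(y(x)) := E_{y'}\, e^{-t\|y(x)-y'\|^2}$, where $y'$ is an independent draw from the representation distribution $\mathcal N(\mu,\Sigma_y)$. Hence $L-\log N \to t\,E\|y(Tx)-y(x)\|^2 + E_y \log g(y)$, and $\log N$ is the discarded additive constant.

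This step is the one that needs care. Exchanging the limit with the outer expectation and the logarithm is fine because every summand lies in $(0,1]$. The only remaining issue is a lower bound on $g$ keeping $\log g$ integrable, and Gaussian tails provide it: $g(y)$ is bounded below by $c\,e^{-C\|y-\mu\|^2}$, which has finite expected log.

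Next I compute $g$ as a Gaussian integral. Completing the square gives
\[
g(y) = \det(I+2t\Sigma_y)^{-1/2}\exp\!\big(-t\,(y-\mu)^\top (I+2t\Sigma_y)^{-1}(y-\mu)\big).
\]
Taking $E_y\log g(y)$ and using $E[(y-\mu)^\top A (y-\mu)] = \Tr(A\Sigma_y)$ yields
\[
-\tfrac12\log\det(I+2t\Sigma_y) - t\,\Tr\big(\Sigma_y(I+2t\Sigma_y)^{-1}\big).
\]

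Finally, I substitute $\epsilon = 1/(2t)$, so that $I+2t\Sigma_y = \epsilon^{-1}(\Sigma_y+\epsilon I)$. The log-determinant becomes $\log\det(\Sigma_y+\epsilon I) - d\log\epsilon$, where the second piece is another constant. The trace term becomes $t\epsilon\,\Tr\big(\Sigma_y(\Sigma_y+\epsilon I)^{-1}\big) = \tfrac12\Tr\big(\Sigma_y(\Sigma_y+\epsilon I)^{-1}\big)$. Collecting terms reproduces $\LGUPA$ up to the additive constant $\log N + \tfrac d2\log\epsilon$.

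The Gaussian algebra is routine. The main obstacle is making the large-batch limit precise: stating that "equal up to a constant" means $L-\log N$ converges to $\LGUPA + \tfrac d2\log\epsilon$, and justifying the exchange of limit and expectation by the dominated-convergence bound above.
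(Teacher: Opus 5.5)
Your proof is correct and follows essentially the same route as the paper. Your $g(y)$ is, up to the normalizer $Z$, the paper's Parzen-window density $q$ (the representation density convolved with a Gaussian of covariance $\epsilon I$), so your $E_y \log g(y)$ equals the paper's $\log Z - \mathrm{crossentropy}(p,q)$. The paper evaluates that quantity with the closed-form Gaussian cross-entropy rather than by completing the square, and it cites Wang and Isola for the large-batch limit (including dropping the positive pair), which you derive yourself.

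One small repair is needed in that extra step. A lower bound on the limit $g$ does not dominate the finite-batch averages. Instead use concavity of the logarithm:
\[
\log\big(\tfrac{1}{N}\textstyle\sum_{x'} e^{-t\|y(x)-y(x')\|^2}\big) \geq -\tfrac{t}{N}\textstyle\sum_{x'}\|y(x)-y(x')\|^2 .
\]
The right-hand side converges almost surely and has expectation independent of $N$, so the generalized dominated convergence theorem justifies the exchange of limit and expectation.
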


\begin{proof}

The proof is based on the fact that the InfoNCE uniformity loss converges to a cross-entropy between two random variables and when these variables are Gaussian, the cross-entropy can be written using traces and determinants of the covariance matrices. See appendix (section~\ref{approximate-loss-sec}) for full proof.
\end{proof}
Previous work has reported that representations learned by CL  are approximately Gaussian~\cite{betser2026infonce} and we have found 
that training with the loss $\LGUPA$ gives almost identical results as training with the InfoNCE loss (appendix section~\ref{approximate-loss-sec}). 
  Similar results have been previously reported with other alternatives to InfoNCE (e.g.~\cite{ermolov2021whitening,haochen2021provable}).

Similar to the InfoNCE loss (equation~\ref{original-loss-eq}), $\LGUPA$ (equation~\ref{entropy-loss}) can be written as the sum of an alignment loss and a (Gaussian) uniformity loss. The Gaussian uniformity loss encourages representations whose covariance matrix is white  (see section~\ref{whitening-sec} in appendix for the proof):

\begin{theorem}
\label{thm:white-is-optimal}
Consider the Gaussian uniformity loss:
\BE
\label{Luniformity2-eq}
L_{GU}= - \frac{1}{2} \log \det (\Sigma_y+  \epsilon I)  -\frac{1}{2} \Tr( (\epsilon I + \Sigma_y)^{-1}
\Sigma_y).
\EE

 The minimum of $L_{GU}$  subject to the  constraint that  $E(\|y(x)-\mu_y\|^2)=1$ is when the representation is whitened, i.e. $y$ has the same variance in all directions.
 \end{theorem}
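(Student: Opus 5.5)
Since both $\log\det$ and the trace term are spectral functions, I would reduce $L_{GU}$ to a function of the eigenvalues $\lambda_1,\dots,\lambda_d\ge 0$ of $\Sigma_y$. Diagonalizing $\Sigma_y = U\Lambda U^\top$ gives
\[
L_{GU} = \sum_{i=1}^d f(\lambda_i), \qquad f(\lambda) = -\tfrac{1}{2}\log(\lambda+\epsilon) - \tfrac{1}{2}\,\frac{\lambda}{\lambda+\epsilon}.
\]
The constraint $E(\|y(x)-\mu_y\|^2)=1$ is exactly $\Tr(\Sigma_y)=\sum_i \lambda_i = 1$. The problem therefore becomes minimizing a separable sum $\sum_i f(\lambda_i)$ over the simplex $\{\lambda_i\ge 0,\ \sum_i\lambda_i=1\}$.

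The key step is to show that $f$ is strictly convex on $[0,\infty)$, because then Jensen's inequality gives
\[
\sum_i f(\lambda_i) \ge d\, f\!\left(\tfrac{1}{d}\sum_i\lambda_i\right) = d\, f(1/d),
\]
with equality iff all $\lambda_i$ are equal. That equality case means $\Sigma_y=\frac{1}{d}I$, i.e.\ the representation is whitened with the same variance in every direction.

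To verify convexity, write $\frac{\lambda}{\lambda+\epsilon}=1-\frac{\epsilon}{\lambda+\epsilon}$. Then
\[
f(\lambda) = -\tfrac12\log(\lambda+\epsilon) + \tfrac12\,\frac{\epsilon}{\lambda+\epsilon} - \tfrac12 .
\]
Differentiating twice,
\[
f''(\lambda) = \frac{1}{2(\lambda+\epsilon)^2} + \frac{\epsilon}{(\lambda+\epsilon)^3} > 0 .
\]
So $f$ is strictly convex for all $\lambda\ge 0$, and in fact for all $\lambda>-\epsilon$.

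The main place where care is needed is not the calculus but the meaning of "whitened" when $\Sigma_y$ is constrained to a given output dimension $d$. The argument above shows the optimum is isotropic in the full $d$-dimensional output space. One could instead allow the representation to have rank $k<d$ and compare across $k$; the minimum over the simplex already covers this, since zero eigenvalues are allowed and are strictly worse by convexity. Also $f$ is decreasing ($f'(\lambda)=-\frac{\lambda}{2(\lambda+\epsilon)^2}\le 0$), so adding more nonzero directions helps, which is consistent with spreading variance over all $d$ dimensions. I would remark on this, and note that it is the alignment term in $\LGUPA$ that will later trade off against this and yield only partial whitening.
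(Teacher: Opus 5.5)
Your proof is correct. The first step is the same as the paper's: you rewrite $L_{GU}$ and the constraint in terms of the eigenvalues of $\Sigma_y$, so the constraint becomes $\sum_i \sigma_i^2 = 1$. Where you differ is the last step.

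The paper adds a Lagrange multiplier and asserts that stationarity forces all $\sigma_i$ to be equal. As written, that is only a necessary condition at critical points, and it leaves two things implicit:
\begin{itemize}
\item To go from $f'(\sigma_i^2) = -\gamma$ for all $i$ to equal variances, one needs $f'$ to be injective. That is exactly the strict convexity you prove.
\item If the variables are the $\sigma_i$ themselves, the stationarity equation $2\sigma_i\bigl(f'(\sigma_i^2)+\gamma\bigr)=0$ is also satisfied by $\sigma_i=0$. So rank-deficient covariances have to be excluded separately, e.g.\ via KKT conditions with a strictly increasing $f'$, or by comparing values.
\end{itemize}
Your Jensen step gives global optimality, uniqueness, and the exclusion of zero eigenvalues all at once, so it is the more complete route. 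The Lagrangian formulation mainly pays off because the paper reuses the same machinery for the optimal last-layer weights theorem.

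One small slip: $f'(\lambda) = -\frac{\lambda+2\epsilon}{2(\lambda+\epsilon)^2}$, not $-\frac{\lambda}{2(\lambda+\epsilon)^2}$. Your expression would differentiate to $\frac{\lambda-\epsilon}{2(\lambda+\epsilon)^3}$, which contradicts your (correct) second derivative. The conclusion that $f$ is decreasing still holds. That remark is not needed anyway: under a fixed trace, it is convexity rather than monotonicity that spreads the variance evenly.
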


 
\subsection{Optimal Weights}
Having simplified the loss, we show how to find the globally optimal weights when training only the last layer of a neural network. 
We assume that $y(x)=W^T \phi(x)$ where $\phi(x)$ are the neurons in the penultimate layer of a neural network and we assume that $\phi(x)$ is fixed.  Although this scenario is quite far from the use of CL in practice (in fact the whole point of contrastive learning is to learn the representation $\phi$), it is easiest to analyze and as we will show, the analysis for this scenario can be applied to more realistic settings.

\begin{theorem}
\label{main-theorem}
    Assume that $y(x)=W^T \phi(x)$ then 
    the optimal weights $W^*$ in terms of $\LGUPA$ can be found by the following algorithm.
Define:
\[
\Sigma=cov \left[\phi(x) \right]
\]
\[
B= E \left[\delta(x) \delta(x)^T \right], \delta(x)=\phi(x)-\phi(Tx)
\]
The columns of $W^*$ are generalized eigenvectors of $(B,\Sigma)$ of minimal eigenvalue that have been rescaled to satisfy the norm constraint using  waterfilling  (algorithm~\ref{alg:waterfill}).  

\end{theorem}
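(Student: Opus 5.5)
With $y(x)=W^T\phi(x)$ the loss becomes explicit in $W$. The alignment term is $tE\|W^T\delta(x)\|^2=t\Tr(W^TBW)$, and the covariance of the representation is $\Sigma_y=W^T\Sigma W$. The constraint $E\|y-\mu_y\|^2=1$ reads $\Tr(W^T\Sigma W)=1$. So we must minimize
\[
t\Tr(W^TBW)-\tfrac12\log\det(W^T\Sigma W+\epsilon I)-\tfrac12\Tr\left(W^T\Sigma W(W^T\Sigma W+\epsilon I)^{-1}\right)
\]
subject to $\Tr(W^T\Sigma W)=1$.

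The first step uses the fact that the loss is invariant to $W\mapsto WQ$ for orthogonal $Q$, since $\Sigma_y\mapsto Q^T\Sigma_yQ$ and the alignment term is unchanged. I will therefore change variables to $V=\Sigma^{1/2}W$, assuming $\Sigma$ is invertible; otherwise I restrict to its range. Let $\tilde B=\Sigma^{-1/2}B\Sigma^{-1/2}$. The problem becomes minimizing
\[
t\Tr(V^T\tilde BV)+g(V^TV),\qquad g(S)=\sum_i h(s_i),\qquad h(s)=-\tfrac12\log(s+\epsilon)-\tfrac12\,\frac{s}{s+\epsilon},
\]
subject to $\Tr(V^TV)=1$, where the $s_i$ are the eigenvalues of $S=V^TV$. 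Using the rotation freedom, I choose $Q$ so that $V^TV=\mathrm{diag}(s_1,\dots,s_k)$, i.e. $V$ has orthogonal columns $v_i$ with $\|v_i\|^2=s_i$.

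The main step is to show that at the optimum the normalized columns $v_i/\|v_i\|$ are eigenvectors of $\tilde B$ with the smallest eigenvalues. For fixed singular values $s_i$, the term $g$ is constant. The alignment term $\Tr(V^T\tilde BV)$ over $V=U\,\mathrm{diag}(\sqrt{s_i})$, with $U$ having orthonormal columns, is minimized by von Neumann's (or Ky Fan's) trace inequality: take the $u_i$ to be the bottom eigenvectors $\lambda_1\le\lambda_2\le\dots$ of $\tilde B$, pairing the largest $s_i$ with the smallest $\lambda_i$. Transforming back, $w_i=\Sigma^{-1/2}u_i$ satisfies $Bw_i=\lambda_i\Sigma w_i$, so the columns of $W^*$ are generalized eigenvectors of $(B,\Sigma)$ of minimal eigenvalue.

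It then remains to solve the scalar problem
\[
\min_{s\ge0}\ \sum_i\left(t\lambda_i s_i+h(s_i)\right)\quad\text{subject to}\quad\sum_i s_i=1.
\]
This is where waterfilling appears. I would compute
\[
h'(s)=-\tfrac12\frac{1}{s+\epsilon}-\tfrac12\frac{\epsilon}{(s+\epsilon)^2},
\]
which is increasing in $s$, so $h$ is convex and the problem is a convex separable program. The KKT conditions give $t\lambda_i+h'(s_i)=\nu$ whenever $s_i>0$, and $t\lambda_i+h'(0)\ge\nu$ whenever $s_i=0$. Hence each $s_i$ is a monotone function of the single "water level" $\nu$ minus $t\lambda_i$, cut off at zero. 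The multiplier $\nu$ is fixed by the constraint $\sum_i s_i=1$, which is exactly algorithm~\ref{alg:waterfill}. This also justifies assigning positive mass only to the smallest $\lambda_i$, which is consistent with the pairing in the trace inequality.

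I expect the main obstacle to be the joint optimization over directions and magnitudes, since the trace inequality handles directions only for fixed $s$. It is clean because $g$ depends only on the spectrum of $V^TV$, but care is needed with degenerate eigenvalues and with a singular $\Sigma$. A second point to check is whether the theorem's "rescaled" means $w_i$ normalized so that $w_i^T\Sigma w_i=s_i$; that is what this derivation produces.
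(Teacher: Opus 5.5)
Your proof is correct, but it takes a different route from the paper's.

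The paper argues through first-order conditions. It adjoins $\gamma(\Tr(W^T\Sigma W)-1)$, sets the gradient to zero, and rotates by the eigenvectors of $W^T\Sigma W+\epsilon I$. From this it reads off that the columns of the rotated $W$ satisfy a generalized eigen-equation for $(B,\Sigma)$. It then substitutes $W=V\,\mathrm{diag}(\alpha)$ with $V^T\Sigma V=I$ and appeals to convexity in $P_k=\alpha_k^2$ and waterfilling.

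You instead whiten by $\Sigma^{1/2}$ and split the variable into spectrum and directions. You use the von Neumann/Ky Fan trace inequality to find the globally optimal directions for each fixed spectrum, and finish with KKT for a convex separable program.

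The paper's stationarity route is shorter and describes every critical point, not just the minimizers. However, the first-order condition is met by suitably scaled columns drawn from any set of generalized eigenvectors. So it does not by itself justify the ``minimal eigenvalue'' selection or the cap of at most $K$ active channels; the paper leaves both implicit in the waterfilling. Your trace inequality, together with the rearrangement observation, supplies both explicitly, so yours is the more complete global-optimality argument.

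Two small confirmations. First, your reading of ``rescaled'' ($w_i^T\Sigma w_i=s_i$) is exactly the paper's $P_k$. Second, multiplying your scalar objective by $2\epsilon=1/t$ gives the marginal cost $\lambda_i-\frac{\epsilon}{s_i+\epsilon}-\bigl(\frac{\epsilon}{s_i+\epsilon}\bigr)^2$, which is the quantity used in Algorithm~\ref{alg:waterfill}. The paper's displayed reduced loss instead carries $\epsilon^2$ rather than $\epsilon$ on the $\sum_k P_k/(P_k+\epsilon)$ term, which appears to be a slip.

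To connect your KKT water level to that greedy algorithm, note that each marginal is increasing in $s_i$ with bounded slope. Hence the $\eta$-increments equalize the marginals on the active set up to $O(\eta)$, i.e.\ they approach your KKT point as $\eta\to 0$.
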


\begin{proof} We rewrite the loss $\LGUPA$ (equation~\ref{entropy-loss}) as a function of $W$. The alignment term becomes $\Tr(W^T B W)$ 
 and the covariance of $y$
 becomes $W^T \Sigma W$. 
Using Lagrange multipliers, it can be shown that at the optimum $W$  
 must satisfy $B W \Lambda = \Sigma W$
, where $\Lambda$ is a diagonal matrix, hence the columns of $W^*$ 
 are generalized eigenvectors of $(B,\Sigma)$
. This means that the optimal $W$ must satisfy $W^*=V diag(\alpha)$ 
 where $V$
 is the matrix of generalized eigenvectors of $(B,\Sigma)$
 and $\alpha$
 a scaling of the columns. Substituting this form of $W^*$
 into equation~\ref{entropy-loss} gives a loss that only depends on $P_k=\alpha_k^2$:

\begin{algorithm}[tb]
  \caption{Simple Waterfilling algorithm for finding the optimal rescaling of generalized eigenvectors}
  \label{alg:waterfill}
  \begin{algorithmic}
    \STATE {\bfseries Input:} (1) $\{\lambda_k\}$, Generalized eigenvalues of $(B,\Sigma)$ , (2)  inverse temperature $\epsilon=\frac{1}{2t}$, (3)  step size $\eta$
    \STATE Initialize $P_k=0$.
    \FOR{$i=1$ {\bfseries to} $\frac{1}{\eta}$}
    \STATE Find $k^*=\arg\min_k \lambda_k - \frac{\epsilon}{P_k+\epsilon} -  \left(\frac{\epsilon}{P_k+\epsilon} \right)^2$
    \STATE Set $P_{k^*}=P_{k^*}+\eta$
    \ENDFOR
  \end{algorithmic}
\end{algorithm}

\BE
\label{loss-alpha-paper}
L(\{P_k\})= \sum_k P_k \lambda_k - \epsilon \sum_k \log (P_k+\epsilon) - \epsilon^2 \sum_k \frac{P_k}{P_k+\epsilon}
\EE

This is analogous to power allocation in communication systems. Each channel can only get a positive amount of power allocated ($P_k \geq 0$) and since $E(\|y-\mu_y\|^2)=1$ the total power allocation is constrained $\sum_k P_k \leq 1$. The benefit of adding power to a particular channel decreases as a function of $P_k$ and is strictly convex (the second derivative of equation~\ref{loss-alpha-paper} is always positive). This means that the globally optimal solution for $P_k$ can be found using a variety of ``Waterfilling'' algorithms~\cite{8995606}, including algorithm~\ref{alg:waterfill} (see appendix section~\ref{optimal-weights-sec} for detailed proof)
\end{proof}

A direct consequence of theorem~\ref{main-theorem} and algorithm~\ref{alg:waterfill} is that the optimal projection matrix $W^*$ will be low-rank: typically the algorithm will terminate when only a subset of the channels have been allocated nonzero power. This has indeed  been previously observed with CL~\cite{10.5555/3737916.3740944}. 

Theorem~\ref{main-theorem} is similar to derivations that have appeared in previous papers about contrastive losses (e.g.~\cite{10.5555/3600270.3602204}): once the uniformity loss is simplified, the loss with respect to linear networks can be optimized using spectral methods. However, such theorems by themselves do not tell us what the networks will learn specifically in the case of natural images. We now show how to use the statistics of natural images to predict the optimal weights. 
\section{Natural Image Statistics}
\begin{figure*}
  \centerline{
  \includegraphics[width=2\columnwidth]{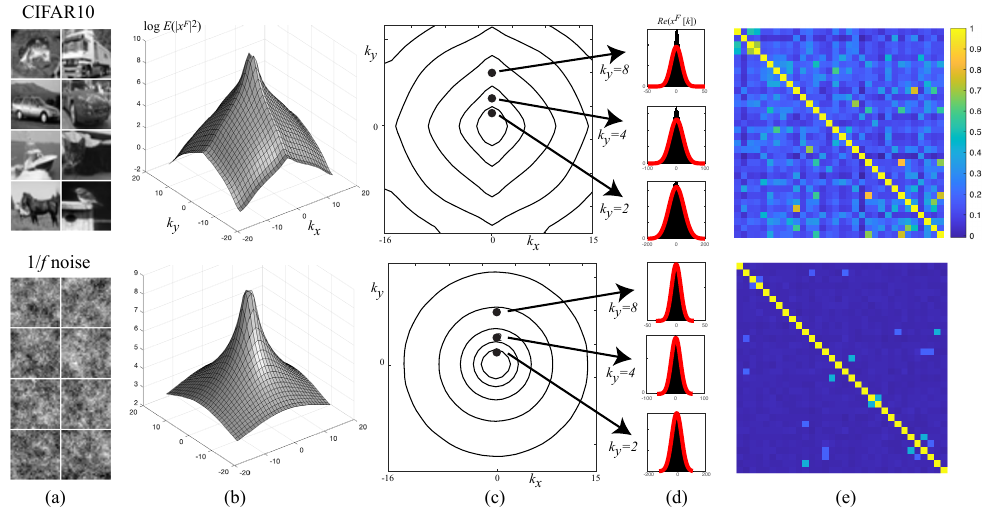}
  }
  \caption[]{For any stationary signal, the DFT coefficients are Gaussian and pairwise independent as the size of the image goes to infinity and the variance of the power at a particular frequency can be computed from its expectation. Even though CIFAR10 images (top) are very different from fractal images (bottom), both datasets lead to approximately Gaussian DFT coefficients. 
  }
  \label{cifar-dft-fig}
 \end{figure*}
 
Perhaps the simplest property that we expect large and varied image datasets to satisfy is that the statistics should be {\em translation invariant}. As written in~\cite{b92c4f9b37974da5b5ef77fedb46c15e} ``After all, the covariance of two neighbouring pixels is not
likely to be any different depending on whether they are on the left or the right side
of the image''.



A second prominent property of natural images is that their {\em expected power spectrum decreases with frequency}~\cite{Field:87,Burton:87}. The top left plot of figure~\ref{cifar-dft-fig} shows the expected power spectrum for CIFAR10. The expected power spectrum is high at the center of the plot (corresponding to low frequencies) and decreases with distance to the origin (high spatial frequencies).

 The property of translation invariance  which is often referred to  as {\em stationarity} leads to highly predictable properties of the Discrete Fourier Transform (DFT) of  image datasets. To formalize these properties, we assume that the ``image'' is one dimensional and denote it by  $x[n]$, while $x^F[k]$ is the Discrete Fourier Transform of $x$.  

\begin{theorem} \cite{9a5442d1-edb0-3118-8275-32293235fa76}
For any stationary signal that satisfies certain regularity conditions  as $N \rightarrow \infty$ and for almost all $k$:
\begin{itemize}
\item The distribution of $x^F[k]$ is a complex Gaussian. That is:
  \BE
  \left( Re(x^F[k]), Im(x^F[k]) \right) \sim N(0, \frac{g(k)}{2} I)
  \EE
  where $g(k)$ is the expected power of frequency $k$: $g(k)=E [|x^F[k]|^2]$. 
\item The DFT coefficients are asymptotically pairwise independent i.e. $x^F[k]$ is independent of $x^F[j]$ for any $k \neq j$.
\item The variance of $|x^F[k]|^2$ is equal to the expectation squared:
  $Var(|x^F[k]|^2)=g^2(k)$. 
\end{itemize}
    \label{gaussian-dft-theorem}
\end{theorem}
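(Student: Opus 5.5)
I will prove the three claims in order: Gaussianity, pairwise independence, then the variance of the power. Write $x^F[k]=\sum_{n=0}^{N-1} x[n]e^{-2\pi i kn/N}$. I take the regularity conditions to be those of the classical asymptotic theory of the periodogram: $x$ is zero-mean, has absolutely summable autocovariance $c[m]$, and is strongly mixing, or more simply has absolutely summable cumulants of all orders. In that setting the expected power is $g(k)=E|x^F[k]|^2 = \sum_{|m|<N}(N-|m|)c[m]e^{-2\pi i km/N}$. After normalizing by $N$, this converges to the spectral density $f(\omega_k)=\sum_m c[m]e^{-i\omega_k m}$ at $\omega_k=2\pi k/N$. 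The words ``almost all $k$'' let me exclude $k=0$ and $k=N/2$, where $x^F[k]$ is real. They also cover any zeros of $f$.

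\textbf{Second-order structure.} First I compute the covariances of the real and imaginary parts. This is a direct calculation using $E[x^F[k]\overline{x^F[j]}]$ and $E[x^F[k]x^F[j]]$. Stationarity turns each double sum into a single sum over lags $m$, weighted by a sum of $N-|m|$ phase factors. When $k\neq \pm j$, that phase sum is $O(|m|)$ rather than $O(N)$. Summability of $c$ then makes the normalized cross terms vanish. The key case is $j=-k$, which gives $E[(x^F[k])^2]=o(N)$. This pseudo-covariance vanishing is what forces the real and imaginary parts to be uncorrelated with equal variances $g(k)/2$. So after normalization the covariance of $(\mathrm{Re}\,x^F[k],\mathrm{Im}\,x^F[k])$ tends to $\frac{g(k)}{2}I$, and distinct frequencies become asymptotically uncorrelated.

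\textbf{Gaussianity and independence.} Next I apply a central limit theorem to finitely many fixed linear combinations of the real and imaginary parts of $x^F[k]$ and $x^F[j]$; this is the Cram\'er--Wold device. Each such combination is $\sum_n a_n x[n]$ with bounded deterministic weights, so a CLT for weighted sums of a mixing stationary sequence applies. The cleanest route is to show that all joint cumulants of order $r\ge 3$ of the normalized coefficients are $O(N^{1-r/2})\to 0$. That follows by bounding the $r$-fold sum using cumulant summability, since only one free index remains and it contributes a factor $N$. Once the cumulants vanish, the limit is jointly Gaussian with the covariance found above. Jointly Gaussian variables that are uncorrelated are independent, which gives the first two bullets. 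I expect this cumulant bound to be the main technical obstacle. It is also where the regularity conditions are really used, and I would follow Brillinger's treatment for the details.

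\textbf{Variance of the power.} With $(\mathrm{Re},\mathrm{Im})\sim N(0,\frac{g}{2}I)$, the quantity $|x^F[k]|^2$ is $\frac{g}{2}\chi^2_2$, which is an exponential random variable with mean $g$. Its variance is therefore $g^2$. The same conclusion can be checked without the limit distribution. The fourth-moment identity $E|X|^4 = 2(E|X|^2)^2+|E X^2|^2+\kappa_4$, together with $EX^2\to0$ from the second-order step and $\kappa_4\to0$ from the cumulant step, gives $\mathrm{Var}(|x^F[k]|^2)=g^2(k)(1+o(1))$. This check is needed because convergence in distribution alone does not imply convergence of moments.
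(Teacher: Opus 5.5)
Your proposal is correct and follows essentially the paper's route: the DFT asymptotically diagonalizes the Toeplitz covariance, a CLT handles the weighted sums, and ``jointly Gaussian and uncorrelated implies independent,'' with the details deferred to the classical periodogram theory; you also supply what the paper's sketch leaves implicit, namely the vanishing pseudo-covariance needed for the $\frac{g(k)}{2}I$ form, the cumulant bound, and a moment-level justification of $\mathrm{Var}(|x^F[k]|^2)=g^2(k)$. Do state explicitly that independence can only hold for $j\not\equiv \pm k \pmod N$, since $x^F[N-k]=\overline{x^F[k]}$ for a real signal; your $k\neq\pm j$ computation already respects this, but the statement's ``for any $k\neq j$'' does not.
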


\begin{proof}
The Gaussianity follows from the central limit
theorem. Intuitively, the real and imaginary parts of $x^F[k]$ can
each be seen as a weighted average of the values of $x[n]$. Since $x$
is stationary, all $x[n]$ have the same mean and variance. Even though
they are not independent, the average converges to a Gaussian
distribution. The proof that coefficients are pairwise independent
follows from the fact that for any stationary signal, the covariance
matrix is Toeplitz so that asymptotically, the columns of the DFT
matrix  are eigenvectors of the covariance matrix and this implies that  Fourier
coefficients are uncorrelated. Combining this with the Gaussianity,
means that they are also pairwise independent.  
See~\cite{9a5442d1-edb0-3118-8275-32293235fa76} for a complete proof and a precise statement of the regularity conditions.  
\end{proof}

\begin{figure}
\begin{tabular}{cccc}
  Original & Circular Crop &  Linear Jitter &  Ideal Blur \\
\includegraphics[width=0.1\textwidth]{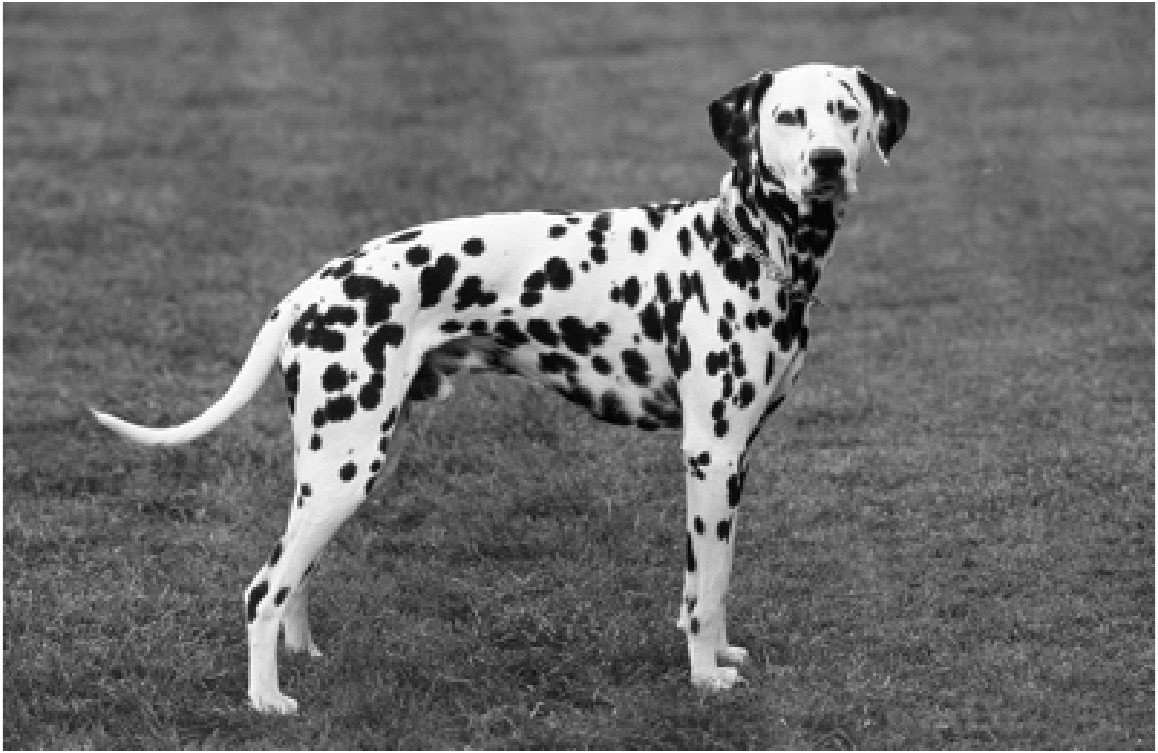} &
\includegraphics[width=0.1\textwidth]{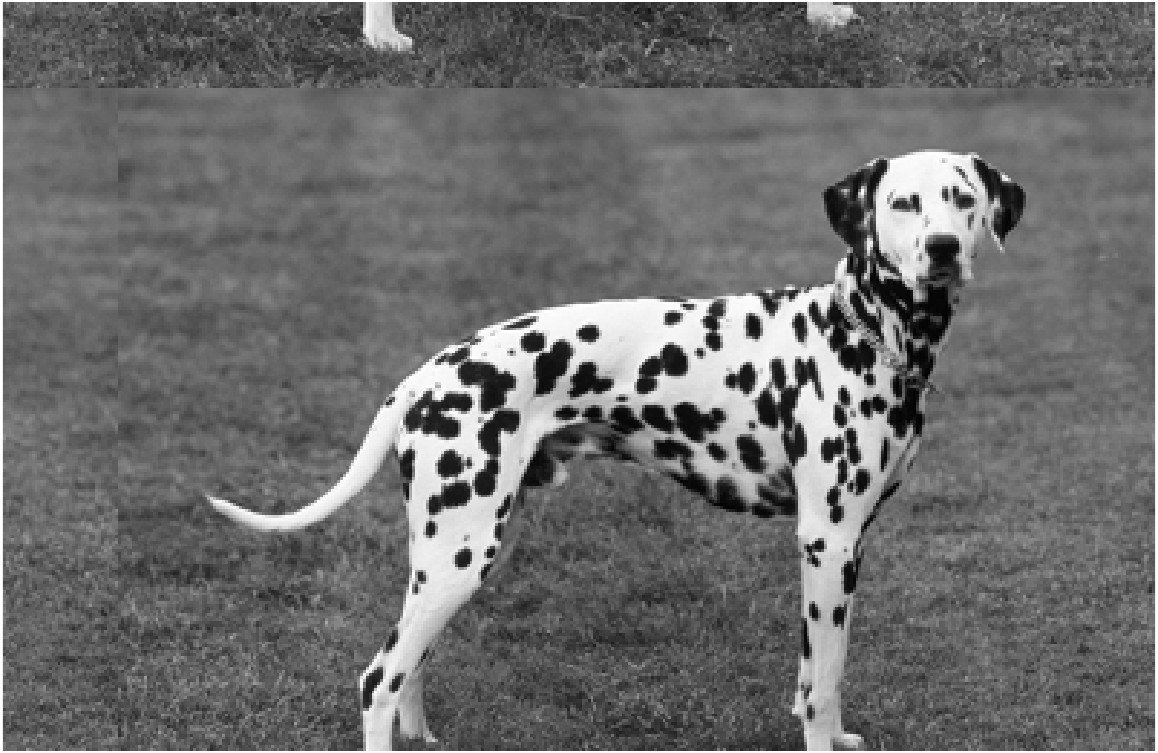}  &
\includegraphics[width=0.1\textwidth]{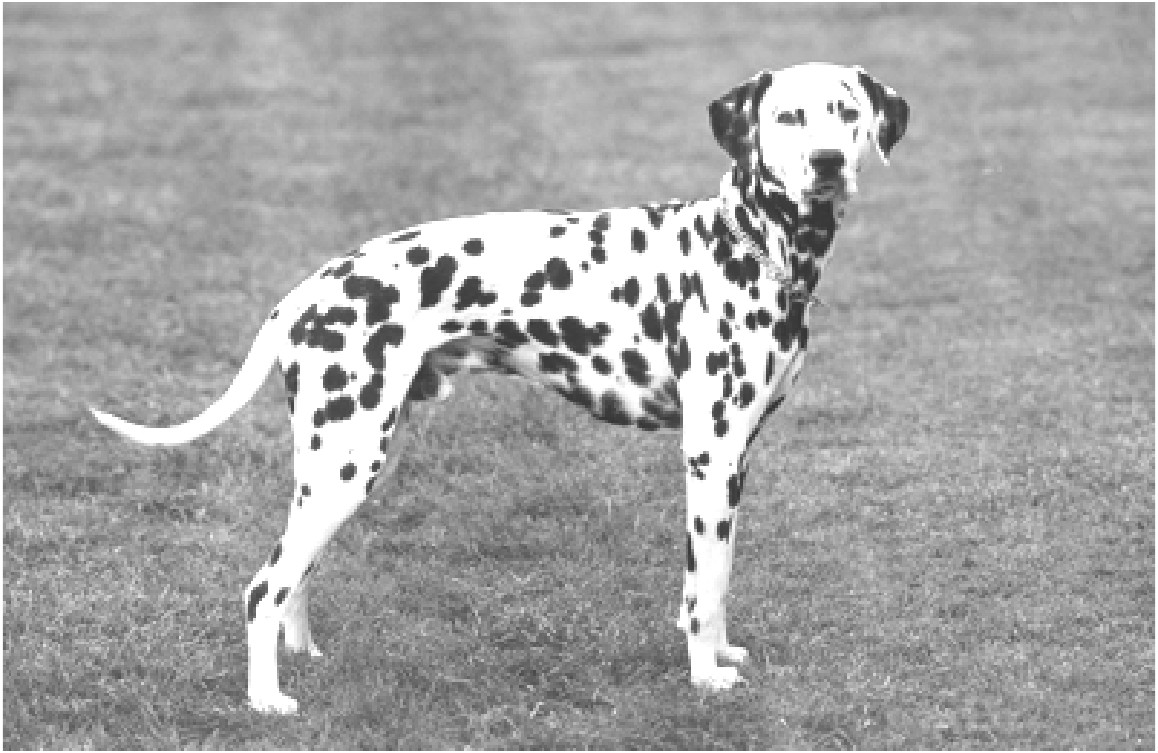} &
\includegraphics[width=0.1\textwidth]{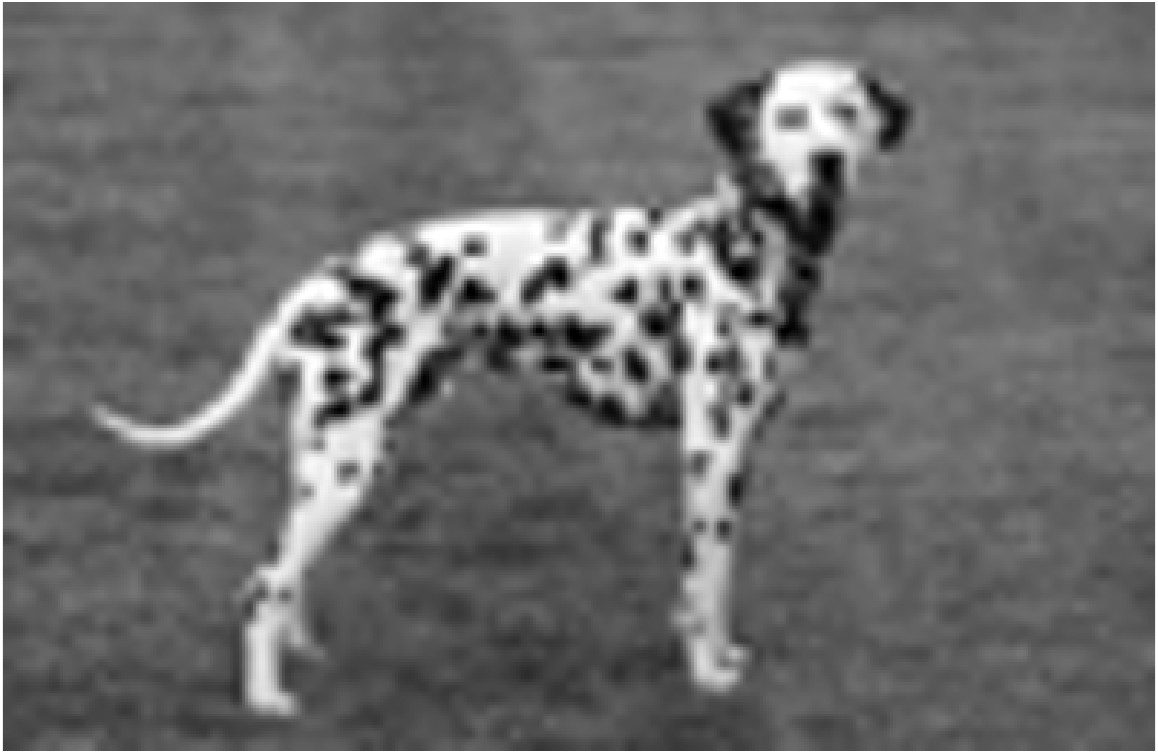} 
\end{tabular} 
\caption[]{A set of augmentations for which measuring squared DFT coefficients yields the global optimum of the $\LGUPA$ contrastive loss for any stationary signal.}
\label{fig:easy-augs}
\end{figure}


\begin{figure}[t]
\centering
\resizebox{\linewidth}{!}{%
\begin{tikzpicture}[
    >=Latex,
    font=\small,
    node distance=1.3cm,
    box/.style={draw, rounded corners=2pt, minimum width=1.45cm, minimum height=0.75cm, align=center, thick},
    plain/.style={minimum width=0.75cm, minimum height=0.75cm, align=center},
    arrow/.style={->, thick},
    brace/.style={decorate, decoration={brace, amplitude=4pt, mirror}, thick}
]

\node[plain] (input) at (0,0) {$x$\\[-0mm] \scriptsize input image};

\node[box] (conv) at (2.0,0) {Conv.\\[-0mm] \scriptsize filters $h_c$};

\node[box] (nonlin) at (4.0,0) {$\rho$\\[-0mm] \scriptsize ReLU / square};

\node[box] (gap) at (6.0,0) {GAP\\[-0mm] \scriptsize per channel};

\node[plain] (phi) at (7.7,0) {$\phi(x)$};

\node[box] (linear) at (9.3,0) {Linear\\[-0mm] \scriptsize $W^T\phi(x)$};

\node[plain] (yraw) at (11.0,0) {$y(x)$};

\node[box] (norm) at (12.75,0) {Normalize\\[-0mm] \scriptsize to unit sphere};

\node[plain] (ynorm) at (14.55,0) {$\frac{y(x)}{\|y(x)\|}$};

\draw[arrow] (input) -- (conv);
\draw[arrow] (conv) -- (nonlin);
\draw[arrow] (nonlin) -- (gap);
\draw[arrow] (gap) -- (phi);
\draw[arrow] (phi) -- (linear);
\draw[arrow] (linear) -- (yraw);
\draw[arrow] (yraw) -- (norm);
\draw[arrow] (norm) -- (ynorm);

\draw[brace] (1.2,-0.78) -- (7.0,-0.78)
    node[midway, below=6pt] {Encoder};

\draw[brace] (8.25,-0.78) -- (10.5,-0.78)
    node[midway, below=6pt] {Projection head};

\draw[brace] (11.75,-0.78) -- (14.25,-0.78)
    node[midway, below=6pt] {Normalization};

\end{tikzpicture}%
}
\caption{A simple CNN architecture that can compute the optimal representation in terms of CL for the augmentations in figure~\ref{fig:easy-augs}.}
    \label{fig:architecture}
\end{figure}

Although theorem~\ref{gaussian-dft-theorem} only holds asymptotically (as the size of the image goes to infinity), we have found that it holds approximately  for real image datasets. Figure~\ref{cifar-dft-fig} shows the histogram of particular DFT coefficients over all $50,000$ images in the CIFAR10 dataset. For most coefficients, the histogram is indeed highly Gaussian and this implies that most pairs of coefficients are independent. Figure~\ref{cifar-dft-fig} shows the covariance matrix between 32 randomly chosen squared DFT coefficients divided by their expectation. Indeed the covariance matrix is close to diagonal as predicted by the theorem (see also~\cite{krogstad} for a direct proof). The bottom of figure~\ref{cifar-dft-fig} shows the same plots for fractal noise. Even though CIFAR10 images and fractal noise images are very different, the covariance of their squared DFT coefficients are very similar due to the property of stationarity. 
We now show how this allows us to predict optimal representations.

\section{Optimal Representations for Contrastive Learning with Natural Images}

Consider the augmentations shown in figure~\ref{fig:easy-augs}. These augmentations are similar (but not identical) to the default augmentations used in~\cite{simCLR}. The first augmentation is a random crop with circular boundary conditions: we take a random crop from the periodic extension of the original image. The second augmentation is similar to color jitter: it randomly jitters the brightness and the contrast of the image. The last augmentation applies an ideal low-pass filter to the image (the blur kernel zeros out all frequencies above some cut-off frequency and leaves all other frequencies unchanged).

For these augmentations, we show how to define a representation that is {\em globally optimal} in terms of the contrastive loss $\LGUPA$. 


\begin{theorem} For the set of augmentations shown in figure~\ref{fig:easy-augs} and for any stationary signal that satisfies the regularity conditions of theorem~\ref{gaussian-dft-theorem}, a globally optimal representation of dimension $K$ in terms of the contrastive loss $\LGUPA$(equation~\ref{entropy-loss}) is one that measures the squared DFT of the input image for $K$ different non-DC frequencies $\{k_i\}_{i=1}^K$ below the cut-off frequency,  scaled inversely to the expected variance:
\begin{equation}
\label{partial-whitening-eq}
    y^*_i(x) = \frac{1}{g(k_i)} |x^F[k_i]|^2
\end{equation}
and the vector $y^*(x)$ is then rescaled to have unit norm. 
\label{cyclic-theorem}
\end{theorem}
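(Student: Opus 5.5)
The argument lower-bounds $\LGUPA$ for any representation $y$ of dimension $K$ with $E\|y-\mu_y\|^2=1$, and then shows that the representation in (\ref{partial-whitening-eq}) attains the bound. The loss splits into the alignment term $tE\|y(Tx)-y(x)\|^2\ge 0$ and the Gaussian uniformity term $L_{GU}$ of (\ref{Luniformity2-eq}). Theorem~\ref{thm:white-is-optimal} says $L_{GU}$, under the norm constraint, is minimized exactly when $\Sigma_y=\frac1K I$. This gives the universal bound
\[
\LGUPA(y)\;\ge\; 0 + L_{GU}\!\left(\tfrac1K I\right).
\]
It then suffices to exhibit a $K$-dimensional $y$ with zero alignment loss and exactly white covariance.

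\textbf{Invariance.} I would check that each augmentation in figure~\ref{fig:easy-augs} leaves $|x^F[k]|^2$ unchanged for the relevant $k$.
\begin{itemize}
\item A circular crop is a cyclic shift $x[n]\mapsto x[n-s]$, so $x^F[k]\mapsto e^{-2\pi i ks/N}x^F[k]$ and the modulus is unchanged.
\item The ideal low-pass blur leaves every $k$ below the cut-off untouched.
\item Linear jitter $x\mapsto ax+b$ only changes the DC coefficient through $b$ and multiplies the others by $a$. For the alignment term to vanish I need to handle the contrast factor $a$. Either the jitter is brightness-only, or I use the final rescaling to unit norm: every non-DC squared coefficient scales by $a^2$, so the normalized vector $y^*/\|y^*\|$ is exactly invariant.
\end{itemize}
This is why DC is excluded and why the final normalization is needed. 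Hence $y^*(Tx)=y^*(x)$ and the alignment term is zero.

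\textbf{Whiteness.} Take $y_i=|x^F[k_i]|^2/g(k_i)$. By Theorem~\ref{gaussian-dft-theorem} (pairwise independence), the covariances between different $i$ vanish asymptotically, assuming the $k_i$ are chosen so no two are conjugate pairs $k,-k$. Also $\mathrm{Var}(|x^F[k]|^2)=g(k)^2$, so each $\mathrm{Var}(y_i)=1$. Thus $\Sigma_y\propto I$, and a global scalar makes the trace equal to $1$, which is $\Sigma_y=\frac1K I$ and attains the bound above.

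\textbf{Main obstacle.} The unit-norm rescaling, needed for contrast invariance, perturbs the covariance, since $y/\|y\|$ is no longer exactly independent-coordinate with equal variances. The main step is to argue that, with the $1/g(k_i)$ whitening, the normalized vector still has covariance proportional to $I$. The coordinates $y_i$ are i.i.d.\ $\mathrm{Exp}(1)$ (squared modulus of a complex Gaussian divided by its mean), so the normalized vector is exchangeable. Its covariance therefore has the form $aI+b\mathbf{1}\mathbf{1}^T$. My plan is to use this exchangeability, then either absorb the rank-one term or appeal to the limit regime where the normalization concentrates. If that fails, I would restrict the claim to brightness-only jitter, where no normalization is needed and the argument above is exact.
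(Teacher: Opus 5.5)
Your skeleton is the paper's proof. Alignment $\ge 0$ together with Theorem~\ref{thm:white-is-optimal} gives the lower bound $L_{GU}(\tfrac1K I)$, and one exhibits a representation with zero alignment and white covariance. Your invariance checks are correct. Excluding conjugate pairs $k_i, N-k_i$ (and $k=N/2$, where $x^F$ is real and $\mathrm{Var}(|x^F|^2)=2g^2$) is a refinement the statement actually needs.

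The paper never confronts the obstacle you raise. It uses the unit-norm rescaling only for contrast invariance, and asserts whiteness from the independence of the DFT coefficients, which is a fact about the \emph{unnormalized} $y^*$. Your worry is justified, and the concentration repair fails. With $y_i$ i.i.d.\ $\mathrm{Exp}(1)$ and $z=y/\|y\|$, exchangeability does give $\mathrm{Cov}(z)=aI+b\mathbf{1}\mathbf{1}^T$, but $b$ is far from $0$:
\begin{itemize}
\item For $K=2$, $E z_1=\ln(1+\sqrt2)/\sqrt2\approx 0.623$ and $E z_1z_2=\pi/4-1/2\approx 0.285$. The eigenvalues are therefore $\approx 0.009$ along $\mathbf{1}$ and $\approx 0.215$ orthogonal to it.
\item The limit $N\to\infty$ at fixed $K$ does not change the law of $z$ at all.
\item As $K\to\infty$, write $\sum_i y_i=K+\sqrt K\xi_1$ and $\sum_i y_i^2=2K+\sqrt K\xi_2$, with $\mathrm{Var}\,\xi_1=1$, $\mathrm{Var}\,\xi_2=20$, $\mathrm{Cov}(\xi_1,\xi_2)=4$. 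The delta method gives variance $\approx 1/(8K)$ along $\mathbf{1}/\sqrt K$ versus $\approx 1/(2K)$ in every orthogonal direction. This factor of $4$ never disappears.
\end{itemize}
Since $L_{GU}=\sum_i f(\sigma_i^2)$ with $f(s)=-\tfrac12\log(s+\epsilon)-\tfrac{s}{2(s+\epsilon)}$ strictly convex, the globally rescaled normalized vector is \emph{strictly} above your bound.

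Absorbing the rank-one term does work, but not for the stated representation. Take $\tilde y=c\,(aI+b\mathbf{1}\mathbf{1}^T)^{-1/2}\,y^*/\|y^*\|$. It is a fixed linear map applied to an exactly invariant vector, so its alignment is still zero and its covariance is white. It therefore attains the bound and is a globally optimal $K$-dimensional representation. However, it applies a linear map \emph{after} the normalization, so it is neither the representation in the statement nor one computed by figure~\ref{fig:architecture}.

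Combined with the strict suboptimality above, the theorem as literally stated cannot be proved once the jitter includes contrast changes with $a^2\neq 1$. Without per-sample normalization, $y^*$ instead has positive alignment loss while $\tilde y$ has none, so neither reading of ``rescaled to unit norm'' rescues it. What your argument does establish is one of two things:
\begin{itemize}
\item optimality of $\tilde y$; or
\item the statement for jitter with $a^2=1$ (brightness shifts, contrast inversion), where $y^*$ is already invariant and no normalization is needed. This is your fallback, and effectively what the paper's argument proves.
\end{itemize}
A minor point: treating the $y_i$ as i.i.d.\ (needed for exchangeability of $z$) uses asymptotic \emph{joint} independence of finitely many DFT coefficients. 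That is standard under the joint CLT for the DFT, but it is more than the pairwise statement of Theorem~\ref{gaussian-dft-theorem}.
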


\begin{proof} Since the squared DFT is invariant to cyclic translations the alignment loss of $y^*$ will be zero for the circular crop augmentations. Similarly, assuming we measure frequencies other than the DC, the representation will be invariant to a change of brightness and after rescaling the vector $y$ to have unit norm, it will be invariant to a change of contrast. If  we measure frequencies below the cutoff, $y^*(x)$   will also be invariant to ideal blur. Thus for all these augmentations, the alignment loss is zero.   Since the DFT is independent and Gaussian (theorem~\ref{gaussian-dft-theorem}) and we are scaling inversely to the expected standard deviation, the representation $y^*$ will have a covariance that is a multiple of the identity matrix (i.e. it will be white) and according to theorem~\ref{thm:white-is-optimal} this means that $y^*(x)$ optimizes the uniformity loss. Thus $y^*(x)$ minimizes both the alignment and the Gaussian uniformity losses, so it is globally optimal.  
\end{proof}

Note that theorem~\ref{cyclic-theorem} holds for {\em any} stationary signal: whether we train the representation with white noise, or filtered noise, or real images, we can find an optimal representation that is the same up to a scalar rescaling of the outputs. Note also that the optimal representation is not unique: the optimality holds for any set of frequencies and multiplying $y^*$ by an arbitrary orthogonal matrix will also yield an optimal representation. 

The optimal representation of theorem~\ref{cyclic-theorem} can be computed by a very simple neural network architecture (figure~\ref{fig:architecture}): a convolution layer with sinusoidal filters, followed by squaring, global average pool, and a linear projection layer that performs the whitening.   {\em For this class of augmentations, such a simple architecture is sufficient to compute a globally optimal representation in terms of $\LGUPA$ for any stationary signal and there is no need for deeper neural networks.} 

In the appendix (theorem~\ref{thm:relu}) we show that an alternative optimal representation can also be computed when we replace the squaring nonlinearity with a ReLU in the same CNN, yielding:

\begin{equation}
\label{partial-whitening-eq-relu}
    y^*_i(x) = \frac{1}{\sqrt{g(k_i)}} |x^F[k_i]|
\end{equation}

Since the optimality of the representations in equations~\ref{partial-whitening-eq} and ~\ref{partial-whitening-eq-relu} hold for {\em any} choice of frequencies, there is no reason to expect them to be particularly useful for recognition. 
To learn useful features we need to make the augmentations more difficult. 


\subsection{Optimal CNN Weights with More Augmentations}

To obtain a set of augmentations that is amenable to analysis but more difficult, we replace the random crop with a circular crop plus noise. In the appendix (section~\ref{cyclic-sec}), we show that  once we add noise to circular crops, their behavior in the Fourier domain is similar to that of standard random crops.  We also allow the blur kernel to be any filter.

Unlike the set of augmentations in figure~\ref{fig:easy-augs}, for these augmentations a simple CNN cannot compute a representation that is completely invariant.  This is perhaps easiest to see in the case of adding noise: in order to be completely invariant to this augmentation we need to do a perfect denoising which is impossible with a simple CNN with one hidden layer.  
Although the simple CNN of figure~\ref{fig:architecture} can no longer compute a globally optimal representation, we now show that the optimal weights still perform partial whitening.  
    

\begin{theorem}
\label{theorem:3}
For the CNN architecture of figure~\ref{fig:architecture}  with squaring nonlinearity. Assume that the augmentations include cyclic crop plus noise, linear jitter and  blur with an arbitrary blur kernel. Then for any stationary signal that satisfies the conditions of theorem~\ref{gaussian-dft-theorem} the optimal contrastive loss $\LGUPA$ is achieved by the following weights. The first layer filters are sinusoids so that the representation layer $\phi(x)$ measures the power of the signal in $K$ different frequencies  and the weights in the projection layer multiply the power in each frequency by a positive constant.  The chosen frequencies and their weights can be computed using the
 waterfilling algorithm~\ref{alg:waterfill}. 
\end{theorem}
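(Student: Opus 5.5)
We first reduce the problem to Theorem~\ref{main-theorem} by viewing the first-layer filters $h_c$ as fixed and optimizing the projection $W$. After that we show that, over all choices of filters, the resulting optimal loss is smallest when every filter is a single sinusoid.

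With squaring and global average pooling, channel $c$ computes
\[
\phi_c(x)=\frac{1}{N}\sum_n (h_c * x)[n]^2=\frac{1}{N^2}\sum_k |h_c^F[k]|^2\,|x^F[k]|^2 .
\]
This is Parseval's identity. So $\phi(x)=H\,s(x)$, where $s_k(x)=|x^F[k]|^2$ is the power-spectrum vector and row $c$ of $H$ is $|h_c^F|^2$, which is nonnegative. Hence $y=W^T H s(x)$. The whole network is therefore a linear map $A=W^TH$ applied to the fixed feature vector $s(x)$, with the constraint that $A=W^TH$ for a nonnegative $H$ having $K$ rows.

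Next we compute the second-order statistics of $s$ under the augmentations. We will:
\begin{itemize}
\item use Theorem~\ref{gaussian-dft-theorem} to get $\mathrm{cov}(s)=\mathrm{diag}(g(k)^2)$, since the coefficients are independent and $Var(|x^F[k]|^2)=g^2(k)$;
\item note that a cyclic crop leaves $s$ unchanged;
\item note that additive noise with power spectrum $\sigma^2(k)$ adds an independent term to each $s_k$;
\item note that blur with kernel $b$ multiplies $s_k$ by $|b^F[k]|^2$;
\item note that linear jitter with contrast $a$ and brightness $c$ multiplies $s_k$ by $a^2$ for $k\neq 0$ and only affects the DC term.
\end{itemize}
Each of these acts diagonally in frequency. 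The DC coefficient is the only one mixed by brightness, so it can be excluded (or, more simply, treated as a channel with large $\lambda$). We must also check that the cross terms from noise, blur and jitter stay diagonal. For that we use the pairwise independence of the $x^F[k]$ and the independence of the noise across frequencies. We conclude that, in the $s$-coordinates, both the augmentation matrix $B_s=E[(s(x)-s(Tx))(s(x)-s(Tx))^T]$ and $\Sigma_s$ are diagonal. Contrast jitter produces a rank-one off-diagonal part proportional to $E[s]E[s]^T$; we handle it in the same way as in the proof of Theorem~\ref{cyclic-theorem}, by measuring only non-DC frequencies and using the final normalization, or by absorbing it into the diagonal analysis to leading order. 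The generalized eigenvalue of frequency $k$ is then the scalar $\lambda_k=B_{s,kk}/g(k)^2$, and the corresponding generalized eigenvectors are the standard basis vectors $e_k$.

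Now we apply Theorem~\ref{main-theorem}. If we could use an unrestricted linear map $A$ acting on $s$, the optimum would have rows equal to rescaled generalized eigenvectors of $(B_s,\Sigma_s)$. These are $e_{k_i}$ scaled by $\sqrt{P_{k_i}}/g(k_i)$, where the $P_k$ are chosen by algorithm~\ref{alg:waterfill} and the $K$ frequencies are those with the smallest $\lambda_k$ that receive power. This unrestricted optimum is achievable within the architecture. We take $h_c$ to be a sinusoid at $k_c$ (paired with its conjugate frequency), so that $|h_c^F|^2$ is supported at $\pm k_c$ and $\phi_c\propto s_{k_c}$. We then take $W$ diagonal with positive entries $\sqrt{P_{k_c}}/g(k_c)$. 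Since the set of representations the CNN can produce is contained in the set of all linear maps on $s$, the unrestricted optimum is a lower bound on the CNN's loss. The CNN attains that bound, so the sinusoidal construction is globally optimal. This gives exactly the claimed statement: the filters are sinusoids, the projection rescales each frequency positively, and the frequencies and weights come from waterfilling.

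The main obstacle is the diagonalization step, which requires showing that $B_s$ is exactly diagonal in frequency (up to asymptotically negligible terms) when the augmentations are composed. The fourth-moment cross terms $E[s_k s_j]$ and the noise/blur interactions need Gaussianity and independence from Theorem~\ref{gaussian-dft-theorem}, not just uncorrelatedness. The contrast-jitter rank-one term also needs the careful treatment described above. I would first work through the products of the per-augmentation multiplicative factors and the additive noise term one frequency pair at a time.
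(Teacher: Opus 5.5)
Your plan follows the paper's proof. Parseval turns the network into a linear readout $W^{eff}$ of the power spectrum $s(x)$. Theorem~\ref{gaussian-dft-theorem} makes $\Sigma_s$ diagonal, a diagonal $B_s$ then makes the generalized eigenvectors unit vectors, and Theorem~\ref{main-theorem} with waterfilling finishes. Your relaxation argument (CNN outputs are a subset of all linear maps of $s$, and the sinusoidal network attains the unrestricted optimum) justifies applying Theorem~\ref{main-theorem} to $W^{eff}$ more cleanly than the paper does.

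However, the step you flag as the main obstacle is a genuine gap, and it is wider than you say. Pairwise independence makes $E[\delta_k\delta_l]$ vanish for $k\neq l$ only when $\delta$ has zero mean. That holds for crop plus noise: cyclic shifts leave $s$ unchanged, and with both views noised (as the paper assumes) the two noise draws are i.i.d. It fails for blur. There $\delta_k=(1-|b^F[k]|^2)s_k$, so $E[\delta_k\delta_l]=(1-|b^F[k]|^2)(1-|b^F[l]|^2)g(k)g(l)\neq 0$. This is the same rank-one structure you found for contrast.

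None of your remedies removes it:
\begin{itemize}
\item Excluding DC leaves the $g(k)g(l)$ terms for all non-DC pairs.
\item The final normalization makes $y$ nonlinear in $s$, so Theorem~\ref{main-theorem} no longer applies, and it does nothing for blur.
\item The term is not lower order. For contrast jitter alone, in whitened coordinates $u=\Sigma_s^{1/2}w$ the alignment matrix is $c(I+\mathbf{1}\mathbf{1}^T)$ with $c=E[(1-a^2)^2]$. A single-frequency readout has Rayleigh quotient $2c$. Any $u\perp\mathbf{1}$, e.g.\ $y=s_k/g(k)-s_l/g(l)$, achieves $c$ at the same output covariance.
\end{itemize}
The CNN can realize that readout with two sinusoidal channels and one signed projection row. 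So the $e_k$ are not minimal generalized eigenvectors of $(B_s,\Sigma_s)$, and the ``positive constant per frequency'' form does not follow.

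The paper's proof does not supply this step either. It asserts that $B^{blur}$ and $B^{contrast}$ are diagonal with entries $(1-|h^F[k]|^2)^2g^2[k]$ and $(1-a^2)^2g^2[k]$. These are $\mathrm{Var}(\delta_k)$, not $E[\delta_k^2]$, which is twice as large, and the off-diagonal terms are dropped. In effect the paper replaces $E[\delta\delta^T]$ by the centered covariance of $\delta$. Under that convention your argument goes through verbatim and reproduces the paper's eigenvalues $\lambda_k$.

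With $B=E[\delta\delta^T]$, as Theorem~\ref{main-theorem} defines it, the argument is valid only for the crop-plus-noise component and for ideal blur, where $\delta_k=0$ on the passband. To complete the proof you must either make the centering an explicit assumption or restrict the claim to those cases. The centering is not implied by the alignment loss $E\|y(Tx)-y(x)\|^2$, which contains $\|W^TE[\delta]\|^2$.
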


\begin{proof}
The proof proceeds in two steps. First we show that the output of this particular CNN can be written as $y(x)=W^{eff}\Phi(x)$ where the set of features $\Phi(x)$ are the squared DFT of $x$, $ |x^F[k]|^2$, and $W^{eff}$ is given by:
\BE
\label{W-eff-eq}
W^{eff}(i,k)=\frac{1}{N} \sum_c w_{ic} |h^F_c[k]|^2 
\EE 
where $h_c$ are the learned filters and $w_{ic}$ the learned projection weights from feature $c$ to output neuron $y_i$   (see equation~\ref{effective-weights-eq} in the appendix).  We then apply theorem~\ref{main-theorem} and derive the form of the two matrices $B$ and $\Sigma$.   The matrix $\Sigma$ is the covariance of the squared DFT and since the DFT coefficients are Gaussian and pairwise independent it is therefore diagonal.  The fact that all three  augmentations can be written as operating on individual frequencies implies that the matrix $B$ is also diagonal.     This implies that generalized eigenvectors of $(B,\Sigma)$ are positively scaled unit vectors in DFT space.  Full proof in section~\ref{optimal-weights-hard-sec} in the appendix.
\end{proof}
\begin{figure}
   \begin{tabular}{cccc}
    $1/f^2$ & $1/f$ & $1/f^{0.5}$ & CIFAR10 \\
   \includegraphics[width=0.1\textwidth]{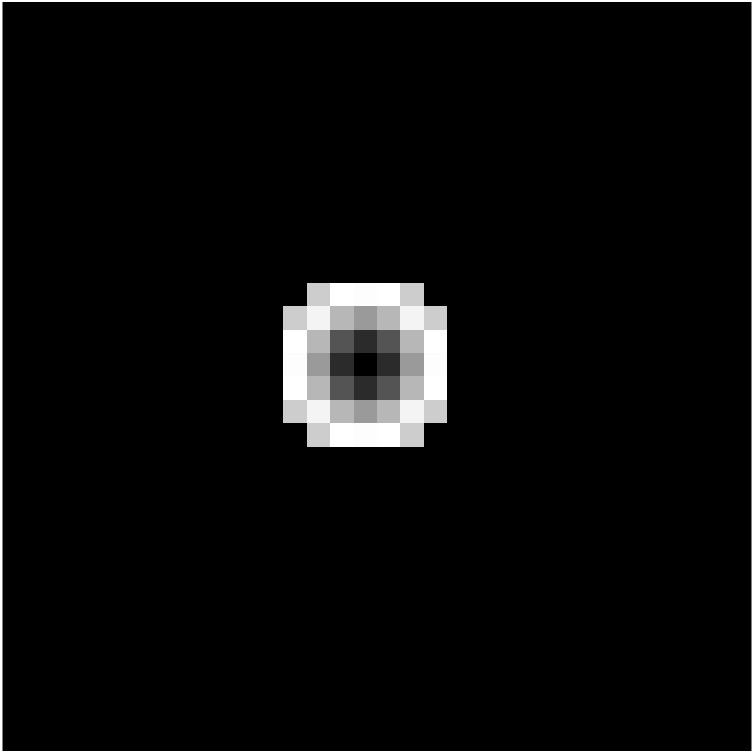} &
    \includegraphics[width=0.1\textwidth]{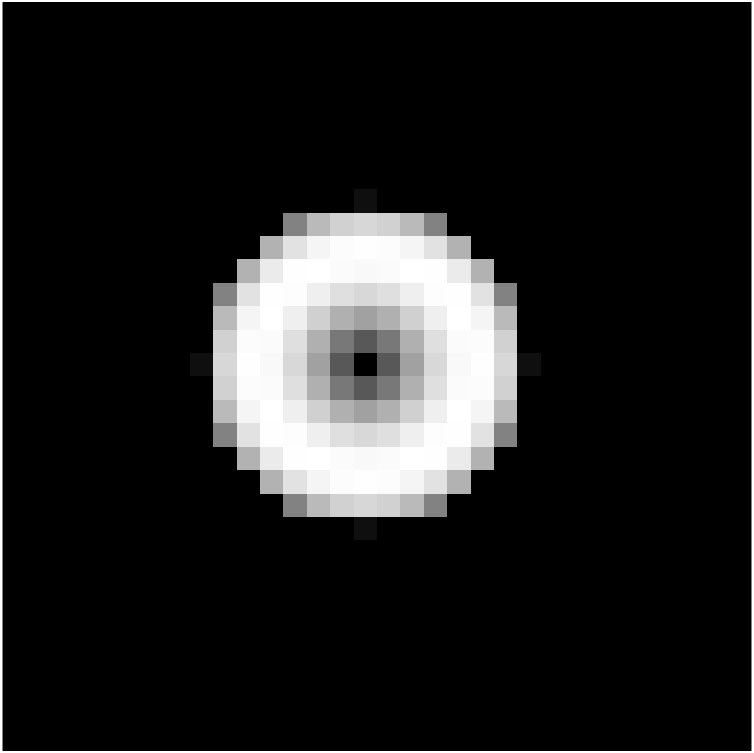} &
    \includegraphics[width=0.1\textwidth]{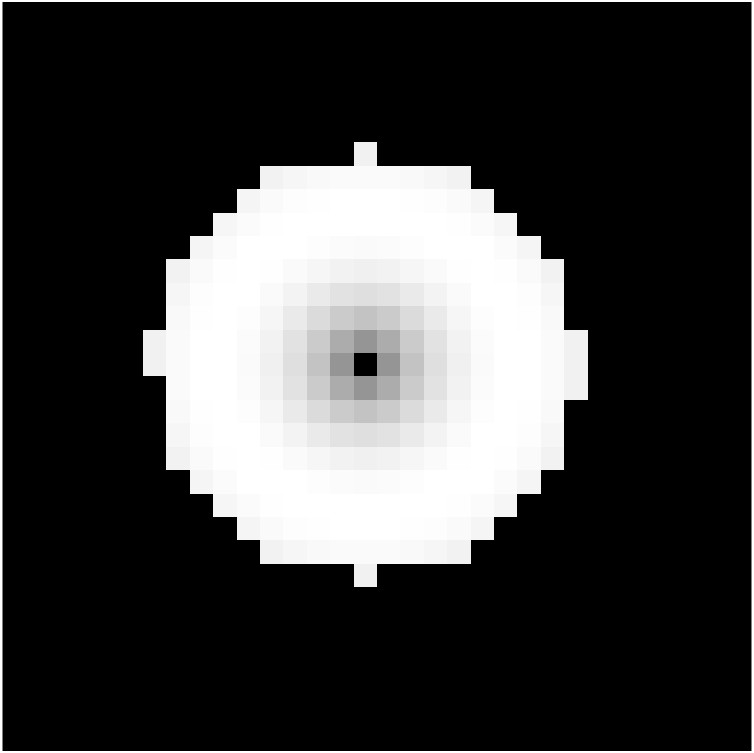} &
    \includegraphics[width=0.1\textwidth]{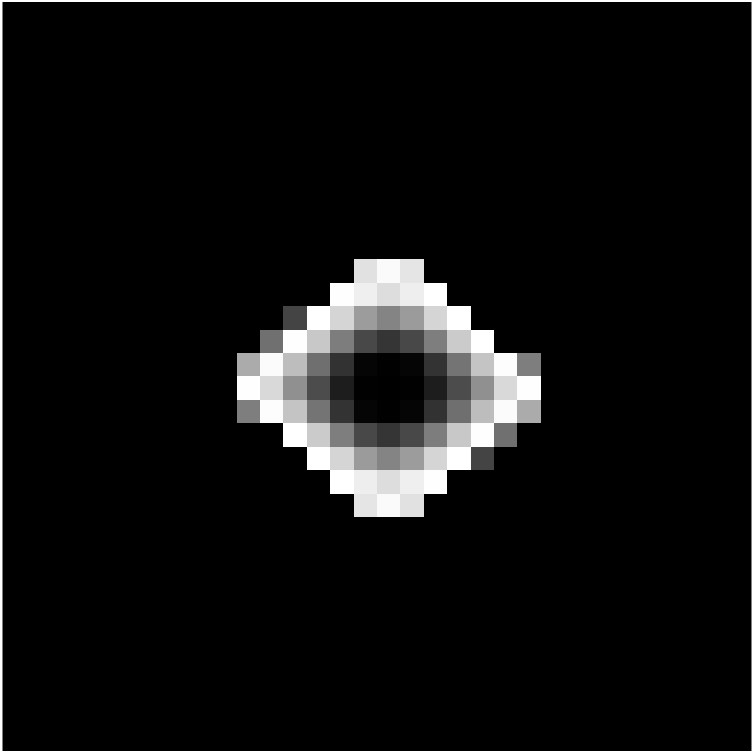} \\
    \includegraphics[width=0.1\textwidth]{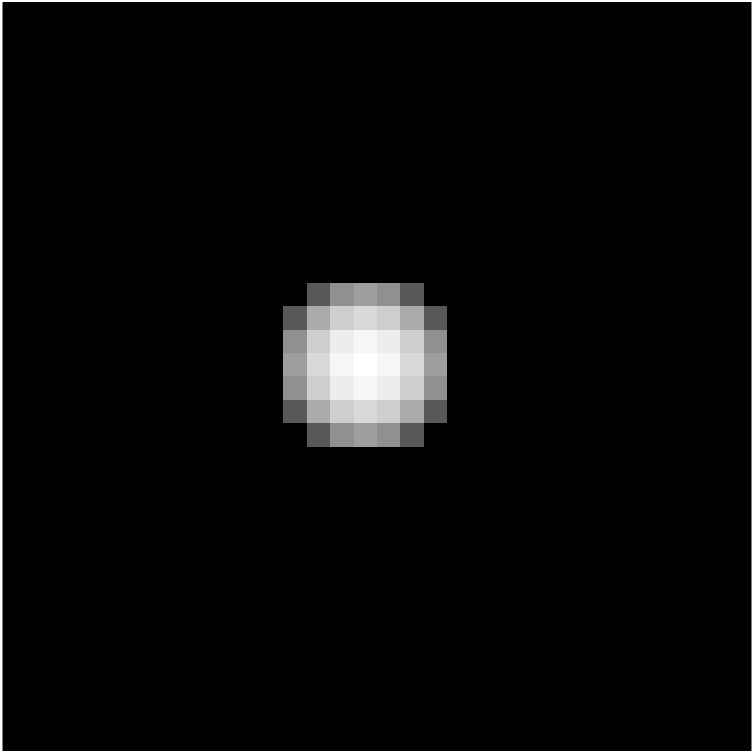} &
    \includegraphics[width=0.1\textwidth]{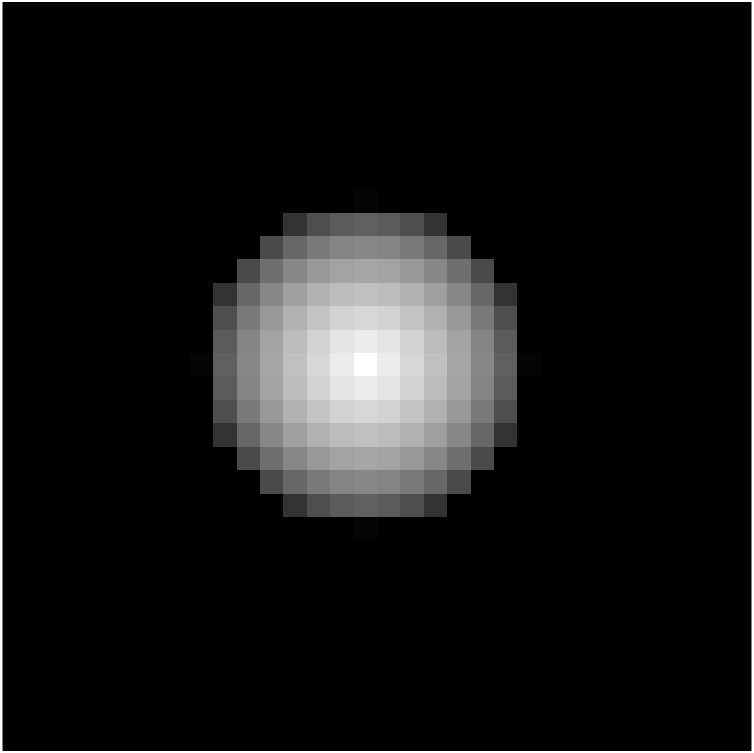} &
    \includegraphics[width=0.1\textwidth]{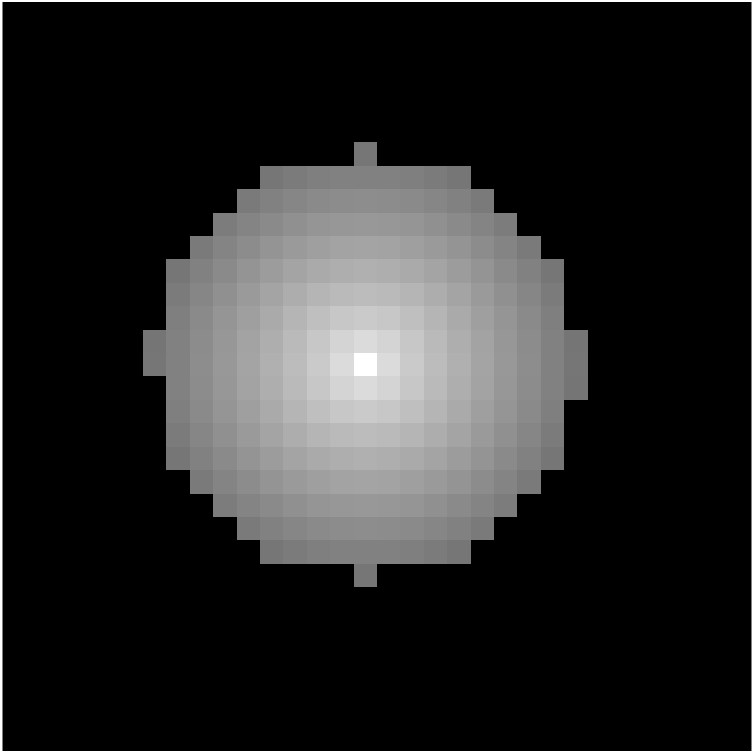} &
    \includegraphics[width=0.1\textwidth]{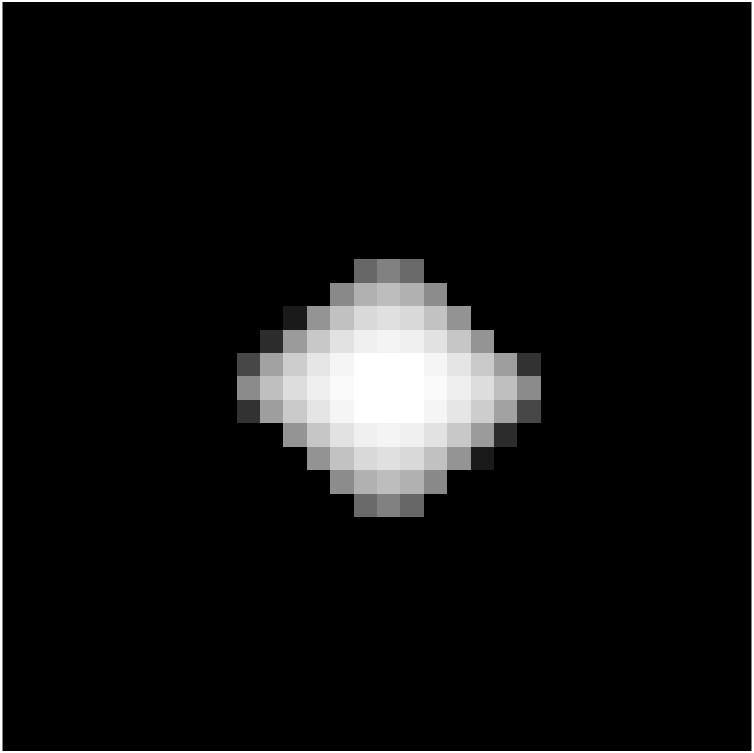}
    \end{tabular}
    \caption[]{ {\bf top:} Frequency sensitivity of the CNN with optimal weights (theorem~\ref{theorem:3}) for different datasets. {\bf bottom:}  Frequency sensitivity multiplied by expected power $g[k]$. The CNNs learn to perform partial whitening.} 
    \label{sensitivities-fig}
    \end{figure}

Similar to the case of the easy augmentations in figure~\ref{fig:easy-augs}, with the harder augmentations the optimal CNN will still perform partial whitening. The representation will be sensitive to a subset of the spatial frequencies, and the variance of the representation will be the same in all directions. We can make the whitening more explicit by computing the {\em sensitivity} of the CNN with optimal weights to a particular frequency $k$. It is defined as the $\ell_1$ norm of $y(x)$ when $x$ is an image of a pure sinusoid of frequency $k$. Since the output of the optimal CNN is simply $W^{eff} \Phi(x)$ the sensitivity  can be computed as: $S[k]= \sum_i |W^{eff}(i,k)|$.  

Figure~\ref{sensitivities-fig} shows examples of such sensitivities for different training sets when the augmentations are cyclic crops plus noise. The sensitivity is shown as a gray level image in frequency space, with the DC frequency corresponding to the central pixel.
For all the datasets shown, the sensitivity of the representation is concentrated on mid-range frequencies: the preference towards whitened representations favors frequencies that have low expected power but the robustness to noise favors frequencies that have high expected power. For image datasets whose expected power falls off with distance from the origin, this leads to representations where most of the sensitivity lies on a ring in frequency space while for CIFAR10 the sensitivity lies on a diamond in frequency space. The bottom of figure~\ref{sensitivities-fig} shows the sensitivity multiplied by the expected power. Indeed it can be seen that the CNNs with optimal weights are performing partial whitening: they are only sensitive to a subset of spatial frequencies and their sensitivity to those frequencies is chosen so that the response will have the same variance in all directions.
\section{Experiments}

\begin{figure*}
\centerline{
\includegraphics[width=1\textwidth]{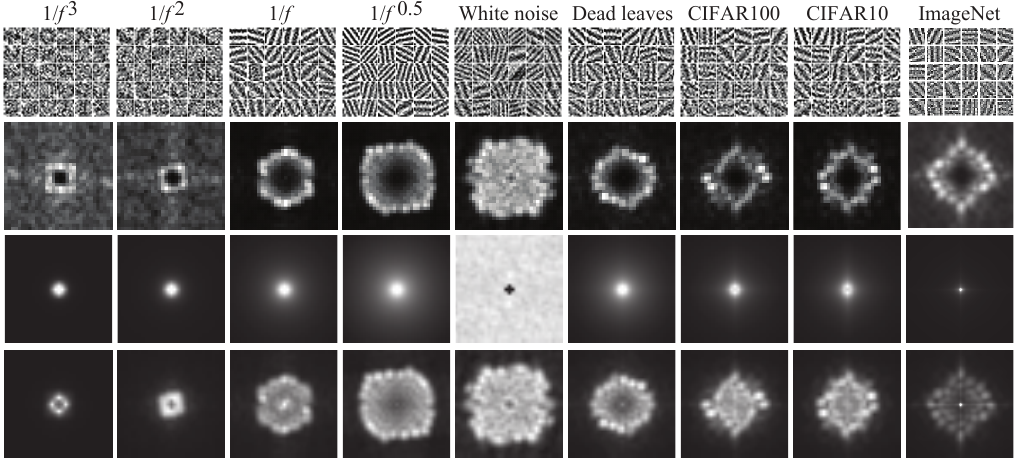} 
}
\caption[]{Results with ReLU nonlinearity. The results are similar to those obtained with a quadratic nonlinearity. Row 1: learned kernels. Row 2: sensitivity. Row 3: average power spectrum of the training images. Row 4: product of the sensitivity and the square root of the average power spectrum shown in Rows 2 and 3, respectively. The fact that the product is approximately constant shows sensitivity is inversely proportional to the square root of expected power, as predicted for partial whitening for ReLU (theorem~\ref{thm:relu}) .}
\label{visualization-figure2}
\end{figure*}

\begin{figure*}
\centerline{
\begin{tabular}{c}
\includegraphics[width=\textwidth]{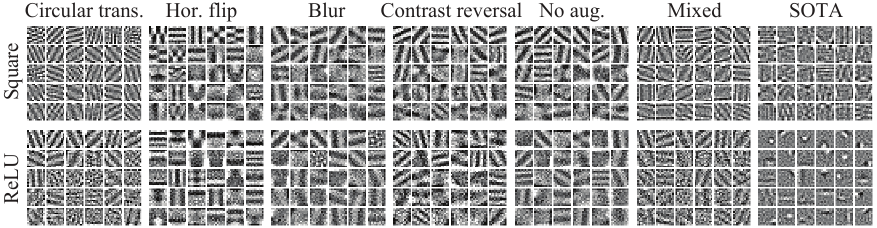}
\end{tabular}}
\caption[]{Summary of our experimental results with different augmentations and quadratic and ReLU nonlinearities.
}
\label{augmentations-figure}
\end{figure*}

\begin{figure*}
\centerline{
\includegraphics[width=\textwidth]{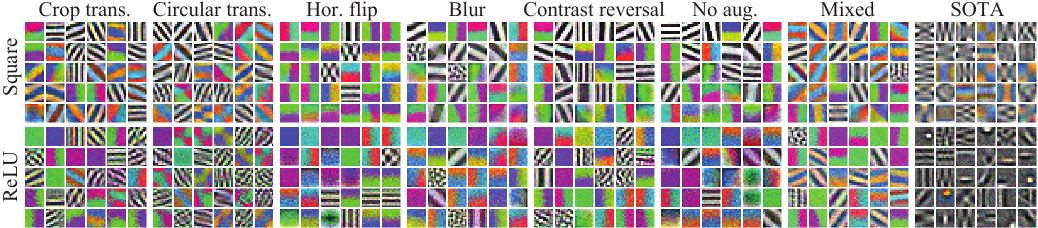}
}
\caption[]{Kernels learned when training with color images on ImageNet. As predicted by the theory, for all the simple augmentations the kernels are sinusoids that are a function of only one of the uncorrelated color channels. For natural images, these uncorrelated channels are approximately  (gray, red-green, blue-yellow)~\cite{Ruderman:98}}
\label{color-figure-paper}
\end{figure*}

\begin{figure*}

    \begin{tabular}{cc}
    Training on different datasets & CIFAR10 with different augmentations \\
    \includegraphics[width=0.46\textwidth]{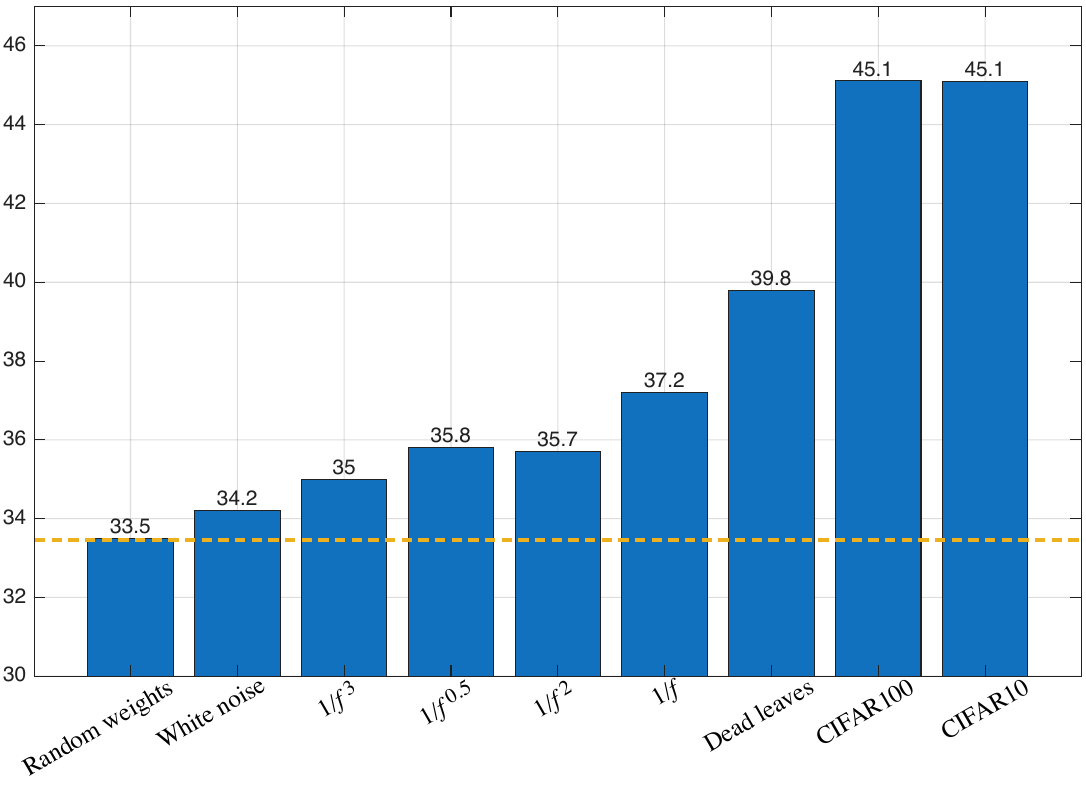}&
    \includegraphics[width=0.47\textwidth]{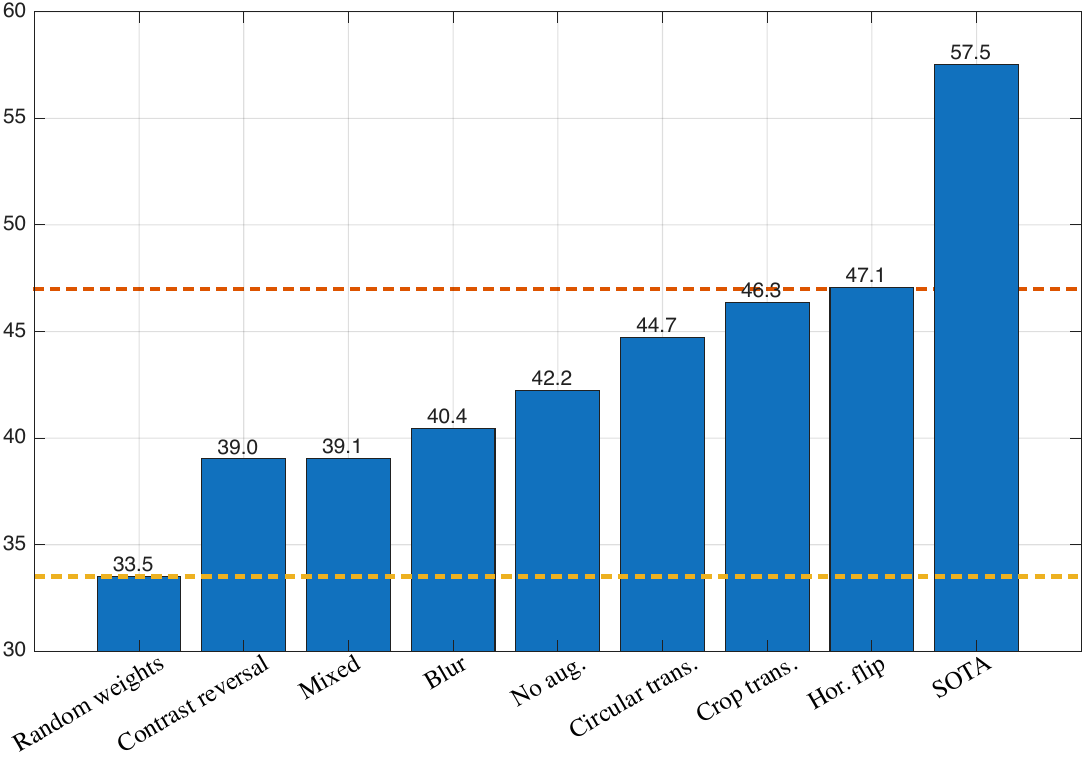}
    \end{tabular}

    \caption{Summary of recognition performance with $\rho(z)=ReLU$. Testing is always performed on CIFAR10.
    (left) training is performed on different datasets using crop translation as augmentation. (right) training is done on CIFAR10 using different augmentations. The yellow horizontal line corresponds to random weights. The red horizontal line is partial whitening in PCA space~\cite{knnwebpage}. Even with such a simple architecture, CL learns representations that lead to improved recognition accuracy but most of that improvement is due to partial whitening. The usefulness of training on random images for recognizing real images is best for noise images that approximately match the power spectrum of CIFAR10.}
    \label{fig:recognition}
\end{figure*}
To see the extent to which this theory can explain what CNNs learn in practice, we trained single-layer CNNs (the architecture in figure~\ref{fig:architecture}) by minimizing the $\LGUPA$ contrastive loss (equation~\ref{entropy-loss}) using PyTorch.  

The CNNs had 256 $11 \times 11$ filters in the first convolutional layer (with "valid" boundary conditions), followed by a nonlinearity $\rho$,  global average pooling (GAP), and then a linear projection to 32 dimensions. Specifically we trained on six synthetic datasets (fractal noise  with different power laws and dead leaves) and three real datasets: CIFAR10, CIFAR100 and ImageNet. For consistency with the theory, most of the experiments were performed after the images were converted to gray scale and only one set of experiments used color images. We repeated the experiments with two nonlinearities $\rho(z)=z^2$ , $\rho(z)=ReLU(z)$.



In the first set of experiments,  the two augmentations of an image were random crops of equal size ($84 \%$  of the original size). This means that the two augmentations are related by a  crop translation.
Figure~\ref{teaser-fig} shows the learned weights for this augmentation in the first layer for CNNs with a quadratic nonlinearity (see figure~\ref{visualization-figure} in the appendix for the weights in all layers). As predicted by our theory, {\em in all cases the learned weights converged to sinusoids.} 
Figure~\ref{visualization-figure2} shows similar results when we use a ReLU nonlinearity.  For the different datasets, the weights in the first layer converge to sinusoids and the overall sensitivity of the network to a particular frequency is consistent with that predicted by the theory (theorem~\ref{thm:relu}). {\em In particular, the bottom row of figure~\ref{visualization-figure2} shows that the networks are learning partial whitening} (compare to figure~\ref{sensitivities-fig}).


Figure~\ref{augmentations-figure} shows the learned filters with different augmentations. As expected from the proof of theorem~\ref{theorem:3} the network learns filters that are sinusoids (or sums of a small number of sinusoids) for a wide range of augmentations (including the "no aug." augmentation when the two augmented images are identical and the "mixed" augmentation which combines random cropping, contrast inversion and blur). Since the uniformity loss is always minimized by measuring individual DFT coefficients, as long as the alignment loss is low with simple functions of these coefficients, the network will learn sinusoids. In the case of horizontal flip, the network chooses filters that are localized in frequency  that are also horizontally symmetric. 

A notable exception is when we use SOTA augmentations taken from~\cite{khosla2020supervised}. Here the augmentations are horizontal flip, random crop {\em when the size of the crop and aspect ratio are also randomized} and nonlinear color jitter. Although these augmentations do not seem very different from the others, they lead to much more localized filters, presumably because such filters are better at minimizing the alignment loss.

In the last set of experiments, we trained the CNNs with color images.
Figure~\ref{color-figure-paper} shows the filters that are learned with color images for different augmentations. As shown in section~\ref{color-sec} in the appendix, for color images the theory  still holds if the color channels are decorrelated and for natural images  decorrelation yields a new set of three channels that are approximately a gray level channel and two color-opponent channels (red vs. green and blue vs. yellow)~\cite{Ruderman:98}. Indeed it can be seen that the CNNs learn sinusoids in their first layer, but each sinusoid is only sensitive to one of the decorrelated channels 

We also computed the recognition accuracy on CIFAR10 when a KNN classifier is used based on the representation learned by the network (figure~\ref{fig:recognition}). Even with this simple architecture and simple augmentations, CL yields representations that improve the recognition accuracy considerably, relative to the original weights. On the other hand, most of this improvement is due to the partial whitening. To show this, we also computed the accuracy of linear partial whitening~\cite{knnwebpage} (see appendix for details): we compute the PCA of the images, project each image onto the 32 top PCA directions, and then perform partial whitening by dividing each PCA coefficient by the square root of its eigenvalue  plus a constant $\lambda=0.1$.  As can be seen in the figure, this simple representation (which is not invariant to most transformations) gives comparable performance as the trained CNN with ReLU for most augmentations. Again a notable exception are the SOTA augmentations where the learned localized filters capture more than just partial whitening.

Figure~\ref{fig:recognition} (left) measures how well synthetic images can be used to learn representations that are useful for CIFAR10. As predicted by our theory, when training on images that are not CIFAR10 and testing on CIFAR10, recognition accuracy improves when the expected power spectrum is closer to that of CIFAR10. 

\section{Limitations and Extensions}
The main limitation of our work is that the theoretical analysis focuses 
 on very simple CNN architectures and augmentations and the performance of such architectures is far from state-of-the-art.  We chose to focus this paper on a basic setting for which analysis is easier, but the theory can be extended to more augmentations and deeper networks. 

 A second limitation of our work is that the theoretical analysis uses our $\LGUPA$ contrastive loss rather than the InfoNCE loss and the two losses are only identical when $y$ is Gaussian. Our theoretical analysis can be straightforwardly extended to other contrastive losses that are based on the covariance of the embedding~\cite{haochen2021provable,bardes2022vicreg,DBLP:conf/icml/ZbontarJMLD21}. Empirically we have found that the same representations are learned with the InfoNCE loss and our $\LGUPA$ loss (see figure~\ref{fig:NTxent_loss} and section~\ref{nt-section} in the appendix), consistent with other reports that contrastive learning leads to Gaussian distributions~\cite{betser2026infonce,balestriero2025lejepaprovablescalableselfsupervised}.

As discussed after the proof of theorem~\ref{cyclic-theorem}, the minimum of the contrastive loss is not unique. Empirically, we have found that the dynamics of gradient descent influence which minimum is found by a CNN (see section~\ref{iterations-sec} in the appendix). It may be possible to relate this effect to theoretical investigations of the implicit bias of gradient descent in deep models (e.g.~\cite{pmlr-v125-woodworth20a}). 

 Our theory focused on stationary datasets but many datasets are non-stationary (e.g. images of centered human faces such as CelebA). We have found that when we trained CNNs with GAP on non-stationary datasets, the learned filters are still sinusoids (see appendix). Apparently, the global average pool prevents the CNN from taking advantage of the non-stationarity. We believe that the theory can therefore be extended to deal with non-stationary datasets as long as the architecture is constrained.

\section{Conclusion}
The main motivation for our paper was to understand how contrastive learning can learn useful image representations using simple augmentations and simple images. In order to address this question, we have shown how to analytically compute the optimal representation in terms of  a particular contrastive loss $\LGUPA$ for simple augmentations and the optimal weights in a simple CNN under additional augmentations.  For this setting, we showed that the assumption of stationarity allowed us to obtain quantitative predictions for the optimal weights given only the expected power spectrum of the dataset.

Perhaps the short answer to the question that we posed in the beginning of this section is that while learning can be successful using simple augmentations and images, the choice of which augmentations and images are used can greatly influence the utility of the representation. While many augmentations will cause the network to learn partial whitening, the details of the augmentations will change which frequencies are the ones that are measured and these small details can have a large influence on recognition performance.  Additionally, the power spectrum of the synthetic images should closely match the power spectrum of real images so that the network will learn to measure the same frequencies in both cases. 

Partial whitening has previously been suggested as a useful representation of images (e.g.~\cite{article,b92c4f9b37974da5b5ef77fedb46c15e,pmlr-v15-coates11a}). It reduces the redundancy of the representation while also providing some robustness to noise.  Our results suggest that by using particular augmentations in CL we are not necessarily teaching the CNN to be invariant to those transformations but rather guiding the network towards a representation that has been previously suggested by domain experts.

\clearpage

\section*{Acknowledgments}
We thank Shaden N. Alshammari for extremely useful comments on an earlier version of this manuscript. YW is supported by the ISF and the Gatsby Foundation. AT is supported by ONR MURI. YW thanks Bill Freeman and Franny Osman for their generous hospitality which was crucial in getting this research started.
\section*{Impact Statement}
This paper presents work whose goal is to advance the field of Machine
Learning. There are many potential societal consequences of our work, none
which we feel must be specifically highlighted here.

\bibliography{main}
\bibliographystyle{icml2026}

\appendix

\onecolumn
\section*{Appendix}

\section{Comparing the InfoNCE and $\LGUPA$ contrastive  losses}
\label{two-losses-sec}
Figure~\ref{fig2} compares training a ResNet18 with the InfoNCE  and $\LGUPA$ contrastive losses. The two losses are identical when $y(x)$ is Gaussian.   The four subfigures show the recognition accuracy when training on CIFAR10 with the two different losses as a function of epoch. Whether we use KNN or linear readout to do the classification, the performance is nearly identical.


 \begin{figure}
\centerline{
  \includegraphics[width=1\columnwidth]{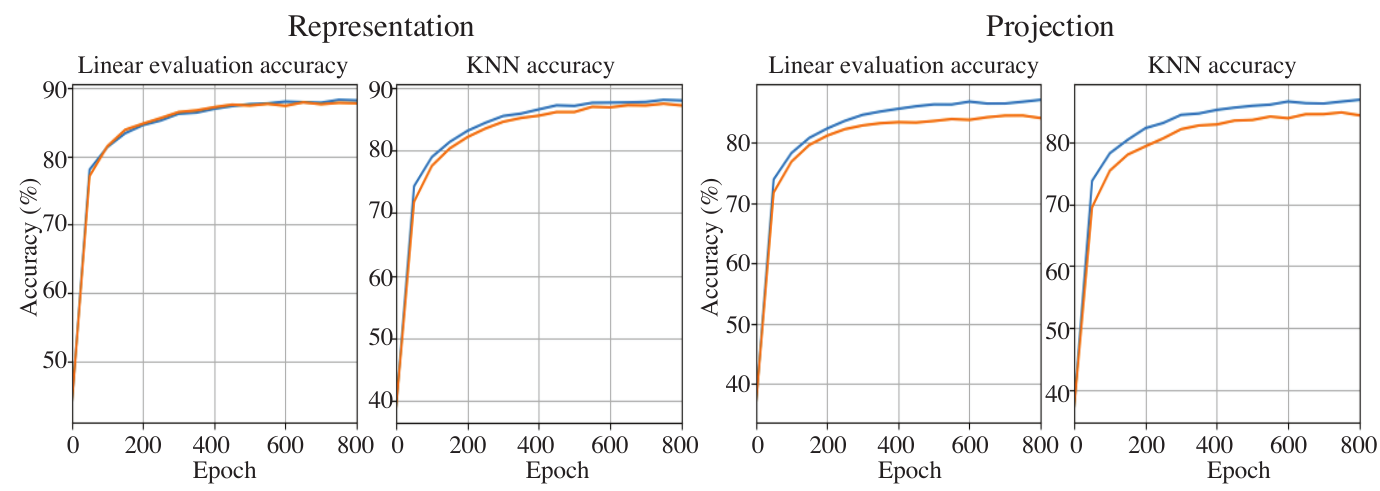}
}
\caption[]{In this paper we analyze the $\LGUPA$ contrastive loss which is equivalent to the InfoNCE loss when the distribution is Gaussian. The four subfigures show the recognition accuracy when training on color CIFAR10 images with the two different losses as a function of epoch (InfoNCE loss in blue, $\LGUPA$  in orange). Whether we use KNN or linear readout to do the classification, the performance is nearly identical. The representation network is a ResNet18 and the projection head is a MLP.}
\label{fig2}
\end{figure}

\label{nt-section}
Figure~\ref{fig:NTxent_loss} shows the kernels learned when training the one layer convolutional architecture used in this paper (figure~\ref{fig:architecture}) using the InfoNCE loss \cite{simCLR}. Figure~\ref{fig:NTxent_perf} compares the recognition accuracy for different augmentations, the InfoNCE loss and $\LGUPA$ when using the representation and projection features.

\begin{figure*}
  \centerline{
  \includegraphics[width=1\columnwidth]{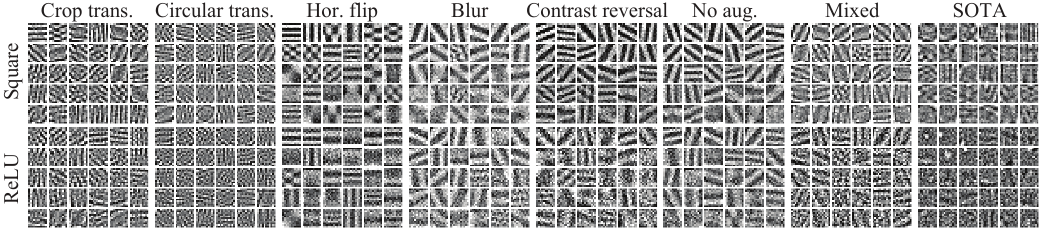}
  }
  \caption[]{Kernels learned when training with gray-scale images on CIFAR10 when minimizing the InfoNCE loss. The kernels are sinusoids. Compare the kernels with those of figure~\ref{augmentations-figure}.}
  \label{fig:NTxent_loss}
 \end{figure*}

\begin{figure*}
  \centerline{
  \includegraphics[width=1\columnwidth]{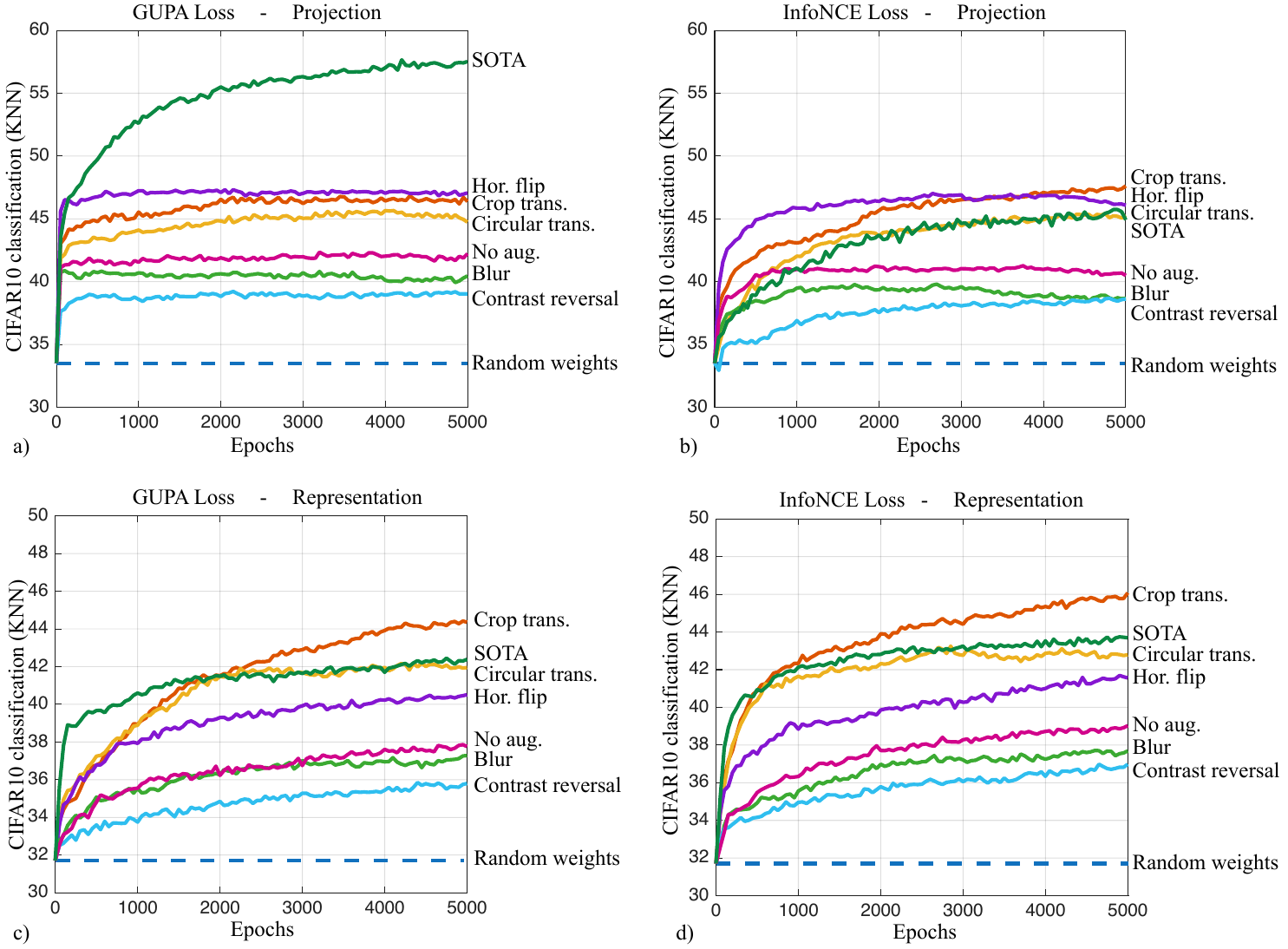}
  }
  \caption[]{Recognition accuracy when training with gray-scale CIFAR10 with two different losses and different augmentations. The representation layer has one convolutional layer, a ReLU non-linearity and global average pooling. The projection layer is a fully-connected layer followed by a normalization (figure~\ref{fig:architecture}).}
  \label{fig:NTxent_perf}
 \end{figure*}

\section{From Uniformity to Cross Entropy}
\label{approximate-loss-sec}
\begin{theorem}
If $y$ is Gaussian and the size of the batch goes to infinity then the InfoNCE contrastive loss is equal (up to an additive constant) to the following loss:

\BEA
\label{entropy-loss-appendix}
\nonumber
\LGUPA &=& t E  \|y(T x)- y(x)\|^2 - \frac{1}{2} \log \det( \Sigma_y + \epsilon I )  \\
 && - \frac{1}{2} \Tr \left( \Sigma_y (\Sigma_y  + \epsilon I)^{-1} \right)
\EEA
where $\Sigma_y$ is the covariance matrix of the representation $y$, $\epsilon=\frac{1}{2t}$, and $\Tr(M)$ refers to the trace of a matrix $M$. 
\end{theorem}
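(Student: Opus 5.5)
Our plan is to split the InfoNCE loss (equation~\ref{original-loss-eq}) into an alignment part and a uniformity part, and handle each separately. Writing $-\log\frac{e^{-a}}{e^{-a}+S} = a + \log(e^{-a}+S)$ with $a = t\|y(Tx)-y(x)\|^2$, the first term gives exactly the alignment term $t E\|y(Tx)-y(x)\|^2$ of $\LGUPA$. For the second term, let the number of negatives $M$ grow. Then $e^{-a}$ is negligible compared with $S=\sum_{x'} e^{-t\|y(x)-y(x')\|^2}$, and $S/M \to E_{x'}[e^{-t\|y(x)-y(x')\|^2}]$ by the law of large numbers. So, up to the additive constant $\log M$, the loss becomes
\[
t E\|y(Tx)-y(x)\|^2 + E_x \log E_{x'} e^{-t\|y(x)-y(x')\|^2},
\]
which is the Wang--Isola uniformity term.

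Next we interpret the kernel as a Gaussian density. With $\epsilon = \frac{1}{2t}$ we have $e^{-t\|u\|^2} = (2\pi\epsilon)^{d/2}\, \mathcal{N}(u;0,\epsilon I)$. Hence $E_{x'} e^{-t\|y(x)-y(x')\|^2} = (2\pi\epsilon)^{d/2} q(y(x))$, where $q = p_y * \mathcal{N}(0,\epsilon I)$ is the density of $y + \sqrt{\epsilon}\,z$ with $z\sim N(0,I)$. The uniformity term is therefore $E_{y\sim p_y}[\log q(y)]$ plus a constant, i.e. minus the cross-entropy $H(p_y,q)$.

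Now use Gaussianity. If $y\sim N(\mu,\Sigma_y)$ then $q = N(\mu,\Sigma_y+\epsilon I)$, and the cross-entropy has a closed form:
\[
E_{p}[\log q(y)] = -\tfrac{d}{2}\log 2\pi - \tfrac12\log\det(\Sigma_y+\epsilon I) - \tfrac12 \Tr\!\left((\Sigma_y+\epsilon I)^{-1}\Sigma_y\right).
\]
This is the standard identity $E[(y-\mu)^T A (y-\mu)] = \Tr(A\Sigma_y)$. Dropping the constants $\log M$, $\frac d2\log(2\pi\epsilon)$ and $\frac d2\log 2\pi$, which do not depend on $y$, gives exactly $\LGUPA$.

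The main obstacle is justifying the limit rigorously. We need two interchanges: replacing $\log$ of the empirical average by $\log$ of its expectation, and discarding the positive term $e^{-a}$. Both require uniform integrability of $\log(S/M)$. Our approach is to note that $S/M\in(0,1]$ and, for a Gaussian $y$, $S/M$ is bounded below in expectation by Jensen ($\geq e^{-tE\|y-y'\|^2}$). Then $\log(S/M)\to \log E_{x'}[\cdot]$ almost surely, and dominated convergence (for $x$ fixed, then integrated over $x$) gives the limit, with the $e^{-a}/M\to 0$ correction vanishing. We also note that the sign convention makes $\LGUPA$ equal to the loss minus $\log M$ and the constants, which is what ``up to an additive constant'' means.
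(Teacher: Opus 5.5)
Your proposal is correct and follows essentially the same route as the paper: the Wang--Isola split into an alignment term plus $E_x\log\sum_{x'}e^{-t\|y(x)-y(x')\|^2}$, then reading the kernel sum as a Parzen estimate $q=p_y*\mathcal{N}(0,\epsilon I)$ so that the uniformity term becomes a negative cross-entropy, and finally the Gaussian closed form with $q=\mathcal{N}(\mu,\Sigma_y+\epsilon I)$. Your limit argument goes beyond the paper, which simply cites Wang and Isola for that step. However, a lower bound on $E[S/M]$ does not control the lower tail of $\log(S/M)$, so it cannot serve as the domination. Use Jensen inside the logarithm instead, $\log(S/M)\ge -\frac{t}{M}\sum_j\|y(x)-y(x'_j)\|^2$, which is a uniformly integrable lower bound.
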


\begin{proof}

As shown by Wang and Isola~\cite{pmlr-v119-wang20k}, as the batch size goes to infinity the InfoNCE loss approaches:
\BE
L=t E  \|y(T x)- y(x)\|^2  + L_u
\EE
With:
\BE
L_{u}= \frac{1}{N} \sum_x  \left[ \log \left(   \sum_{x'} e^{-t
      \|y(x)-y(x')\|^2} \right) \right]
\EE

The proof proceeds by simplifying $L_u$ while taking advantage of the fact that $y$ is Gaussian.

We define:
\BE
q(y)=  \frac{1}{Z}  \sum_{j} e^{-t
      \|y-y(x_j)\|^2}
  \EE
  where $Z$ is a normalization constant. Note that $q$ is a Parzen
  window estimate of $p(y)$: we put a Gaussian around each sample of
  $y$ and the estimate of the density at another location is a sum of
  the contributions from all the training samples. The covariance of the Gaussian that we put around each training sample is $\frac{1}{2t} I = \epsilon I$. This means that $q$ is a convolution of $p$ with a Gaussian whose covariance is $\epsilon I$.

 The uniformity loss can now be seen as a {\em cross-entropy}:
\BE
L_{u}= \log Z - crossentropy(p,q)
\EE

In particular, if $p$ is a Gaussian with covariance $\Sigma_y$ then $q$, which is just a convolution of $p$ with a Gaussian whose covariance is $\epsilon I$ 
will be Gaussian with the same mean and covariance $D=\Sigma_y+ \epsilon I$.  Using the closed-form formula  for the cross entropy between
two Gaussians gives:
\BE
L_{u}= \log Z_2 - \frac{1}{2} \log \det (\Sigma_y+  \epsilon I)  -\frac{1}{2} \Tr( (\epsilon I + \Sigma_y)^{-1}
\Sigma_y).
\EE
where $Z_2$ includes constants that do not depend on $\Sigma_y$.
\end{proof}

\section{Properties of the $\LGUPA$ Contrastive Loss}
As is the case for other contrastive losses, $\LGUPA$ consists of two terms. An alignment term  which pushes the embedding of an image and its augmentation towards each other, and a uniformity term which pushes apart embeddings of different images. Explicitly, the second term is a Gaussian uniformity loss:
\BE
\label{Luniformity2-eq-app}
L_{GU}= - \frac{1}{2} \log \det (\Sigma_y+  \epsilon I)  -\frac{1}{2} \Tr( (\epsilon I + \Sigma_y)^{-1}
\Sigma_y).
\EE
To see that this term pushes apart embeddings of different images, we take advantage of the fact that the trace and determinant only depend on the eigenvalues of a matrix and rewrite $L_{GU}$ as a function of  $\sigma_i^2$, the eigenvalues of $\Sigma_y$ (these are also the variances of $y$ along the principal directions of the data):
\BE
L_{GU} = 
-\frac{1}{2} \sum_i \log (\sigma^2_i+\epsilon) - \frac{1}{2} \sum_i \frac{\sigma^2_i}{\sigma^2_i+\epsilon}
\label{simpler-lu-eq}
\EE
We now see that the loss decreases monotonically as we increase the variance along the principal dimensions of $y$. Thus by pushing the embeddings of different images away from each other, we can arbitrarily decrease $L_{GU}$. However, when we constrain the expected $\ell_2$ norm of the centered embeddings, we can no longer arbitrarily decrease $L_{GU}$. For such a setting, the optimum is obtained when the representation is white.

\label{whitening-sec}
\begin{theorem}
 The minimum of $L_{GU}$  subject to the  constraint that $E(\|y(x)-\mu_y\|^2)=1$ is when the representation is whitened, i.e. $y$ has the same variance in all directions.
 \end{theorem}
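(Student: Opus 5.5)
The plan is to reduce the problem to a scalar optimization over the eigenvalues of $\Sigma_y$, using equation~\ref{simpler-lu-eq}. Let $\sigma_1^2,\dots,\sigma_d^2$ be the eigenvalues of $\Sigma_y$, where $d$ is the dimension of $y$. The constraint $E(\|y(x)-\mu_y\|^2)=1$ equals $\Tr(\Sigma_y)=1$, so it becomes $\sum_i \sigma_i^2 = 1$ with $\sigma_i^2\ge 0$. Since $L_{GU}$ depends on $\Sigma_y$ only through its spectrum, writing $s_i=\sigma_i^2$ gives the problem
\[
\min_{s\in\Delta}\; F(s)=\sum_i f(s_i),\qquad f(s)=-\tfrac12\log(s+\epsilon)-\tfrac12\,\frac{s}{s+\epsilon},
\]
where $\Delta=\{s_i\ge 0,\ \sum_i s_i=1\}$ is the simplex.

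The key step is to show that $f$ is strictly convex on $[0,\infty)$. Rewrite $\frac{s}{s+\epsilon}=1-\frac{\epsilon}{s+\epsilon}$, so that up to a constant
\[
f(s)=-\tfrac12\log(s+\epsilon)+\tfrac{\epsilon}{2(s+\epsilon)}.
\]
Then
\[
f''(s)=\frac{1}{2(s+\epsilon)^2}+\frac{\epsilon}{(s+\epsilon)^3}>0 .
\]
Hence $F$ is a separable, strictly convex, symmetric function on the convex simplex.

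With strict convexity in hand, the minimizer follows from Jensen's inequality: $\sum_i f(s_i)\ge d\,f\!\left(\tfrac1d\sum_i s_i\right)=d\,f(1/d)$, with equality iff all $s_i=1/d$. Equivalently, the KKT conditions $f'(s_i)=\nu$ force all $s_i$ equal, because $f'$ is strictly increasing. This gives $\Sigma_y=\frac1d I$, meaning $y$ has the same variance in every direction, i.e. it is whitened.

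The main obstacle is interpretive, namely what ``dimension'' means. If the dimension $d$ of $y$ is fixed, the argument above is complete. If one is also free to choose how many directions carry variance (as in the later waterfilling setting with $\sum_k P_k\le 1$), then I would additionally note that $F$ restricted to the equal-variance configurations, $d\,f(1/d)$, is decreasing in $d$. This holds because $f(0)=-\tfrac12\log\epsilon$ is the baseline, and each extra active direction lowers the loss. So the optimum uses all available dimensions with equal variance. Finally, I would check that strict convexity makes the spectral optimum unique, so any optimal $\Sigma_y$ is a multiple of the identity, which is exactly the whitening claim.
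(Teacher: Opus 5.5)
Your argument is correct and follows the same route as the paper: reduce $L_{GU}$ to a separable function of the eigenvalues under $\sum_i\sigma_i^2=1$, then apply a Lagrange/KKT condition, and your strict convexity of $f$ supplies what the paper's ``set the derivative to zero'' step leaves implicit, namely that $f'$ is injective (so stationarity forces equal variances) and that this stationary point is the global minimum even with the constraints $\sigma_i^2\ge 0$. One correction to your optional remark: $d\,f(1/d)$ is not decreasing in general (for $\epsilon<1$ one has $f(0)=-\tfrac12\log\epsilon>0$, so $d\,f(1/d)\to+\infty$); the correct comparison is $m\,f(1/m)+(d-m)f(0)$, and your Jensen step already covers it because zero-variance directions are points of the simplex, so the remark is unnecessary.
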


\begin{proof}
 We want to minimize equation~\ref{simpler-lu-eq}
subject to:
\BE
\Tr \Sigma_y=1
\EE

Again the trace only depends on the eigenvalues so we can rewrite the constraint as:

\BE
\sum_i \sigma^2_i =1
\EE

Adding a Lagrange multiplier and setting the derivative to zero shows that at the minimum all $\sigma_i$ should be equal.
\end{proof}

\section{Optimal Weights in the Last Layer}
\label{optimal-weights-sec}
\begin{theorem}
\label{main-theorem-appendix}
    Assuming that $y(x)=W ^T \Phi(x)$ then 
    the optimal weights $W^*$ in terms of $\LGUPA$ can be found by the following, spectral algorithm.
Define:
\[
\Sigma=cov(\phi(x))
\]
\[
B= E \left[\delta(x) \delta(x)^T \right], \delta(x)=\Phi(x)-\Phi(Tx)
\]
The columns of $W^*$ are generalized eigenvectors of $(B,\Sigma)$ of minimal eigenvalue that have been rescaled to satisfy the norm constraint using  waterfilling  (algorithm~\ref{alg:waterfill}). 
\end{theorem}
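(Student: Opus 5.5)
We rewrite $\LGUPA$ as a function of $W$ and then reduce it to a separable power-allocation problem. With $y(x)=W^T\Phi(x)$, the alignment term becomes $t\,E\|W^T\delta(x)\|^2 = t\Tr(W^TBW)$, and the covariance of $y$ is $\Sigma_y=W^T\Sigma W$. The norm constraint is therefore $\Tr(W^T\Sigma W)=1$ (or $\leq 1$). The loss then reads
\[
t\Tr(W^TBW)-\tfrac12\log\det(W^T\Sigma W+\epsilon I)-\tfrac12\Tr\bigl(W^T\Sigma W(W^T\Sigma W+\epsilon I)^{-1}\bigr).
\]

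The key structural step is to change variables to $U=\Sigma^{1/2}W$, assuming $\Sigma$ is positive definite or restricting to its range. In these variables $\Sigma_y=U^TU$, the alignment term is $t\Tr(U^T\tilde B U)$ with $\tilde B=\Sigma^{-1/2}B\Sigma^{-1/2}$, and the constraint is $\Tr(U^TU)=1$. Both uniformity terms, and the constraint, are invariant under $U\mapsto UR$ for orthogonal $R$. We may therefore take $U=QD$, where $Q$ has orthonormal columns and $D$ is diagonal with entries $\alpha_k$; this is just the SVD with the right factor absorbed. The uniformity terms then depend only on $P_k=\alpha_k^2$, and the alignment term becomes $t\sum_k P_k\,q_k^T\tilde Bq_k$.

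For fixed $\{P_k\}$, we minimize $\sum_k P_k\,q_k^T\tilde Bq_k$ over orthonormal frames $Q$. By a rearrangement/Ky Fan type argument (von Neumann trace inequality), the minimum is attained when the $q_k$ are eigenvectors of $\tilde B$ with the smallest eigenvalues, with larger $P_k$ matched to smaller eigenvalues. Pulling back, $W=\Sigma^{-1/2}Q\,\mathrm{diag}(\alpha)$, so the columns are generalized eigenvectors of $(B,\Sigma)$, normalized so that $v^T\Sigma v=1$. This is consistent with the Lagrange condition $BW\Lambda=\Sigma W$ mentioned in the main text, and it gives the "minimal eigenvalue" claim.

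Substituting back yields equation~\ref{loss-alpha-paper} up to the overall factor $t$ and the constant $\epsilon=1/2t$. The problem is now to minimize a sum of one-dimensional functions $f_k(P_k)=\lambda_kP_k-\epsilon\log(P_k+\epsilon)-\epsilon^2P_k/(P_k+\epsilon)$ over the simplex $P_k\ge0$, $\sum_k P_k\le 1$. We check that each $f_k$ is strictly convex, i.e.\ $f_k''>0$. Using the term $\epsilon/(P+\epsilon)^2$ from the log and $-2\epsilon^3/(P+\epsilon)^3$ from the last term, we get $f_k''=\epsilon/(P+\epsilon)^2-2\epsilon^3/(P+\epsilon)^3$. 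This is not obviously positive, so it needs care; it may be necessary to recheck the paper's scaling of the trace term. If it holds, KKT gives the waterfilling characterization: there is a level $\nu$ with $f_k'(P_k)=-\nu$ on the active channels and $f_k'(0)\ge-\nu$ on the inactive ones. Greedy incremental allocation (Algorithm~\ref{alg:waterfill}) converges to this solution as $\eta\to0$, because for separable convex objectives under a budget the greedy choice of the steepest marginal decrease is optimal; the quantity minimized in the algorithm is $f_k'$ up to scaling.

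The main obstacle is twofold. First, we must justify rigorously the reduction to diagonal $D$ with eigenvector columns. This requires the trace inequality step together with care about degenerate eigenvalues and singular $\Sigma$. Second, the convexity of $f_k$ is needed for the global optimality of waterfilling. If strict convexity fails for small $P$, I would instead argue directly via KKT that the marginal benefit $-f_k'$ is decreasing on the relevant range.
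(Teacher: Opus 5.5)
Your argument is correct, but it reaches the eigenvector structure by a different route from the paper's. The paper forms the Lagrangian for $\Tr(W^T\Sigma W)=1$ and sets the gradient to zero. It then diagonalizes $W^T\Sigma W+\epsilon I=U^T\Lambda U$ and derives $BW_2\Lambda_2=\Sigma W_2$ for the rotated weights $W_2=WU$. So every constrained critical point has, up to an output rotation, generalized-eigenvector columns. Only after that does it substitute $W=V\,\mathrm{diag}(\alpha)$ and pass the separable problem to waterfilling. This first-order route characterizes all stationary points quickly. However, the claim that the global minimizer uses the \emph{minimal}-eigenvalue directions is left implicit: it comes only from the greedy allocation favouring small $\lambda_k$, applied over all eigenvector configurations.

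Your construction instead gives a direct global comparison. It uses the whitening $U=\Sigma^{1/2}W$, the SVD reduction via $U\mapsto UR$ invariance, and the von Neumann lower bound $\Tr(\tilde B QD^2Q^T)\ge\sum_k P_{[k]}\mu_{(k)}$, where $P_{[k]}$ is the $k$-th largest power and $\mu_{(k)}$ the $k$-th smallest eigenvalue of $\tilde B$. Every feasible $W$ is dominated by one with the same $\{P_k\}$ whose columns are $\Sigma$-normalized bottom generalized eigenvectors, with the largest power on the smallest eigenvalue. This proves global optimality and the minimal-eigenvalue clause at once, with no constraint-qualification or critical-point enumeration. The cost is treating singular $\Sigma$ and $K>\dim\Phi$ separately. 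It also makes explicit that the waterfilling runs over the $K$ smallest generalized eigenvalues; rearrangement lets you drop the sorting constraint on $P$.

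The convexity worry is a sign slip, not a real obstacle. Since $\frac{P}{P+\epsilon}$ is concave, $\frac{d^2}{dP^2}\bigl(-c\frac{P}{P+\epsilon}\bigr)=+\frac{2c\epsilon}{(P+\epsilon)^3}$. Equivalently, $-c\frac{P}{P+\epsilon}=-c+\frac{c\epsilon}{P+\epsilon}$ is convex. Hence $f_k''>0$ for all $P>-\epsilon$, and your KKT/greedy argument needs no fallback.

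Your suspicion about the scaling is still justified. Substituting $W=V\,\mathrm{diag}(\alpha)$ directly into $\LGUPA$ and multiplying by $1/t=2\epsilon$ gives coefficient $\epsilon$, not $\epsilon^2$, on the last term. That version has derivative $\lambda_k-\frac{\epsilon}{P_k+\epsilon}-\bigl(\frac{\epsilon}{P_k+\epsilon}\bigr)^2$, which is exactly the marginal used in Algorithm~\ref{alg:waterfill}. It is also convex, with $f_k''=\frac{\epsilon}{(P+\epsilon)^2}+\frac{2\epsilon^2}{(P+\epsilon)^3}$.
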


\begin{proof}

We start by writing the loss as a function of $W$.

First, the alignment loss is given by $t E_{x,T} \|W^T \phi(x) - W^T\phi(T x)\|^2$. We can write this as $t E_{\delta} \|W^T \delta\|^2$ with $\delta=\phi(x)-\phi(Tx)$ and:
\BEA
E\|W^T \delta\|^2 &=& \Tr (E W^T \delta \delta^T W) \\
&=& \Tr W^T E(\delta \delta^T) W \\
&=& \Tr  (W^T B W)
\EEA

Second the Gaussian uniformity loss is simply a function of $\Sigma_y = W^T \Sigma W$ and this gives:

\BE
\label{loss-for-W2}
\LGUPA(W)= t \Tr ( W^T B W)  - \frac{1}{2}  \log \det (W^T \Sigma W + \epsilon I) - \frac{1}{2}  \Tr (W^T \Sigma W + \epsilon I)^{-1} (W^T \Sigma W)
\EE
where we have defined $\epsilon=1/(2t)$.

We will minimize the loss under the constraint that the average squared $\ell_2$ norm of the centered representation is equal to $1$. The constraint can be written as:
\BE
\Tr (W^T \Sigma W)=1
\EE

and putting everything together we have the Lagrangian:
\BE
L(W)=t \Tr ( W^T B W)  - \frac{1}{2} \log \det (W^T \Sigma W + \epsilon I) - \frac{1}{2}  \Tr (W^T \Sigma W + \epsilon I)^{-1} (W^T \Sigma W)
+ \gamma(\Tr(W^T \Sigma W) -1)
\EE

The derivative with respect to $W$ is (we use the convention where the gradient with respect to a matrix has the dimensions of the matrix transposed):
\BE
\frac{\partial L}{\partial W} =  W^T (B+\gamma \Sigma) - \epsilon (W^T \Sigma W + \epsilon I)^{-1} W^T \Sigma - \epsilon^2 (W^T \Sigma W+\epsilon I)^{-2} W^T \Sigma
\label{eqn:derivative2}
\EE

We now write $W^T \Sigma W + \epsilon I = U^T \Lambda U$ where $U$ is an orthogonal
matrix and $\Lambda$ is diagonal. Setting the derivative in
equation~\ref{eqn:derivative2} equal to zero gives:
\BE
W^T (B+\gamma \Sigma) - \epsilon U \Lambda^{-1} U^T  W^T \Sigma - \epsilon^2 U \Lambda^{-2} U^T  W^T \Sigma=0
\EE
Multiplying on the left by $\Lambda U^T$ gives:
\BE
\Lambda U^T W^T (B+\gamma \Sigma) - \epsilon  U^T  W^T \Sigma - \epsilon^2  \Lambda^{-1} U^T  W^T \Sigma=0
\EE

or denoting by $W_2= W U$ (i.e, $W_2$ is a rotation of the
representation) we have that:
\BE
\Lambda W_2 ^T (B + \gamma \Sigma) = (\epsilon I+\epsilon^2 \Lambda^{-1})  W_2^T \Sigma 
\EE
or:
\BE
\Lambda W_2 ^T B = (\epsilon I+\epsilon^2 \Lambda^{-1}-\gamma I)  W_2^T \Sigma
  \EE
  Taking the transpose of both sides and defining $\Lambda_2$ the diagonal matrix $\Lambda_2= (\epsilon I+\epsilon^2 \Lambda^{-1}-\gamma I)\Lambda$ gives:
    \BE
    BW_2 \Lambda_2 = \Sigma W_2
    \EE
    So that the columns of $W_2$ are generalized eigenvectors of $(B,\Sigma)$.

  Now suppose we use linear algebra software to find  $[V,\Lambda]=eig(B,\Sigma)$ so that $BV = \Sigma V \Lambda$
and $V^T \Sigma V = I$. Each column of $V$ is a generalized eigenvector of $(B,\Sigma)$ but so is a rescaling of the columns where we multiply each column by $\alpha_k$.

Plugging in this form of $W$ (i.e. $W$ is a rescaling of the columns of $V$) into equation~\ref{loss-for-W2} gives:
\BE
\label{loss-alpha}
L(\alpha)= \sum_k \alpha_k^2 \lambda_k - \epsilon \sum_k \log (\alpha_k^2+\epsilon) - \epsilon^2 \sum_k \frac{\alpha_k^2}{\alpha_k^2+\epsilon}
\EE

Denoting $P_k = \alpha_k^2$  we can rewrite this as:
\BE
\label{loss-p}
L(\{P_k\})= \sum_k P_k \lambda_k - \epsilon \sum_k \log (P_k+\epsilon) - \epsilon^2 \sum_k \frac{P_k}{P_k+\epsilon}
\EE

This is analogous to power allocation in communication systems. Each channel can only get a positive amount of power allocated ($P_k \geq 0$) and since $E(\|y-\mu_y\|^2)=1$ the total power allocation is constrained $\sum_k P_k \leq 1$. The benefit of adding power to a particular channel decreases as a function of $P_k$ and is strictly convex (the second derivative is always positive). This means that the globally optimal solution for $P_k$ can be found using a variety of ``Waterfilling'' algorithms~\cite{8995606}, including algorithm~\ref{alg:waterfill}.

\end{proof}

\section{Optimal Representation with ReLU nonlinearity}

\begin{theorem}
\label{thm:relu}
    Consider the architecture shown in figure~\ref{fig:architecture} with ReLU nonlinearity, no bias terms,  and circular boundary conditions and the augmentations shown in figure~\ref{fig:easy-augs}. Assume the image dataset satisfies the conditions of theorem~\ref{gaussian-dft-theorem} and the size of the filters in the first layer is equal to the size of the image. If we set the convolutional filters to be non-DC sinusoids (the real part of columns of the DFT matrix) whose frequency is below the cutoff frequency and the weights in the last layer to unit vectors that are scaled to perform whitening, then the CNN computes a globally optimal representation in terms of the contrastive loss $\LGUPA$.
\end{theorem}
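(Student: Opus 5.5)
The proof mirrors that of theorem~\ref{cyclic-theorem}. We show that the CNN with ReLU computes, up to a positive constant, $y_i(x)=|x^F[k_i]|/\sqrt{g(k_i)}$ (equation~\ref{partial-whitening-eq-relu}). This representation has zero alignment loss and a white covariance, so theorem~\ref{thm:white-is-optimal} gives global optimality.

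\textbf{Step 1: the ReLU/GAP channel computes an amplitude.} Take a filter $h_c[n]=\cos(2\pi k_c n/N)$, of the same size as the image, with circular boundary conditions. The convolution output is
\[(x*h_c)[m]=\frac{1}{N}\,\mathrm{Re}\!\left(x^F[k_c]\,e^{2\pi i k_c m/N}\right)\cdot\text{const}=A_c\cos(2\pi k_c m/N+\theta_c),\]
with $A_c\propto|x^F[k_c]|$ and $\theta_c$ the phase of $x^F[k_c]$. So the convolution output is a pure sinusoid at frequency $k_c$. After the ReLU (no bias), global average pooling yields $\frac{1}{N}\sum_m \max(0,A_c\cos(2\pi k_c m/N+\theta_c))$.

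For the exact statement, I would use the fact that averaging the rectified cosine over a full number of periods gives $A_c/\pi$. Here the main obstacle arises: for finite $N$ the sampled average depends weakly on $\theta_c$ and on $\gcd(k_c,N)$. I would handle this either asymptotically ($N\to\infty$, consistent with theorem~\ref{gaussian-dft-theorem}), or by pairing each cosine filter with its negation and a sine, which makes the sum phase-independent. The fallback is to invoke the same asymptotic regime as theorem~\ref{gaussian-dft-theorem} and state the result as holding in the limit. In either case $\phi_c(x)=\frac{1}{\pi}\,|x^F[k_c]|\cdot\text{const}$.

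\textbf{Step 2: invariance, so the alignment loss is zero.} $|x^F[k]|$ is invariant to circular shifts. For non-DC $k$, a brightness change affects only the DC coefficient, and a contrast scaling multiplies all amplitudes by the same positive factor, which is removed by the final normalization. For $k$ below the cutoff, the ideal blur leaves $x^F[k]$ unchanged. Hence $y(Tx)=y(x)$ for all three augmentations, and the alignment term of $\LGUPA$ vanishes.

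\textbf{Step 3: whitening, so the uniformity loss is minimized.} By theorem~\ref{gaussian-dft-theorem}, $x^F[k]$ is circular complex Gaussian with variance $g(k)$, so $|x^F[k]|$ is Rayleigh distributed. Its variance is $(1-\pi/4)g(k)$, and the coefficients for distinct frequencies are pairwise independent. Choose the frequencies $k_i$ distinct and not conjugate pairs, and set the last-layer weights to unit vectors scaled by $1/\sqrt{g(k_i)}$. The covariance of $y$ is then $(1-\pi/4)/\pi^2\cdot\text{const}\cdot I$, i.e. isotropic.

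After rescaling to meet the constraint $E\|y-\mu_y\|^2=1$, theorem~\ref{thm:white-is-optimal} shows that this covariance minimizes $L_{GU}$. The rescaling is a global positive factor and preserves zero alignment. Both terms of $\LGUPA$ therefore attain their individual minima, so the representation is globally optimal.
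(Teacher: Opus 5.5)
Your argument is the paper's: zero alignment from the three invariances, a diagonal covariance from the pairwise independence of the DFT coefficients, rescaling to an isotropic covariance, and then theorem~\ref{thm:white-is-optimal}. The one real difference is that you route everything through the identity $\phi_c(x)\propto|x^F[k_c]|$. The paper only states that identity in a remark after its proof, and it is what creates the obstacle you flag in Step~1.

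The paper's proof never needs it. Invariance to cyclic crops holds exactly for circular convolution followed by any pointwise nonlinearity and global average pooling, because an integer shift just re-indexes the average. For whitening it suffices that channel $c$ is a function of the single coefficient $x^F[k_c]$. The channels are then independent, and dividing each by its actual standard deviation, rather than by $\sqrt{g(k_c)}$, whitens regardless of any residual dependence on $\theta_c$ or on $\gcd(k_c,N)$. So Step~1 can be dropped, and no asymptotics beyond those already in theorem~\ref{gaussian-dft-theorem} are needed. Your explicit exclusion of conjugate pairs $k_j=N-k_i$ is a useful detail the paper leaves implicit.

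Two corrections. First, the pairing remedy does not work. For $k_c=N/4$ the convolution output cycles through $A_c\cos\theta_c,\,-A_c\sin\theta_c,\,-A_c\cos\theta_c,\,A_c\sin\theta_c$, so ReLU and GAP give $\tfrac{A_c}{4}(|\cos\theta_c|+|\sin\theta_c|)$. The negated cosine and the sine produce the same four values in a different order, so the sum is still phase dependent. Summing channels into one output would in any case violate the hypothesis that the last-layer weights are unit vectors.

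Second, a caveat that applies equally to the paper's proof. Contrast invariance in Step~2 comes from the per-sample normalization $z=y/\|y\|$, whereas the isotropic covariance in Step~3 is that of the unnormalized $y$. With i.i.d.\ positive coordinates, $z$ is far from white. For $K=2$, $z_2^2$ is uniform on $[0,1]$, and $\mathrm{Cov}(z)$ has eigenvalues $\tfrac{1}{18}\mp\bigl(\tfrac49-\tfrac{\pi}{8}\bigr)\approx 0.004$ and $0.107$, along $(1,1)$ and $(1,-1)$ respectively. So, as written, the argument (yours and the paper's) establishes optimality only for the unnormalized $y$. That covers crops, blur and brightness shifts, but not contrast scaling.
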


\begin{proof}
The proof follows the proof of theorem~\ref{cyclic-theorem}. We first show that these weights yield a representation that achieves zero alignment loss. Since we use periodic boundary conditions in the convolution, the representation is invariant to cyclic crops. Since we select sinusoids with frequencies below the cutoff frequency, the representation is invariant to ideal blur. Since the architecture contains no bias terms, multiplying the image by a constant will multiply the output by the same constant, so after normalization the output is invariant to linear jitter. So the alignment loss is zero. 

We now consider the Gaussian uniformity loss. Since the weights are unit vectors, each neuron in the output is a function of only one channel and each channel, in turn, only depends on one DFT coefficient (since the filters are sinusoids whose size is equal to the size of the image) so the neurons are pairwise independent when the conditions of theorem~\ref{gaussian-dft-theorem} are satisfied. This means that the representation has a covariance matrix that is diagonal and dividing each unit by the standard deviation of its activity will make the covariance of the output be white so it also minimizes the Gaussian uniformity loss. 


\end{proof}

Note that when we use ReLU rather than squaring, and the filters are sinusoids, the feature vector $\phi(x)$ measures the amplitude rather than the power in individual frequencies. To see this, we denote by $h_c$ the pure sinusoid $h_c[n]=\frac{1}{2}\cos (2 \pi \frac{k_c}{N} n)$ and by $r_c = x \star h_c$ (where the convolution is circular) then:
\[
r_c^F[k]= x^F[k] h_c^F[k]
\]
with:
\[
h_c^F[k]= \left\{ \begin{array}{cc}
     0 & k \neq k_c, N-k_c \\
     1 & otherwise 
\end{array} \right.
\]
and since $x[n]$ is a real signal, $|x^F[k_c]|=|x^F[N-k_c]|$ and $\beta=angle(x^F[k_c])=-angle(x^F[N-k_c])$. So we can write:
\[
r_c^F[k] = |x^F[k_c]| e^{i \beta k} h_c^F[k] 
\]
and taking the IDFT of both sides gives:
\[
r_c= |x^F[k_c]| (T h_c)
\]
where $T$ is a circular translation operator. This means that after ReLU and global average pooling, the result is proportional to $|x^F[k_c]|$.  This implies that the output of the CNN can be written as:
\BE
y_i \propto |x^F[k_i] |
\EE
and to satisfy whitening we need to divide $y_i$ by its standard deviation. Since $x^F[k_i]$ is a  complex Gaussian, its absolute value is a Rayleigh random variable whose standard deviation is proportional to the standard deviation of $x^F[k_i]$ which is in turn equal to $\sqrt{g(k_i)}$.

\section{Cyclic Translations and Crop Translations}
\label{cyclic-sec}
To formalize the effect of crop translations on the DFT of a signal, it is easier to work in the continuous domain. We assume that there are two signals:
\BEA
x_1(t) &=& W(t) x(t) \\
x_2(t) &=& W(t) x(t - \Delta)
\EEA

Where $W(t)$ is a window function that defines the crop. We are
interested in the difference between the power spectrum of $x_1$,
$|X_1(\omega)|^2$ and the power spectrum of $x_2$,
$|X_2(\omega)|^2$. 


We can also write:
\BEA
x_2(t) &=& W(t-\Delta) x(t - \Delta) + x(t-\Delta) (W(t)-W(t-\Delta))
\\
&=& W(t-\Delta) x(t-\Delta) + \eta(t)
\EEA
where we have defined $\eta(t)=x(t-\Delta) (W(t)-W(t-\Delta))$.

Taking the Fourier Transform of both sides gives:
\BE
X_2^F(\omega)= e ^{i \omega \Delta} X_1^F(\omega) + \eta^F(\omega)
\EE
Thus the Fourier transform of the second crop $x_2$ is given by a phase shift of the Fourier transform of the  first crop plus a perturbation $\eta^F(\omega)$ whose magnitude can be bounded based on the size of the translation. This is similar to the case of cyclic translation plus noise, except the perturbation $\eta^F(\omega)$ is not independent of $x$. 

Figure~\ref{fig:cropvscyclic} (top) shows scatter plots of the squared DFT of two different crop translations  for three different CIFAR10 images. The bottom figures show the scatter plots of the squared DFT when the images are related by cyclic translation plus noise. In both cases, power spectra are highly correlated and the differences between the two power spectra  is larger for frequencies that have large power.

\begin{figure*}
\includegraphics[width=1\columnwidth]{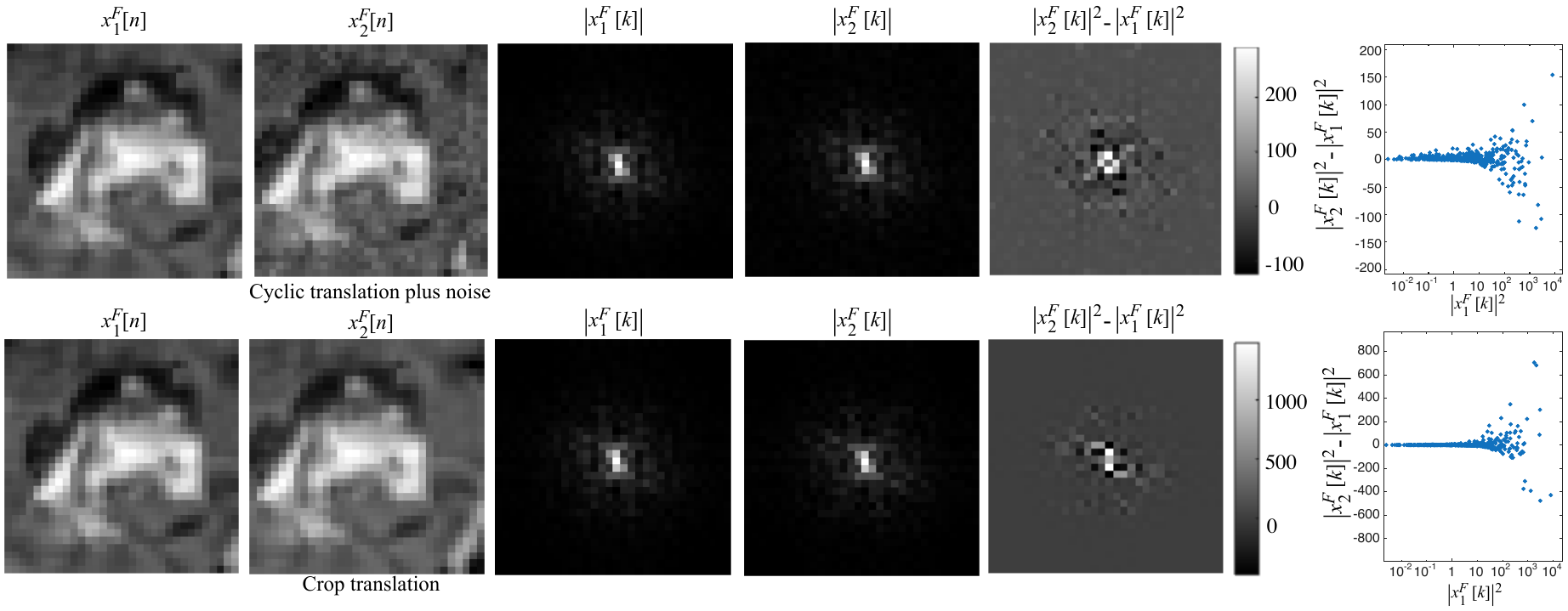}
\caption[]{Typically crop translations are used in CL. In the proof we assume cyclic translations plus noise. This figure shows that they behave similarly in terms of their influence on the DFT. In both cases, the squared DFT features are similar and are more dissimilar for higher values. }
\label{fig:cropvscyclic}
\end{figure*}

\section{Optimal Weights for simple CNN with cyclic crop plus noise, linear jitter and blur}
\label{optimal-weights-hard-sec}
\begin{theorem}
\label{theorem:3-appendix}
For the CNN architecture of figure~\ref{fig:architecture}  with squaring nonlinearity. Assume that the augmentations include cyclic crop plus noise, linear jitter and  blur with an arbitrary blur kernel. Then for any stationary signal that satisfies the conditions of theorem~\ref{gaussian-dft-theorem} the optimal contrastive loss $\LGUPA$ is achieved by the following weights. The first layer filters are sinusoids so that the representation layer $\phi(x)$ measures the power of the signal in $K$ different frequencies  and the weights in the projection layer multiply the power in each frequency by a positive constant.  The chosen frequencies and their weights can be computed using the
 waterfilling algorithm~\ref{alg:waterfill}.
 
\end{theorem}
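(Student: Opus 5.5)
The argument has two steps. First, rewrite the CNN output as a linear function of the squared DFT features. Second, invoke theorem~\ref{main-theorem-appendix} and show that both matrices it needs are diagonal in the Fourier basis.

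\emph{Step 1: linear reparametrization.} Let $r_c = x \star h_c$ with circular convolution. Squaring and global average pooling give $\phi_c(x) = \frac{1}{N}\sum_n r_c[n]^2$. By Parseval, and because $r_c^F[k] = x^F[k] h_c^F[k]$,
\[
\phi_c(x) = \frac{1}{N^2}\sum_k |h_c^F[k]|^2 |x^F[k]|^2 ,
\]
with the normalization matching equation~\ref{W-eff-eq}. Hence $y_i = \sum_c w_{ic}\phi_c(x) = \sum_k W^{eff}(i,k)\Phi_k(x)$, where $\Phi_k(x) = |x^F[k]|^2$. This is an upper-bound argument. Whatever filters and projection weights are chosen, the output is a linear map applied to the fixed feature vector $\Phi$. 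So the loss achievable by the CNN is at least the optimum of $\LGUPA$ over all linear maps $W^T\Phi$. If that unconstrained optimum can be written as some $W^{eff}$ realizable by the CNN, it is the CNN optimum.

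\emph{Step 2: diagonal $\Sigma$ and $B$.} Apply theorem~\ref{main-theorem-appendix} with features $\Phi$.

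For $\Sigma$: by theorem~\ref{gaussian-dft-theorem} the coefficients $x^F[k]$ are pairwise independent. So $\Sigma = \mathrm{cov}(\Phi)$ is diagonal, with entries $g(k)^2$, after identifying the conjugate pair $k, N-k$ as a single feature.

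For $B$: I treat each augmentation as acting frequency by frequency.
\begin{itemize}
\item Cyclic crop plus noise gives $(Tx)^F[k] = e^{i\theta k}x^F[k] + n^F[k]$ with independent noise. Then $\delta_k = -2\,\mathrm{Re}(e^{i\theta k}x^F[k]\overline{n^F[k]}) - |n^F[k]|^2$.
\item Blur with kernel $b$ gives $\delta_k = (1 - |b^F[k]|^2)|x^F[k]|^2$.
\item Linear jitter multiplies all non-DC features by a common random factor. For the non-DC features this gives $\delta = (1-a^2)\Phi$.
\end{itemize}
The off-diagonal terms $E[\delta_k\delta_j]$ should then factor into a sum over products of independent zero-mean quantities, plus terms involving $E\Phi_k E\Phi_j$.

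\emph{Main obstacle.} The hard part is proving that $B$ is diagonal. The jitter term $(1-a^2)^2\Phi\Phi^T$ and the noise-power term $|n^F|^2$ both have nonzero means. Their outer products therefore create rank-one off-diagonal contributions of the form $E\Phi E\Phi^T$ and $\sigma_n^4\mathbf{1}\mathbf{1}^T$. I will handle these in three ways:
\begin{itemize}
\item Treat the contrast part of linear jitter as removed by the final normalization, as in theorem~\ref{cyclic-theorem}, so that only the brightness (DC) part remains. The DC part only affects the $k=0$ feature, which we exclude.
\item Model noise of random amplitude so that its contribution enters only through the centred, independent per-frequency fluctuations.
\item If a rank-one remainder survives, argue that it is annihilated by the constraint direction. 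Alternatively, use centred features, since $\LGUPA$ is translation invariant, and state the diagonal result for the centred alignment matrix.
\end{itemize}

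\emph{Conclusion.} Once $B$ and $\Sigma$ are both diagonal, the generalized eigenvectors of $(B,\Sigma)$ are scaled standard basis vectors $e_k$. The eigenvalues are $\lambda_k = B_{kk}/g(k)^2$, which I expect to be nondegenerate for generic spectra. Theorem~\ref{main-theorem-appendix} and algorithm~\ref{alg:waterfill} then pick the frequencies with $P_k > 0$ and give the optimum $W^* = \sum_k \sqrt{P_k}\,e_k e_{i(k)}^T/g(k)$.

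It remains to realize this optimum with the CNN. For each selected $k$, take $h_c$ to be a sinusoid at frequency $k$, so that $|h_c^F|^2$ is supported on $\{k, N-k\}$. Set $w_{ic}$ to be the positive scale $\sqrt{P_k}/g(k)$, times the normalizing constant, on a single output unit. This reproduces $W^*$ exactly. Finally, by the rotation invariance noted after theorem~\ref{cyclic-theorem}, any orthogonal mixing of the outputs is also optimal.
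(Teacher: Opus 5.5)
Your route is the paper's: write the output as $W^{eff}\Phi(x)$ with $\Phi_k=|x^F[k]|^2$, invoke Theorem~\ref{main-theorem-appendix}, and show that $\Sigma$ and $B$ are diagonal in the Fourier basis. Your lower-bound-plus-realizability framing and your merging of the pair $k,N-k$ are, if anything, more careful than the paper. The gap is the step you flag yourself: you never establish that $B$ is diagonal, and none of your remedies establishes it.

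\emph{Noise.} The paper's device is to add i.i.d.\ noise to \emph{both} views. Then $\delta_k=2\,\mathrm{Re}\bigl(x^F[k]\overline{(\eta_1[k]-\eta_2[k])}\bigr)+|\eta_1[k]|^2-|\eta_2[k]|^2$ is zero-mean and independent of $\delta_j$ for $k\neq j$, so $E[\delta_k\delta_j]=0$. A random amplitude on one view does not remove $E|n^F[k]|^2$. Making the level random only inflates the cross moment $E\bigl[|n^F[k]|^2|n^F[j]|^2\bigr]$.

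\emph{Centering.} This cannot help. $\delta=\Phi(x)-\Phi(Tx)$ is unchanged when $\Phi$ or $y$ is shifted by a constant, so the translation invariance of $\LGUPA$ leaves $E[\delta\delta^T]$ untouched. Replacing $B$ by $\mathrm{Cov}(\delta)$ changes the objective. The constraint $\Tr(W^T\Sigma W)=1$ does not involve $E\delta$, so it annihilates nothing.

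\emph{Contrast.} Removing it via the final normalization takes you out of the linear model $y=W^T\Phi$ on which Theorem~\ref{main-theorem-appendix} rests. That drops contrast from the statement rather than treating it.

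\emph{Blur.} You pass over it, but it has the same defect: $\delta_k=c_k\Phi_k$ with $c_k=1-|b^F[k]|^2$, so $B_{kj}=c_kc_j\,g(k)g(j)\neq0$ for $k\neq j$.

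These rank-one terms are not a technicality; they move the minimizer.

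For contrast, $B=E(1-a^2)^2\,(\Sigma+gg^T)$, so the alignment term contains $E(1-a^2)^2\,\|W^Tg\|^2=E(1-a^2)^2\|E\,y\|^2$. In $\Sigma$-whitened coordinates $B$ is proportional to $I+\mathbf{1}\mathbf{1}^T$. A single-frequency direction therefore has generalized Rayleigh quotient $2E(1-a^2)^2$, while a zero-mean output such as $\Phi_k/g(k)-\Phi_j/g(j)$ has $E(1-a^2)^2$. Positive weights on the nonnegative features $\Phi_k$ can never make $E\,y=0$.

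For blur with all $c_k>0$, set $\tilde B=\Sigma^{-1/2}B\Sigma^{-1/2}=\mathrm{diag}(c_k^2)+cc^T$. Then $\tilde Be_k=c_k^2e_k+c_kc$, which is not parallel to $e_k$, so no single frequency is even a generalized eigenvector.

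Hence, within the unnormalized linear model, no repair of this step can yield the ``positive constant per frequency'' form. You have to change the setup: two-sided noise as the paper does, plus a separate treatment of the multiplicative augmentations. Note that the paper's own proof does not supply this step for blur and contrast either. It lists only the diagonal entries $(1-|h^F[k]|^2)^2g^2[k]$ and $(1-a^2)^2g^2[k]$ and omits the $gg^T$-type cross terms you correctly spotted.
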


\begin{proof}
The proof proceeds in two steps. First we show that the output of this particular CNN can be written as $y(x)=W\Phi(x)$ where the set of features $\Phi(x)$ include  the squared DFT of $x$, $ |x^F[k]|^2$.

We denote the filters in the first layer as $h_c$, the output channels of the convolutional layer $x_c = x \star h_c$ and  denote by $y_i(x)$ the $i$th output of the network for input $x$ then:
\BEA
y_i(x) &=& \sum_c w_{ic} \frac{1}{N} \sum_n \rho(x_c[n]) \\
&=& \sum_c w_{ic} \frac{1}{N} \sum_n  x^2_c[n]  \\
\label{parseval-eq}
&=& \sum_c w_{ic} \frac{1}{N} \sum_k  |x^F_c[k]|^2 \\ 
&=& \sum_c w_{ic} \frac{1}{N} \sum_k |x^F[k]|^2 |h^F_c[k]|^2 \\
&=& \sum_k |x^F[k]|^2 \left( \frac{1}{N} \sum_c w_{ic} |h^F_c[k]|^2 \right)
\label{effective-weights-eq}
\EEA
where the equality in equation~\ref{parseval-eq} follows from Parseval's identity.

By theorem~\ref{main-theorem}, the optimal $W$ are generalized eigenvectors of $(B,\Sigma)$. We now derive the form of the two matrices $B$ and $\Sigma$.  The matrix $\Sigma$ is the covariance of the squared DFT and since the DFT coefficients are pairwise independent (theorem~\ref{gaussian-dft-theorem}), the matrix $\Sigma$ is diagonal.

 We start with the case of cyclic translation plus noise. Since the squared DFT is invariant to cyclic translations, the matrix $B$ is equal to $B^{noise}$ with $B^{noise} = E (\delta \delta^T)$ where $\delta$ is the difference between the squared DFT of the original signal corrupted with two independent realizations of added Gaussian noise (to simplify the derivation, we assume that both images are augmented with noise): 
\[
\delta[k]=|x^F[k]+\eta_1[k]|^2-|x^F[k]+\eta_2[k]|^2
\]
and $\eta$ is white Gaussian noise. Direct calculation shows that:$B^{noise}[k,k] =16 \sigma^2 g[k]+ 8 \sigma^4$ and
  $B^{noise}[k,l]=0$ for $k \neq l$.

  For the blur augmentation, the matrix $B^{blur}$ is equal to $E (\delta \delta^T)$ where $\delta$ is the difference between the squared DFT of a clean signal and the same signal blurred with blur kernel $h$. This means that it is a diagonal matrix with values $(1-h^2[k])^2 g^2[k]$ on the diagonal. 

  For the contrast augmentation, the matrix $B^{contrast}$ is equal to $E (\delta \delta^T)$ where $\delta$ is the difference between the squared DFT of a clean signal $x[n]$ and the same signal after a linear transformation $ax[n]+b$. This means that $B^{contrast}$  is a diagonal matrix with values $g^2[k](1-a^2)^2$ on the diagonal for all the non-DC frequencies.

The fact that for all three  augmentations  the matrix $B$ is diagonal  and the matrix $\Sigma$ is diagonal  implies that generalized eigenvectors of $(B,\Sigma)$ are  unit vectors in frequency space. The generalized eigenvalue corresponding to each eigenvector can be computed by taking advantage of the diagonal form of the matrices. For the crop translations plus noise we have:
\[
  \lambda_k =  \frac{16 \sigma^2
    g[k] +  8 \sigma^4}{g^2[k]}
  \]
  
For the blur augmentation we have:
\[
  \lambda_k = (1-|h^F[k]|^2)^2
  \]

 For the contrast augmentation we have:
\[
\lambda_k = (1-a^2)^2
\]

\end{proof}

The results in figure~\ref{sensitivities-fig} were computed by running algorithm~\ref{alg:waterfill} for the case of cyclic translations plus noise $\lambda_k =  \frac{8 \sigma^2 g[k] +  4 \sigma^4}{g^2[k]}$. The expected power spectrum $g[k]$ was computed numerically for CIFAR10 using a Hann window and for the fractal images, we used the analytical form of the expected power spectrum (e.g. $g[k] \propto \frac{1}{\\k\|^{\alpha}}$). The value of $\sigma$ and the temperature were the same for all experiments ($\sigma=0.0014, t=1$). Different values give qualitatively similar results but the distance of the rings from the origin changes. 

\section{Computing sensitivities of CNNs}
We define the sensitivity of a representation  $y$ to frequency $k$ as the $\ell_1$ norm of $y(x)$ when $x$ is an image of a pure sinusoid of frequency $k$. For the simple CNN we are considering this can be computed directly from the weights. Recall that:
\BEA
y_i(x) &=& \sum_k |x^F[k]|^2 \left( \frac{1}{N} \sum_c w_{ic} |h^F_c[k]|^2 \right) \\
   &=& \sum_k W^{eff}(i,k) |x^F[k]|^2
\EEA
We define the sensitivity as $S(k)=\sum_i |W^{eff}(i,k)|$. All the results shown in the paper use this definition of sensitivity, but we have found the results to be nearly identical when we replace the $\ell_1$ norm with the $\ell_2$ norm, or
$\tilde{S}(k)= \sqrt{\sum_i |W^{eff}(i,k)|^2}$.

\section{Partial Whitening using PCA}

\label{linear-whitening-eq}
We follow the code of~\cite{knnwebpage}. Let $C,\mu$ be the covariance and mean of the dataset, and let $V,\Lambda$ be the eigenvectors and eigenvalues of $C$. Let $V_K,\Lambda_K$ be the first $K$ columns of $V,\Sigma$ and let $I_K$ be the first $K$ columns of the identity matrix.  We define the matrix $W$ using: 
\[
W= V_K \left( \sqrt{\Lambda_K+\lambda I_K}\right) ^{-1}. 
\]

The representation of an input $x$ is now given by $y(x)=W^T(x-\mu)$. It is easy to see that when $\lambda<<\Lambda(j,j)$ the representation $y$ has unit variance in direction $v_j$ and when $\lambda >> \Lambda(j,j)$ the variance of $y$ in direction $v_j$ goes to zero. Thus this linear mapping performs partial whitening.

When performing KNN classification, we followed the code  of~\cite{knnwebpage} in augmenting each training example with its horizontal flip. This gave an accuracy of 47\%. Without the horizontal flip, the accuracy was 45\%.

\begin{figure}
\centerline{
\begin{tabular}{ccc}
$\frac{1}{f^{1/2}}$ & $\frac{1}{f}$ & CIFAR10 \\
\includegraphics[width=0.25\columnwidth]{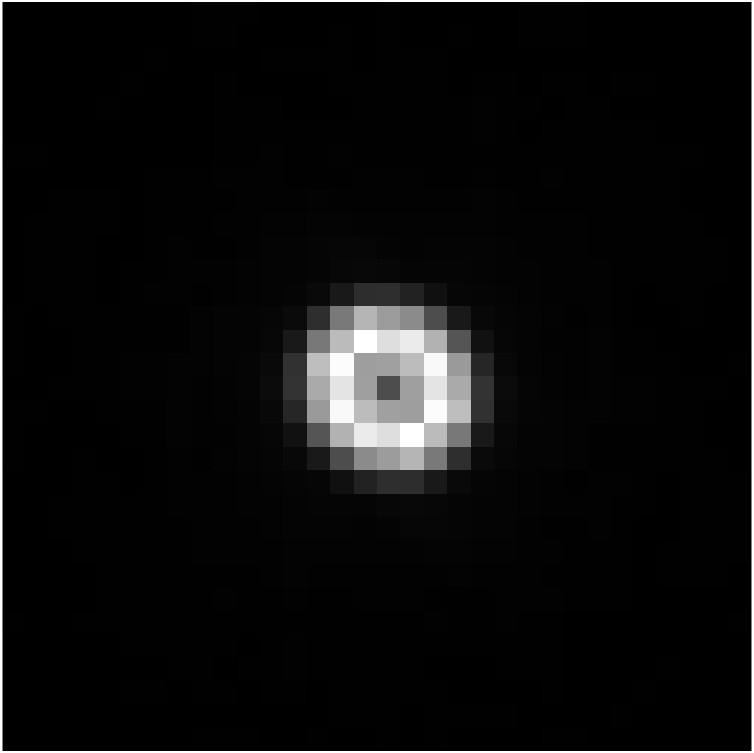} &
\includegraphics[width=0.25\columnwidth]{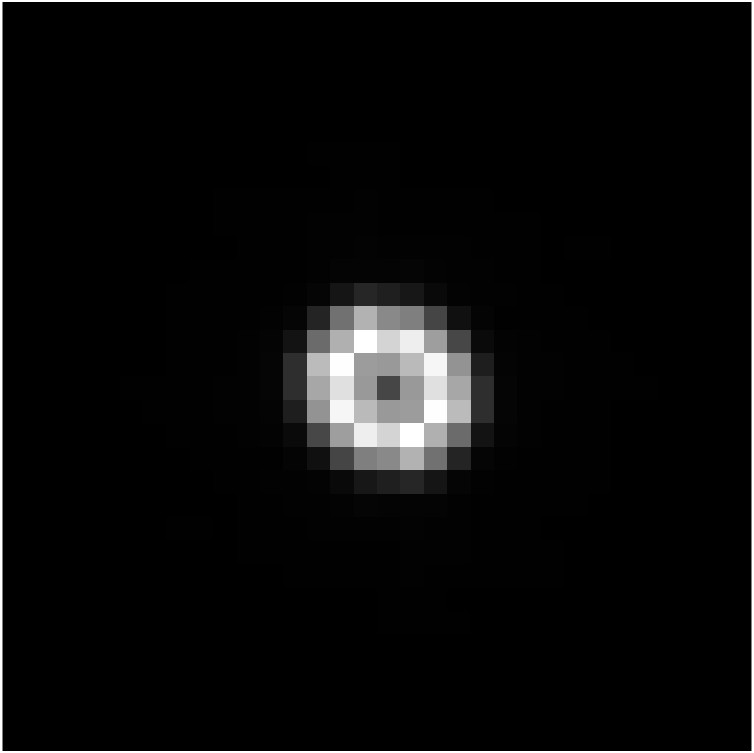} &
\includegraphics[width=0.25\columnwidth]{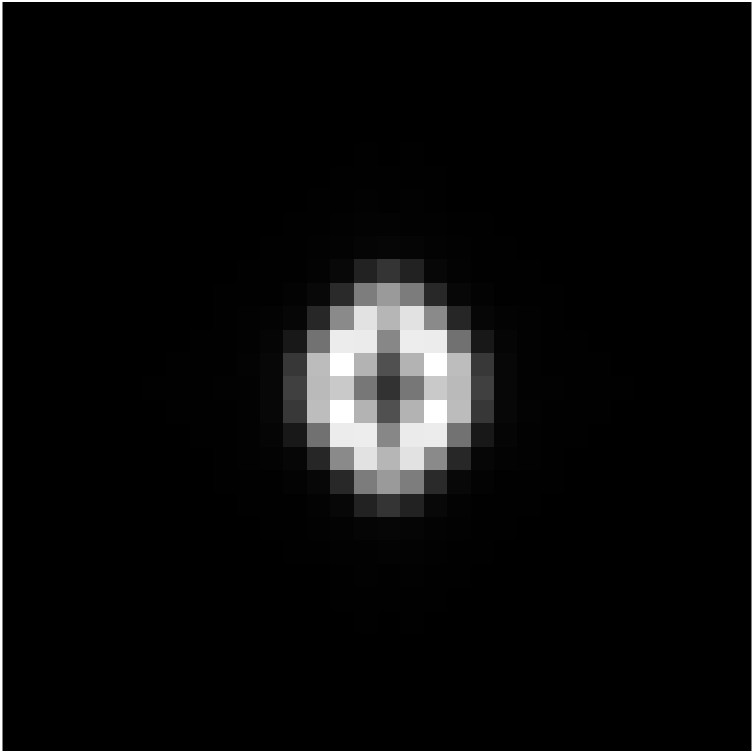}
\end{tabular}
}
\caption[]{The frequency sensitivity of linear partial whitening for different image datasets.  Similar  to the predicted sensitivities of the optimal CNN (figure~\ref{sensitivities-fig}) and the sensitivities of CNNs trained with SGD (figures~\ref{visualization-figure},\ref{visualization-figure2}), the representation is mostly sensitive to frequencies that lie on a ring in frequency space for the fractal images and on a diamond for CIFAR10.}
\end{figure}

\section{Experiments with different augmentations}

\begin{figure*}
\centerline{
\begin{tabular}{c}
\includegraphics[width=\textwidth]{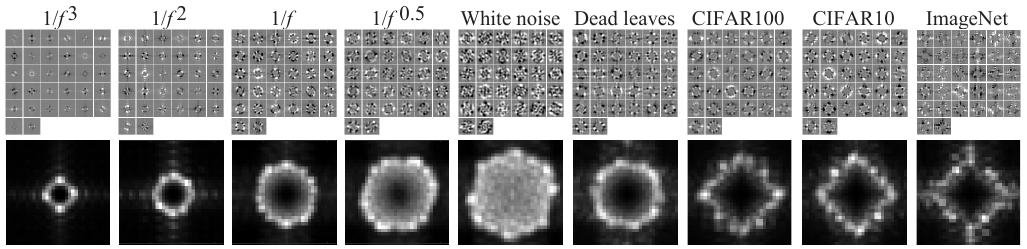}
\end{tabular}}
\caption[]{Summary of our experimental results with quadratic nonlinearity. For all the datasets, the filters in the first layer converge to sinusoids (figure~\ref{teaser-fig}). All the datasets learn to compare frequencies with similar expected power (top) and the sensitivity is qualitatively predicted by the theory (bottom). For the fractal images, the representation is sensitive to frequencies that lie on a ring, and the radius of the ring decreases with $\alpha$. For real images, the representation is sensitive to frequencies that lie on a diamond. Compare to figure~\ref{sensitivities-fig}.}
\label{visualization-figure}
\end{figure*}

  Figure~\ref{augmentations-figure-appendix} shows the learned filters with different augmentations as well as the frequency sensitivity of the learned networks.  As expected from the proof of theorem~\ref{theorem:3} the network learns filters that are highly localized in frequency space for a wide range of augmentations. Since the uniformity loss is always minimized by measuring individual DFT coefficients, as long as the alignment loss is low with simple functions of these coefficients, the network will learn sinusoids. In the case of horizontal flip, the network chooses filters that are localized in frequency  that are also horizontally symmetric. The sensitivity profiles reveal that for all the augmentations the network is learning to perform partial whitening: the sensitivity increases with frequency but frequencies that have low expected power are ignored. For the squaring nonlinearity, the sensitivity profiles look like diamonds, but the size of the diamonds depends on the augmentation as well as details of the learning procedure (e.g. number of iterations, weight decay). For ReLU nonlinearity the pictures are noisier, but agree with the general trend.
  
\begin{figure*}
\centerline{
\begin{tabular}{c}
\includegraphics[width=\textwidth]{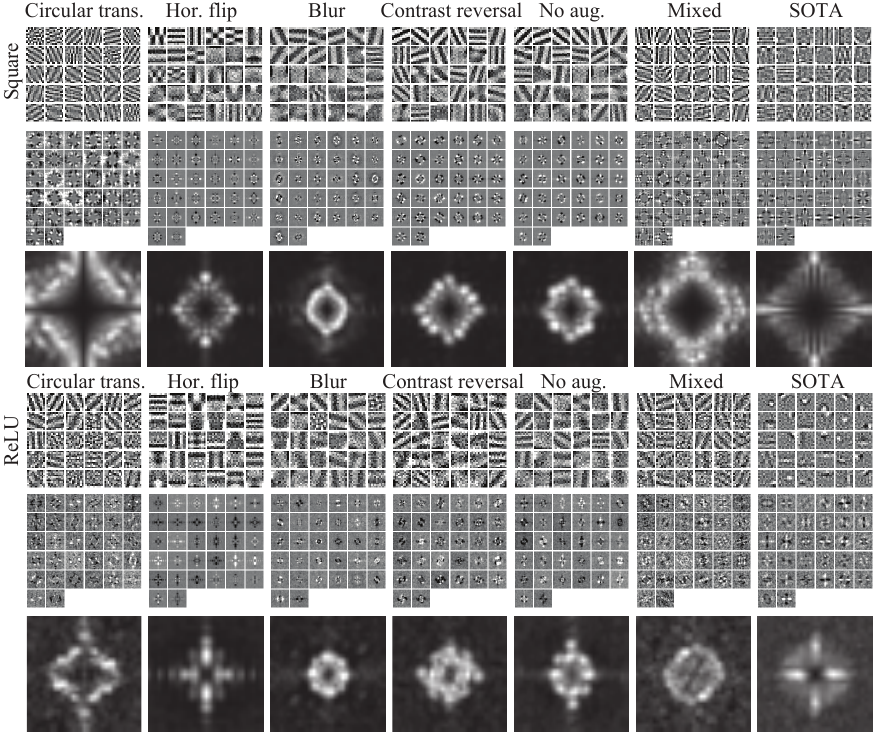}
\end{tabular}}
\caption[]{Summary of our experimental results with different augmentations and quadratic and ReLU nonlinearities. This figure extends the content of fig.~\ref{augmentations-figure} by adding the sensitivities in addition to the filters. Results are after 2000 training epochs.}
\label{augmentations-figure-appendix}
\end{figure*}

\section{Experiments with nonstationary datasets}
Figures~\ref{celeba-dft-fig},\ref{celeba-kernels} show experiments with a highly nonstationary dataset, CelebA. Even though the statistics are highly location dependent (see left column of figure~\ref{celeba-dft-fig}), the DFT coefficients are still approximately Gaussian and the covariance of squared DFT coefficients is still approximately diagonal. As a result, the CNN still learns sinusoidal filters.

\begin{figure*}
  \centerline{
  \includegraphics[width=1\columnwidth]{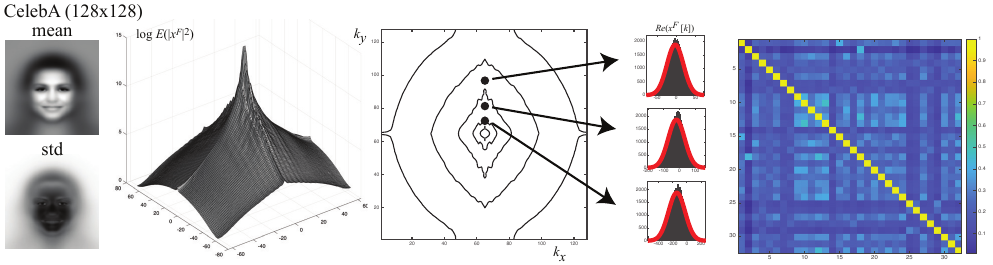}
  }
  \caption[]{CelebA leads to approximately Gaussian DFT coefficients. 
  }
  \label{celeba-dft-fig}
 \end{figure*}

\begin{figure*}
  \centerline{
  \includegraphics[width=.6\columnwidth]{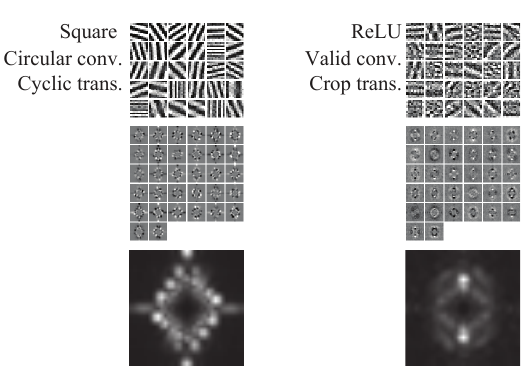}
  }
  \caption[]{Results when training with CelebA scaled to 128$\times$128. Left, results with square nonlinearity, when the augmentations are cyclic translations and the network uses circular convolutions. Right, results with ReLU, valid convolutions and augmentations are crop translations.
  }
  \label{celeba-kernels}
 \end{figure*}

\section{Visualizing all learned filters}

 In the main paper we display the 30 filters with highest variance. Here we show all 256 filters, both in the spatial domain and the Fourier domain. We also show the best fitting sinusoid for each filter and the difference between them. Figures~\ref{imagenet-kernels},\ref{imagenet-kernels-b} show all 256 kernels when trained on ImageNet with cyclic translations (figure~\ref{imagenet-kernels}) and crop translations (figure~\ref{imagenet-kernels-b}). Almost all filters are sinusoids (two peaks in the Fourier domain) and the difference between the best fitting sinusoid and the learned filter is mostly noise.

 Figures~\ref{flip-kernels},\ref{flip-kernels-b} show all 256 filters when the augmentations are horizontal flip. Now the kernels are still highly localized in Fourier space but sometimes they are a combination of two sinusoids (four peaks in Fourier space) rather than a single sinusoid. In these cases, the difference to the sinusoidal fit (bottom), is another sinusoid but with a horizontally flipped frequency.  
\begin{figure*}
  \centerline{
  \includegraphics[width=1\columnwidth]{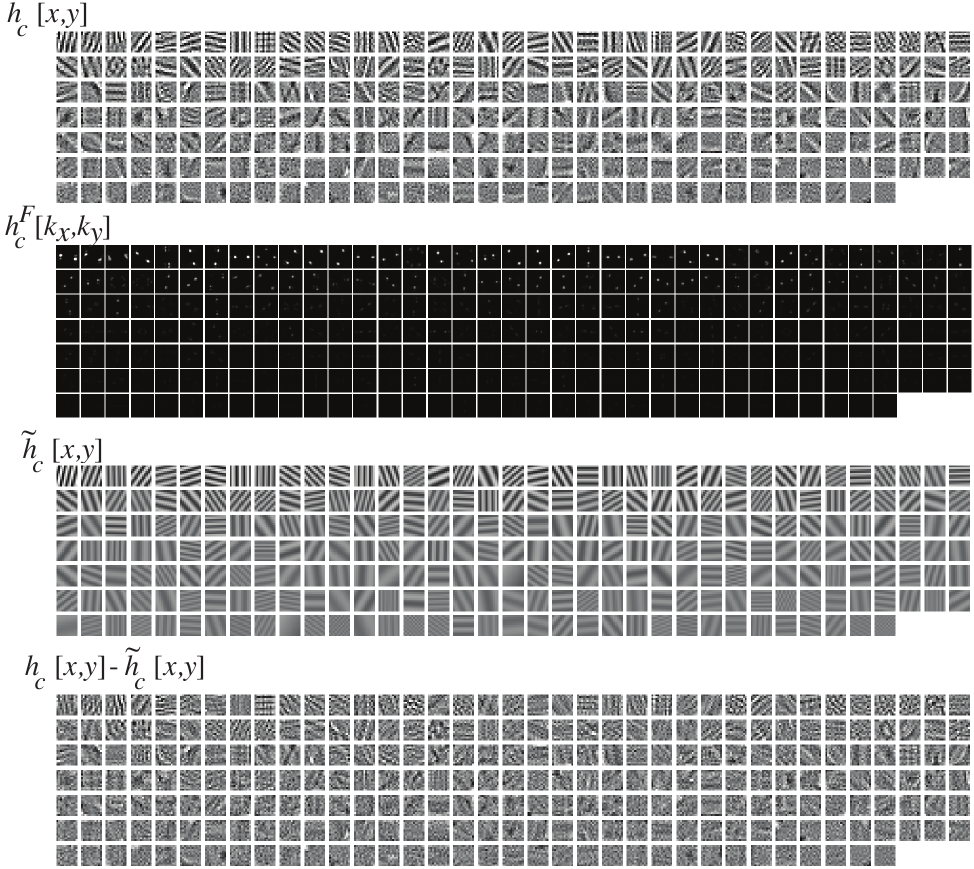}
  }
  \caption[]{Kernels learned when training with ImageNet scaled to 128$\times$128. Images are augmented with cyclic translations. The network uses squaring non-linearities, circular convolutions, 256 channels, and no bias terms. 
  }
  \label{imagenet-kernels}
 \end{figure*}

\begin{figure*}
  \centerline{
  \includegraphics[width=1\columnwidth]{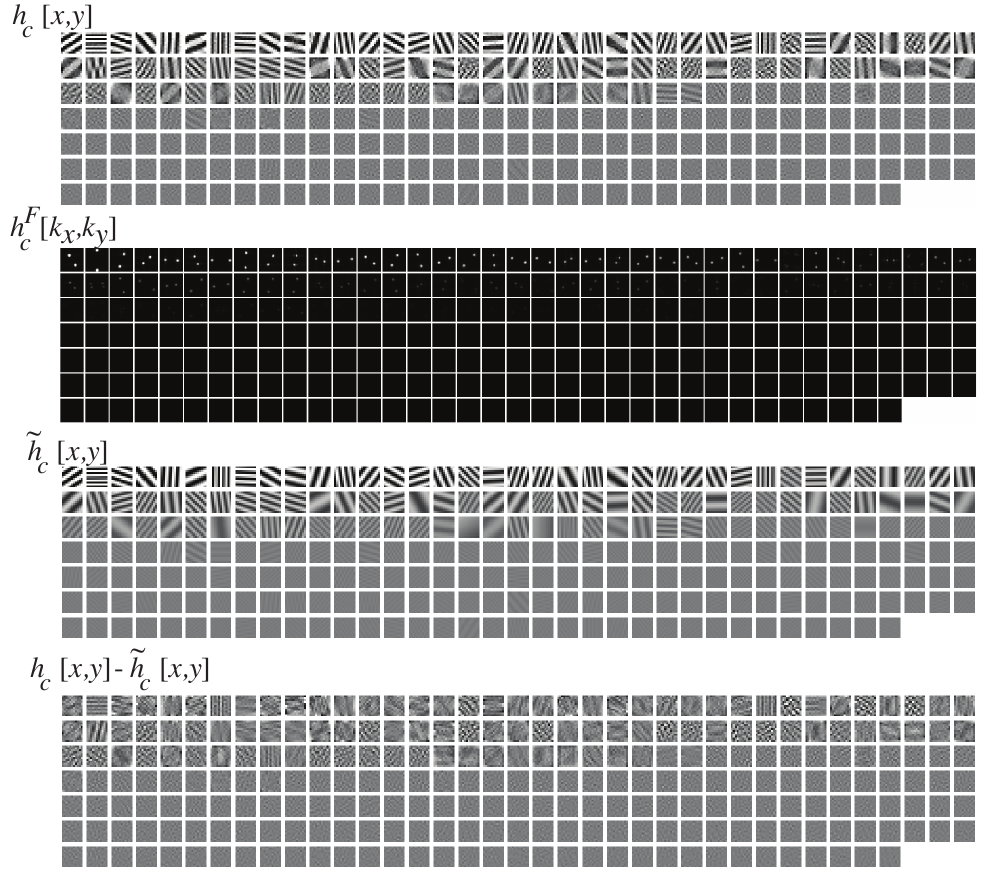}
  }
  \caption[]{Kernels learned when training with ImageNet scaled to 128$\times$128. Images are augmented with crop translations. The network uses ReLU non-linearities, valid convolutions, 256 channels and bias terms in both layers. 
  }
  \label{imagenet-kernels-b}
 \end{figure*}

\begin{figure*}
  \centerline{
  \includegraphics[width=1\columnwidth]{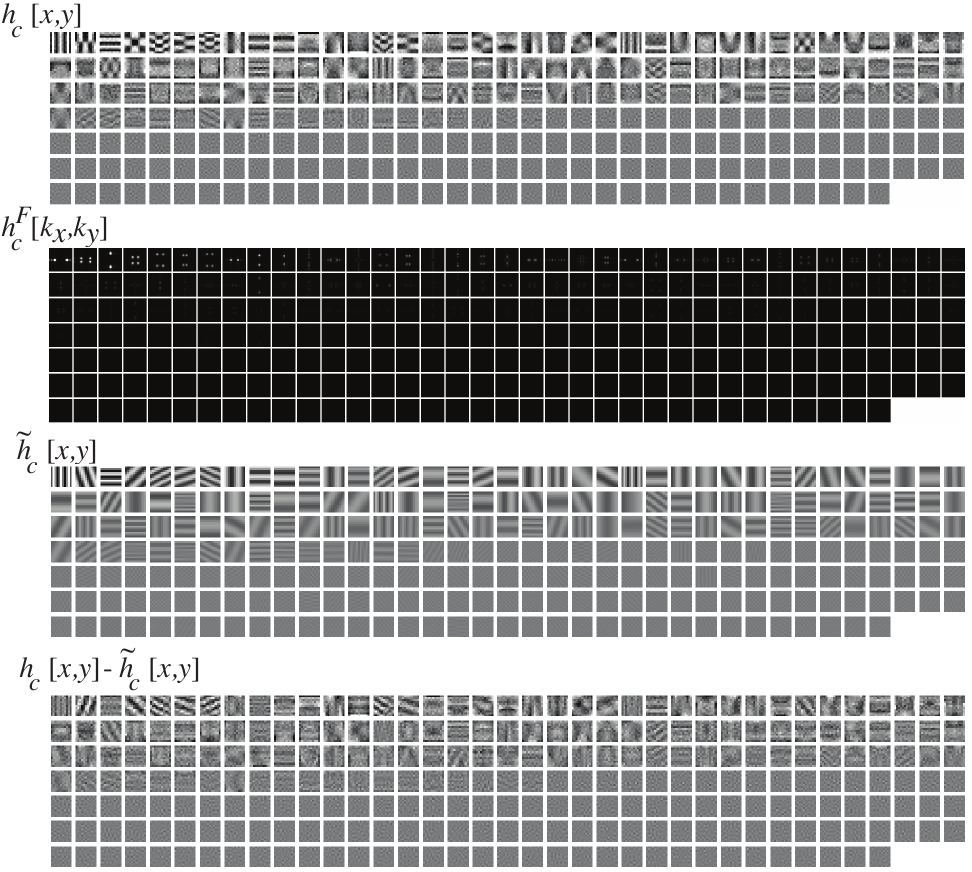}
  }
  \caption[]{Kernels learned when training with CIFAR10. Images are augmented with horizontal flip. The network uses squaring non-linearities, circular convolutions, 256 channels, and no bias terms. 
  }
  \label{flip-kernels}
 \end{figure*}

\begin{figure*}
  \centerline{
  \includegraphics[width=1\columnwidth]{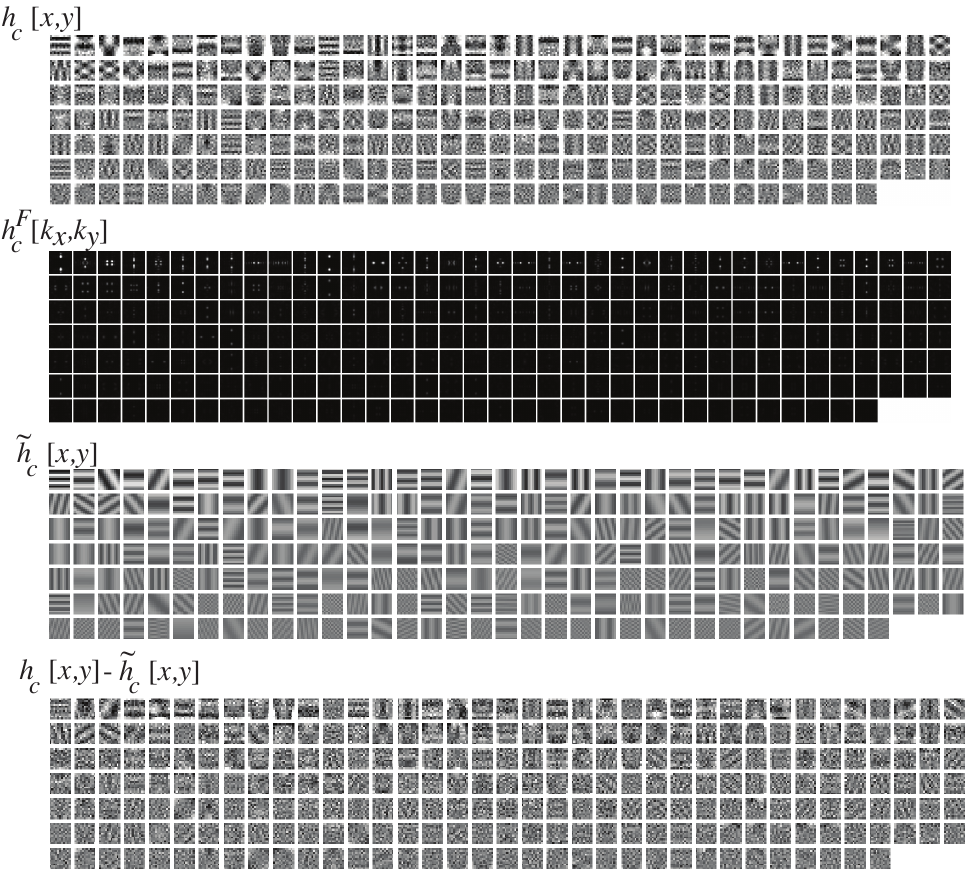}
  }
  \caption[]{Kernels learned when training with CIFAR10. Images are augmented with horizontal flip. The network uses ReLU non-linearities, valid convolutions, 256 channels and bias terms in both layers. 
  }
  \label{flip-kernels-b}
 \end{figure*}

\section{Non-Uniqueness of the Optimal Solution}

As we discussed after the proof of theorem~\ref{cyclic-theorem}, the minimum of the contrastive loss is not unique. For the augmentations in figure~\ref{fig:easy-augs}, measuring {\em any set} of $K$ frequencies and then whitening will yield an optimal representation.  More generally, if $y^*(x)$ is an optimal representation, then $Q y^*(x)$ is also an optimal representation for any orthogonal matrix $Q$.

Empirically, we have found that the dynamics of gradient descent influence which minimum is found by a CNN. In the case of theorem~\ref{cyclic-theorem}, where the optimal representation is to measure a set of $K$ frequencies, the chosen frequencies depend on the number of iterations. As we increase the number of iterations, the network learns higher-frequency sinusoids. Figure~\ref{epochs-figure} shows the filters as a function of iterations when training using the crop translation augmentation, and  figure~\ref{epochs-figure-perf-loss} shows the evolution of the loss and the recognition performance. Note that even though the loss continues to decrease, recognition accuracy peaks at a small number of iterations for this augmentation.

A second source of non-uniqueness comes from the architecture in figure~\ref{fig:architecture}. The exact same representation can be computed using many different settings of the weights. 

Recall that for this architecture,  the output  can be written as $y(x)=W^{eff}\Phi(x)$ where the set of features $\Phi(x)$ is the squared DFT of $x$, $ |x^F[k]|^2$, and by equation~\ref{W-eff-eq}, $W^{eff}$ is given by:
\BE
\label{equation:matrix-product}
W^{eff}= W H
\EE
Here $W$ is a $K \times C$ matrix that represents the projection layer weights and $H$ is a $C \times N$ matrix where each row gives the squared DFT of filter $h_c$. Note that $H$ is by definition non-negative. 

Thus the output is given by $y(x)=W H \Phi(x)$ and there are many different settings of $(W,H)$ that would give the same $W^{eff}$ and the same representation. 
In particular, when the number of channels is large we can compute the optimal representation by freezing the projection layer weights and only optimizing the filters, or by freezing the filters and only optimizing the projection layer weights. We have found that when the variance of the random initialization of the projection layer weights is large, gradient descent will barely change the projection weights and will minimize the loss by changing the filters in the first layer. For such a setting, when the projection matrix is an orthogonal matrix, we can prove that the optimal loss can only be obtained when the filters are sinusoids.

\begin{theorem}
    Consider the architecture of figure~\ref{fig:architecture} in the case where the number of channels $C$ is equal to $K$ the dimension of $y(x)$. Assume the setting of theorem~\ref{theorem:3},  and assume that the contrastive loss $\LGUPA$ is minimized  where the projection layer weights are frozen to a random orthogonal matrix. Then the optimal loss can only be obtained when the first layer filters are sinusoids.
    \label{theorem:sinusoids}
\end{theorem}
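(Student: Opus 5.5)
We work with the factorization $y(x)=W^{eff}\Phi(x)$ with $W^{eff}=WH$ (equation~\ref{equation:matrix-product}). Here $\Phi(x)$ is the vector of squared DFT coefficients, $W$ is the frozen $K\times K$ orthogonal projection, and $H$ is the nonnegative $K\times N$ matrix whose row $c$ is $|h^F_c[k]|^2$. The loss $\LGUPA$ depends on the weights only through $\Tr(W^{eff}B W^{eff\,T})$ and $\Sigma_y=W^{eff}\Sigma W^{eff\,T}$. Both expressions are invariant under left multiplication by an orthogonal matrix, since $\log\det$ and the trace term depend only on the eigenvalues of $\Sigma_y$. Hence $\LGUPA(WH)=\LGUPA(H)$, and freezing $W$ loses nothing. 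The problem therefore reduces to minimizing $\LGUPA$ over nonnegative matrices $H$, viewed as the effective linear weights on $\Phi$, subject to the norm constraint $\Tr(H\Sigma H^T)=1$.

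\textbf{Step 1: what the unconstrained optimum looks like.} From the proof of theorem~\ref{theorem:3}, both $B$ and $\Sigma$ are diagonal in the frequency basis, so the generalized eigenvectors of $(B,\Sigma)$ are the unit vectors $e_k$ with eigenvalues $\lambda_k=B_{kk}/\Sigma_{kk}$. By theorem~\ref{main-theorem}, every optimal effective weight matrix has rows of the form $Q\,\mathrm{diag}(\sqrt{P_k})\,e_k^T$, up to an orthogonal rotation $Q$ (the $U$ in the appendix proof). Here the allocation $\{P_k\}$ is the waterfilling solution. I would then show that this allocation, restricted to its support, is unique, using strict convexity of equation~\ref{loss-p}. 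I would assume the eigenvalues $\lambda_k$ are generic, so that ties occur only between the conjugate pair $k$ and $N-k$, which are identified for real filters. The conclusion of Step 1 is that every optimal $H$ satisfies
\[
H\Sigma H^T=Q\,\mathrm{diag}(P)\,Q^T,
\]
and that the row space of $H$ is spanned by a set of at most $K$ frequency unit vectors.

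\textbf{Step 2: nonnegativity forces $Q$ to be trivial.} This is the key step and the main obstacle. We have $H=Q D E$, where $D$ is diagonal with positive entries and $E$ consists of distinct frequency unit vectors. The task is to show that $Q$ must be a permutation matrix (with signs). Since the rows of $E$ have disjoint supports, entry $(c,k_j)$ of $H$ equals $Q_{cj}D_j$. Nonnegativity of $H$ therefore forces $Q$ to be entrywise nonnegative. An orthogonal matrix with nonnegative entries is a permutation matrix: distinct columns are orthogonal and nonnegative, so they must have disjoint supports, and each column has unit norm. Hence each row of $H$ is supported on a single frequency pair $\{k,N-k\}$. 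That is, $|h^F_c|^2$ is concentrated at one frequency, which means $h_c$ is a sinusoid.

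\textbf{Remaining issues.} Two points need care. First, the case where fewer than $K$ frequencies receive positive power: some rows of $H$ may then be zero, and zero filters are excluded or treated as degenerate. Second, the reduction in Step 1 must be justified as exhaustive: every global minimizer, not just a stationary point of the Lagrangian, must have the generalized-eigenvector form. I would establish this by rewriting the loss in terms of $\Sigma_y$ and using strict convexity of the waterfilling objective.
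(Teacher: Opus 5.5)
Your proposal is correct and follows essentially the same route as the paper. The paper writes $WH = QV$ and multiplies on the left by $W^T$ to get $H=(W^TQ)V$, which is your $H=QDE$ after you absorb the frozen $W$ by orthogonal invariance; both arguments then conclude that a nonnegative orthogonal matrix must be a permutation, so each filter is a sinusoid. Your extra hypotheses and checks (generic eigenvalues, identifying $k$ with $N-k$, zero-power rows, and showing that every minimizer has the eigenvector form) are assumptions the paper uses tacitly when it asserts that each row of $V$ is a positively scaled unit vector.
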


\begin{proof}
Using theorem~\ref{theorem:3} we know that the loss is optimized when $W^{eff}=V$ where the rows of $V$ are generalized eigenvectors of $(B,\Sigma)$. Under the assumptions of theorem~\ref{theorem:3} the matrices $(B,\Sigma)$ are diagonal matrices so  each row of $V$ is a positively scaled unit vector, i.e.  each component of $y(x)$ measures power in an individual frequency followed by a positive scaling. Since the loss is invariant to an orthogonal transformation it can also be optimized by $QV$ where $Q$ is an orthogonal $K \times K$ matrix. Using equation~\ref{equation:matrix-product} gives:
\BE
W^{eff}=W H = Q V
\EE
Multiplying on the left by $W^T$ gives:
\BE
W^T W H = W^T Q V
\EE
and using the fact that $W$ is orthogonal gives:
\BE
H = (W^T Q) V
\EE
Note that the matrix $R=(W^T Q)$ is the product of two orthogonal matrices so it is also orthogonal. The fact that each row of $V$ is a scaled unit vector and the rows are orthogonal means that columns of $H$ will be positively scaled columns of $R$ and since $H$ is by definition non-negative, this means that all elements of $R$ are non-negative. Thus $R$ is an orthogonal matrix whose elements are non-negative and so it must be a permutation matrix. This implies that $H$ (the matrix of filter DFTs) is a row permutation of $V$ (a matrix that measures individual frequencies) so each filter must be a sinusoid.
\end{proof}

The difference between theorem~\ref{theorem:3} and theorem~\ref{theorem:sinusoids} is in the uniqueness of the solution. Theorem~\ref{theorem:3} shows that the global minimum can be obtained when the filters in the first layer are sinusoids but the solution is not unique. On the other hand, theorem~\ref{theorem:sinusoids} shows that when the weights in the last layer are frozen and equal to an orthogonal matrix, then the minimum can only be obtained when the filters are sinusoids.  

\label{iterations-sec}

\begin{figure*}
\centerline{
\begin{tabular}{c}
\includegraphics[width=\textwidth]{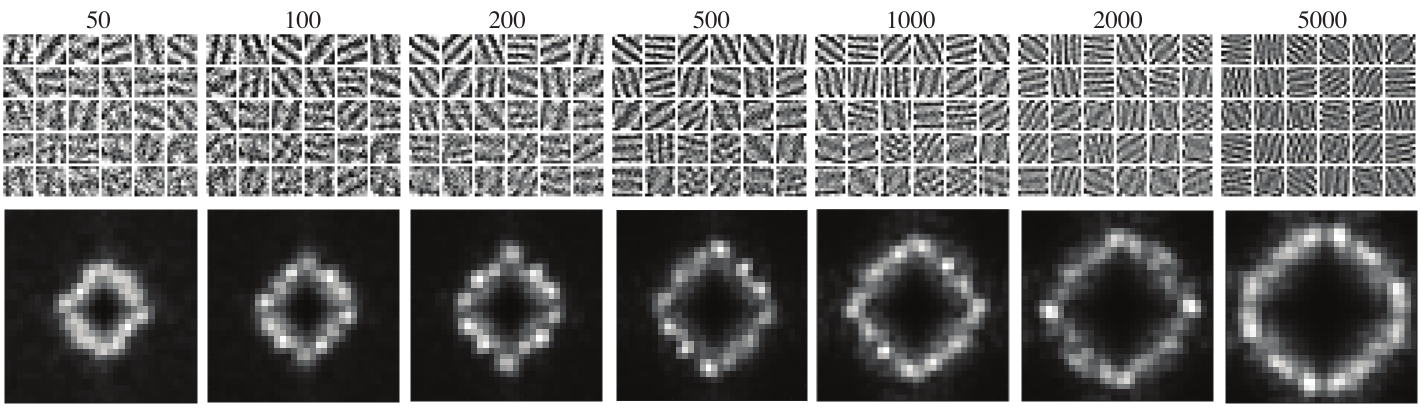}
\end{tabular}}
\caption[]{Evolution of the learned kernels over training epochs when training with CIFAR10. Images are augmented with crop translation. The network uses square non-linearities, valid convolutions, 256 channels and bias terms in both layers. The experimental settings are the same as in Fig.~\ref{augmentations-figure}, which used 1000 epochs. Fig.~\ref{epochs-figure-perf-loss} shows the evolution of the losses and performance. Note that even when the loss continues to decrease the performance has its maximum around epoch 100 and then decreases.}
\label{epochs-figure}
\end{figure*}

\begin{figure*}
\centerline{
\begin{tabular}{c}
\includegraphics[width=\textwidth]{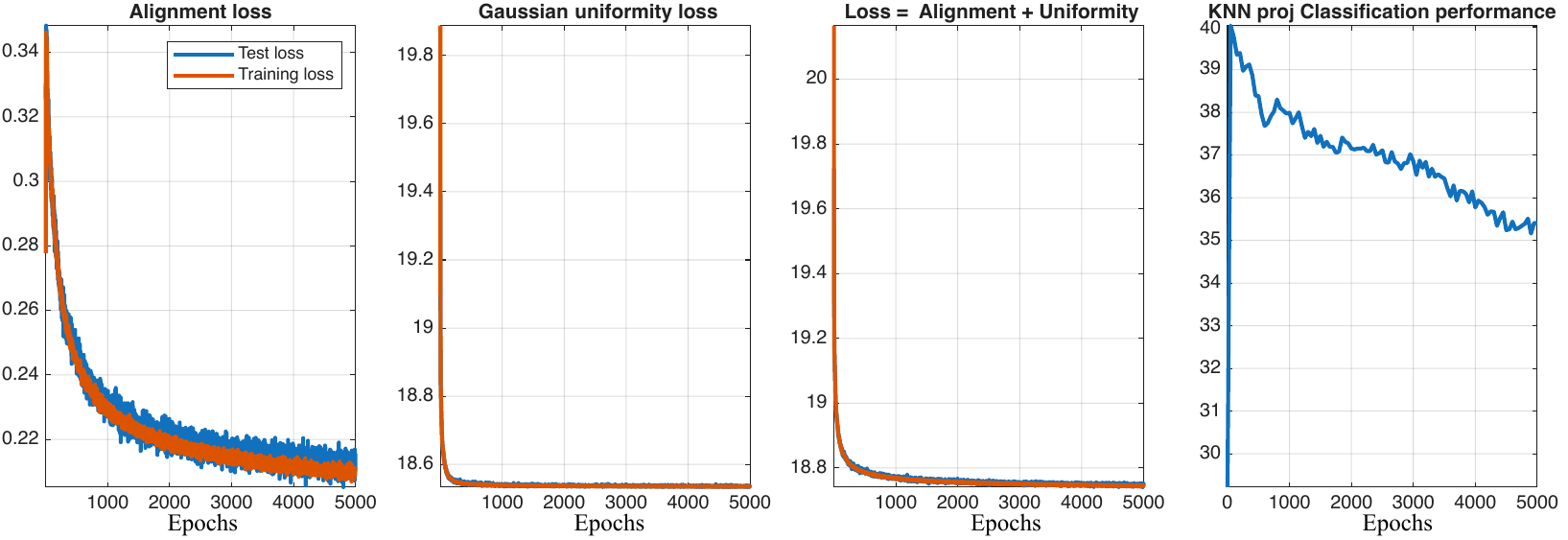}
\end{tabular}}
\caption[]{Evolution of the GUPA loss and performance over training epochs when training with CIFAR10. Images are augmented with crop translation (same as fig.~\ref{epochs-figure}). The network uses square non-linearities, valid convolutions, 256 channels and bias terms in both layers.}
\label{epochs-figure-perf-loss}
\end{figure*}

 \section{Extensions to Color Images}
\label{color-sec}
The theoretical results in the paper are for single channel signals (i.e. gray level images). The results can be straightforwardly extended to multichannel signals (i.e. color images) in which each channel is stationary and independent of the other channels. For such signals, the DFT coefficients of each channel are asymptotically Gaussian and independent so that CNNs trained with contrastive loss will learn to measure sinusoids in each channel.

For natural images, the three standard color channels (R,G,B) are {\em not} independent but it is known that one can apply a $3 \times 3$ linear transformation that makes the channels uncorrelated. Specifically these uncorrelated channels are "one corresponding to simple changes in radiance and two that are reminiscent of the blue–yellow and red–green chromatic-opponent mechanisms found in the primate visual system."~\cite{Ruderman:98}. Once we decorrelate the color channels, the DFT  coefficients of each color channel will be uncorrelated for any stationary signal, and asymptotically Gaussian (by theorem~\ref{gaussian-dft-theorem}). This implies that DFT coefficients of these decorrelated channels will also be asymptotically pairwise independent.  
Thus the theory  predicts that if we train CNNs with contrastive losses on color images, they should learn sinusoids in one of these three uncorrelated channels. Figure~\ref{color-figure-paper} shows that this is indeed the case for ImageNet and figure~\ref{fig:color-kernels} shows the same for CIFAR10. 

\begin{figure*}
  \centerline{
  \includegraphics[width=1\columnwidth]{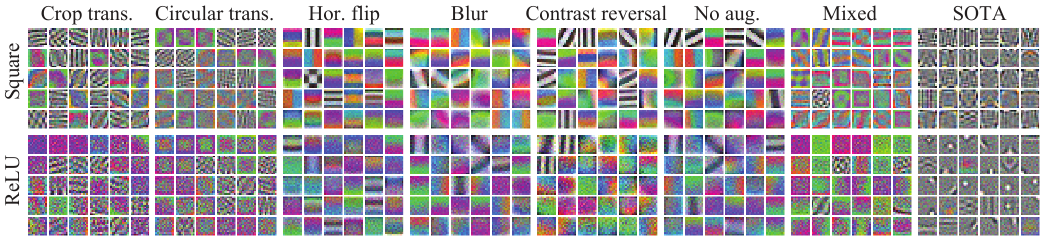}
  }
  \caption[]{Kernels learned when training with  color images on CIFAR10. As predicted by the theory, for all the simple augmentations the kernels are sinusoids in the uncorrelated channels which are approximately: (gray, red-green, blue-yellow).  
  }
  \label{fig:color-kernels}
 \end{figure*}

\end{document}